
\documentclass[journal]{IEEEtran}
\ifCLASSINFOpdf
\else
\fi

\usepackage{amsmath,mathtools,amsfonts,amsthm} 
\usepackage{amssymb}
\usepackage{algorithm}
\usepackage[noend]{algpseudocode}
\usepackage{color}
\usepackage{caption}
\usepackage{subcaption}
\usepackage{pgfplots}
\usepackage{tikz}
\usetikzlibrary{calc,positioning,shapes,shadows,arrows,fit}

\newtheorem{assumption}{Assumption}
\newtheorem{problem}{Problem}

\newtheorem{remark}{Remark}

\newtheorem{theorem}{Theorem}

\hyphenation{op-tical net-works semi-conduc-tor}

\begin{document}
%
\title{KDF: Kinodynamic Motion Planning via Geometric Sampling-based Algorithms and Funnel Control}
%
%
%

\author{Christos~K.~Verginis,~\IEEEmembership{Member,~IEEE,}
        Dimos~V.~Dimarogonas,~\IEEEmembership{Senior Member,~IEEE,}
        and~Lydia~E.~Kavraki,~\IEEEmembership{Fellow,~IEEE}
\thanks{C. K. Verginis is with the Oden Institute for Computational  Engineering and Sciences at the University of Texas at Austin, TX, USA. e-mail: cverginis@utexas.edu. }
\thanks{D. V. Dimarogonas is with the School of Electrical and Engineering and Computer Science, KTH Royal Institute of Technology, Stockholm, Sweden, e-mail:  dimos@kth.se }
\thanks{L. E. Kavraki is with the Department of Computer Science at Rice University, Houston, TX, USA, e-mail: kavraki@rice.edu}
\thanks{This work was supported by the H2020 ERC Starting Grant BUCOPHSYS, the European Union’s Horizon 2020 Research and Innovation Programme under the GA No. 731869 (Co4Robots), the Swedish Research Council (VR), the Knut och Alice Wallenberg Foundation (KAW), the Swedish Foundation for Strategic Research (SSF), and the {National Science Foundation project NSF 2008720 (LEK)}.}
\thanks{Manuscript received April 19, 2005; revised August 26, 2015.}}

%
%

\markboth{Journal of \LaTeX\ Class Files,~Vol.~14, No.~8, August~2015}%
{Shell \MakeLowercase{\textit{et al.}}: Bare Demo of IEEEtran.cls for IEEE Journals}
%



\maketitle

\begin{abstract}
	We integrate sampling-based planning techniques with funnel-based feedback control to develop KDF,   a new framework for solving the kinodynamic motion-planning problem via funnel control. The considered systems evolve subject to complex, nonlinear, and uncertain dynamics (aka differential constraints). 
	Firstly, we use a \textit{geometric} planner to obtain a high-level safe path in a user-defined extended free space. Secondly, we develop a low-level funnel control algorithm that guarantees safe tracking of the path by the system. Neither the planner nor the control algorithm use information on the underlying dynamics of the system, which makes the proposed scheme easily distributable to a large variety of different systems and scenarios.  
	Intuitively, the funnel control module is able to implicitly accommodate  the dynamics of the system, allowing hence the deployment of purely geometrical motion planners. Extensive computer simulations and experimental results with a $6$-DOF robotic arm validate the proposed approach. 

\end{abstract}

\begin{IEEEkeywords}
kinodynamic motion planning, uncertain dynamics, funnel control
\end{IEEEkeywords}

\IEEEpeerreviewmaketitle

\section{Introduction}
%
%
%
%
\IEEEPARstart{M}{otion} planning of autonomous systems is one of the most fundamental problems in robotics, with numerous applications such as exploration, autonomous driving, robotic manipulation, autonomous warehouses, and multi-robot coordination \cite{lavalle2006planning,choset2005principles}. It has been extensively studied in the related literature; works have been continuously developed for the last three decades, 
exploring plenty of variations, including feedback control, discrete planning, uncertain environments, and multi-agent systems. {One important and active area of research consists of \textit{kinodynamic} motion planning, i.e., when the planning algorithm takes into account the underlying system dynamics, also known as differential constraints \cite{lavalle2006planning}. }

{In this paper we develop KDF, an algorithmic framework for the kinodynamic motion-planning problem by integrating sampling-based algorithms with intelligent feedback control. We consider 
systems that evolve subject to high-dimensional dynamics, which are highly nonlinear and uncertain. 
{The proposed framework is the integration of the following three modules. The first module is the KDF sampling-based motion planners (KDF-MP), which is a family of geometric planners that produce a path in an {``extended"} free space. By ``extended" we mean that the obtained path has some clearance with respect to the workspace obstacles. The second module is the smoothening of the derived path and its endowment with time constraints in order to produce a smooth time-varying trajectory. The third module is a funnel-based, feedback-control scheme that achieves safe tracking of this trajectory within the clearance of the extended free space. 
Neither of the aforementioned modules uses any information on the dynamics of the system. 
}
The proposed framework guarantees that the system will follow safely the derived path, free from collisions. Loosely speaking, we augment geometric motion planning algorithms with extended free-space capabilities and intelligent feedback control to provide a new solution to the kinodynamic motion-planning problem. The incorporation of the control scheme relieves the sampling-based motion planner from the system dynamics and their uncertainties.}

%
%
%

Feedback control is a popular methodology to tackle motion-planning problems, since it simultaneously solves the planning and control problems, offering a closed-form policy for the control input of the system. Artificial potential fields constitute the main tool of closed-form feedback control methods. Early works develop the so-called ``navigation functions" \cite{rimon1992exact}, appropriately constructed terms whose gradient constitutes a vector field that takes the system safely to the goal configuration from almost all initial conditions (except for a set of initial conditions that has measure zero). Navigation functions can accommodate sphere worlds (spherical obstacles), as well as star-shaped obstacles via appropriate diffeomorphic transformations \cite{rimon1991construction}. Several works build on the notion of navigation functions, and propose harmonic-based potential fields as well as point-world transformations \cite{loizou2017navigation,vlantis2018robot,tanner2003nonholonomic}. Potential field-based feedback control schemes have also accommodated multi-robot systems as well as higher order dynamics and model uncertainties \cite{loizou2014multi,dimarogonas2006feedback,tanner2005towards,panagou2016distributed,verginis2017decentralized,verginis2021adaptive}. Optimization-based feedback control techniques, such as Model Predictive Control (MPC) and barrier functions have  also been employed to tackle the motion planning problem \cite{filotheou2020robust,verginis2018communication,wang2017safety,panagou2015distributed,verginisLCSS}.

Although feedback control is a promising and convenient tool for motion planning problems, it is usually restricted to simple robot shapes, such as spheres or ellipsoids. For more complex structures, such as high-dimensional robotic manipulators, the aforementioned strategies can guarantee safety but suffer from local minima configurations, 
{or high computation times that render them impractical.}
Randomized planning has been introduced to tackle such scenarios; \cite{kavraki1996probabilistic} and \cite{hsu1997path,lavalle1998rapidly} develop the notions of probabilistic roadmaps (PRM) and {trees (Expansive Space Trees - EST, Rapidly Exploring Random Trees - RRT),} constituting efficient and probabilistically complete geometric solutions to high-dimensional motion-planning problems. These methodologies build a discrete graph that spans the free space by incremental sampling, providing thus a safe path to be followed by the robot. 
Variants of sampling-based algorithms have been also proposed in order to improve their attributes; RRT-connect \cite{kuffner2000rrt} computes two trees in the free space (from the initial and goal configuration, respectively), speeding up the convergence to the goal, and 
asymptotically optimal algorithms, such as RRT*, PRM*, provide a path whose length becomes shorter (more optimal) as the number of samples increases \cite{karaman2010incremental,karaman2011sampling,salzman2016asymptotically}. 
The initial versions of the aforementioned algorithms are geometrical, without accommodating the dynamics (differential constraints) of the system. To that end, {tree-based algorithms such as RRTs and ESTs} have been extended to kinodynamic planning \cite{hsu1997path,lavalle2001randomized,csucan2009kinodynamic,vidal2019online}. Kinodynamic planning algorithms simulate forward the dynamics of the system by randomly sampling control inputs, in order to find a feasible path in the state space. Moreover, similarly to PRM, \cite{tedrake2010lqr} and \cite{reist2016feedback} introduced the framework of LQR-trees, i.e., trees of trajectories that probabilistically cover the free space. By linearizing the system dynamics and using optimal control techniques, every point of the free space is assigned a funnel corresponding to its region of attraction with respect to the goal configuration. Similar ideas are used in \cite{wu2020r3t}, where dynamics linearization and reachability sets are employed to develop an optimal kinodynamic algorithm. Sampling-based algorithms have been also integrated with receding horizon optimization techniques \cite{tanner2010randomized} whereas \cite{yershov2016asymptotically} develops a Hamilton-Jacobi-Bellman approach. {In this work we leverage geometric sampling-based motion planning-techniques and feedback control; we integrate the two, efficiently combining and exploiting their benefits and avoiding thus high computation times and undesired local minima configurations.}

Another important disadvantage of the {majority of the} related works in motion planning is their strong dependence on the considered model of the system dynamics; {the respective algorithms use partial or full information on the underlying dynamic models.} Optimization-based algorithms usually employ dynamics linearization or simulate forward the dynamical model to obtain an optimal control input. The latter is similar to kinodynamic sampling-based motion planning algorithms, which simulate forward the model using random inputs to obtain feasible samples in the free space. The accurate identification of the system dynamics of real robots is a tedious procedure, due to the high uncertainty in the several components (dynamic parameters, friction terms) and unknown exogenous disturbances. Hence, the considered dynamic models used in the aforementioned algorithms do not match sufficiently enough the dynamics of the actual system. As a result, the actual trajectories of the robotic system might deviate from the predicted/planned ones, jeopardizing thus safety and degrading performance. Similar to LQR-trees, \cite{majumdar2017funnel} proposes an algorithm that builds trees of
funnels based on the (known) bounds of model disturbances, restricted however
to polynomial robot dynamics. Planning under uncertainty has been also considered in a stochastic framework, where bounds on the collision probability are
derived \cite{du2011probabilistic,pairet2018uncertainty}. Stochastic uncertainties
are also taken into account via belief trees, where the disturbances and the states
are modeled as Gaussian distributions \cite{bry2011rapidly,agha2014firm}. These approaches, however, usually deal with linearized dynamics, and/or propagate the uncertainties on the planning horizon, constraining thus
excessively the free space. {The framework developed in this paper does not use any linearization or uncertainty propagation techniques. In fact, neither the planning nor the feedback-control module use any information on the underlying system dynamics or their bounds, providing thus robustness to model uncertainties and unknown external disturbances, and applicability to a large variety of different systems and scenarios. More specifically, the proposed framework exhibits the following important characteristics:}
\begin{enumerate}
	\item The (unknown) dynamics of the system are not simulated forward in time and are hence decoupled from the motion planner. This results in the latter being purely geometrical, depending on the geometry of the configuration space and user-defined funnel bounds that are set a priori and define the extended free space.
	\item Even though $k$th-order dynamics are considered, the motion planner searches for a path only in the configuration space, since the dynamics are appropriately compensated by the designed feedback control protocol.
	\item We do not resort to linearization of the dynamics and computation of basins
	of attraction around the output trajectories, since the designed feedback
	control protocol applies directly to the nonlinear model.
\end{enumerate}

{Note that, since the sampling-based motion planner involved in our framework is purely geometric, it is expected to yield lower complexity than standard kinodynamic planning algorithms. Such algorithms sample points {in a space of larger dimension, including random states and control inputs}, and simulate forward the underlying dynamics; hence they usually require more computational resources than geometric planners. It should be noted that similar ideas were pursued in  \cite{le2012sequential,luders2010bounds}, without however considering the complex unknown systems adopted in this work.}
We validate the proposed methodology an Unmanned Aerial Vehicle (UAV) and a $6$DOF UR5 manipulator in V-REP environment \cite{Vrep}, as well as a $6$DOF Hebi-Robotics manipulator. 
This paper is an extension of our recent work \cite{verginis21sampling} along the following directions. Firstly, the bounds that define the funnel where the system evolves in, which also define the extended free space in the developed motion planner, are \textit{a priori user-defined}. This is in contrast to \cite{verginis21sampling}, where the bounds depended on the system dynamics and gain tuning was needed to shrink the funnel and produce less conservative trajectories. 
Finally, we use extensive hardware experiments to validate the efficiency of the proposed framework.

\section{Problem Formulation} \label{sec:pf}

Consider a robotic system characterized by the {configuration} vector $q_1\in \mathbb{T} \times \subset \mathbb{R}^n$, $n\in\mathbb{N}$. Usual robotic structures (e.g., robotic manipulators) might consist of translational and rotational joints, which we define here as $q^\mathfrak{t} = [q^\mathfrak{t}_1,\dots,q^\mathfrak{t}_{n_{tr}}]^\top \in \mathbb{R}^{n_{tr}}$ and $q^\mathfrak{r} = [q^\mathfrak{r}_1,\dots,q^\mathfrak{r}_{n_r}]^\top \in [0,2\pi)^{n_r}$, respectively, with $n_{tr} + n_r = n$, and hence $\mathbb{T} \coloneqq \mathcal{W}_{tr} \times [0,2\pi)^{n_r}$, where $\mathcal{W}_{tr}$ is a closed subset of $\mathbb{R}^{n_{tr}}$. Without loss of generality, we assume that $q_1 = [(q^\mathfrak{t})^\top,(q^\mathfrak{r})^\top]^\top$.

We consider $k$th-order systems, with $k \geq 2$, of the form
\begin{subequations} \label{eq:dynamics}
	\begin{align} 
	\dot{q}_i &= f_i(\bar{q}_i,t) + g_i(\bar{q}_i,t)q_{i+1}, \ \ \forall i\in\{1,\dots,k-1\}\\	
	\dot{q}_k &= f_k(\bar{q}_k,t) + g_k(\bar{q}_k,t)u,
	\end{align}
	\end{subequations}
	where $\bar{q}_i \coloneqq [q_1^\top,\dots,q_i^\top]^\top \in \mathbb{T}\times\mathbb{R}^{n(i-1)}$, for all $i\in\{1,\dots,k\}$, and $u \in \mathbb{R}^n$ is the control input of the system. Note that the $k$th-order model \eqref{eq:dynamics} generalizes the simpler $2$nd-order Lagrangian dynamics, which is commonly used in the related literature.
	
	The vector fields $f_i$, $g_i$, which represent various terms in robotic systems (inertia, Coriolis, friction, gravity, centrifugal) 
	are considered to be completely unknown to the designer/planner, for all $i\in\{1,\dots,k\}$.
	The only assumptions we make for the system are mild continuity and controllability conditions, as follows:
	\begin{assumption} \label{ass:dynamics}
	The maps $\bar{q}_i \mapsto f_i(\bar{q}_i,t): \mathbb{R}^{n(i-1)} \to \mathbb{R}^n$, $\bar{q}_i \mapsto g_i(\bar{q}_i,t): \mathbb{R}^{n(i-1)} \to \mathbb{R}^{n\times n}$  are continuously differentiable for each fixed $t\in\mathbb{R}_{\geq 0}$ and the maps $t \mapsto f_i(\bar{q}_i,t) : \mathbb{R}_{\geq 0} \to \mathbb{R}^n$, $t \mapsto g_i(\bar{q}_i,t) : \mathbb{R}_{\geq 0} \to \mathbb{R}^{n\times n}$ are piecewise continuous and uniformly bounded for each fixed $\bar{q}_i \in \mathbb{R}^{n(i-1)}$, for all $i\in\{1,\dots,k\}$, by \textit{unknown} bounds. 
	\end{assumption}
	\begin{assumption} \label{ass:g pd}
		It holds that 
		\begin{equation*}
		\lambda_{\min}\bigg(g_i(\bar{q}_i,t) + g_i(\bar{q}_i,t)^\top\bigg) \geq \underline{\lambda} > 0,
		\end{equation*}
		for a positive constant $\underline{\lambda}$, for all $\bar{q}_i \in \mathbb{R}^{n(i-1)}$, $t \geq 0$, $i\in\{1,\dots,k\}$, where $\lambda_{\min}(\cdot)$ is the minimum eigenvalue of a matrix.
	\end{assumption}
	{Assumption \ref{ass:dynamics} intuitively states that the terms $f_i(\cdot)$, $g_i(\cdot)$ are sufficiently smooth in the state $\bar{q}_i$ and bounded in time $t$. 
	The smoothness in $\bar{q}_i$ is satisfied by standard terms that appear in the dynamics of robotic systems (inertia, Coriolis, gravity); friction terms might pose an exception, since they are usually modeled by \textit{discontinuous} functions of the state \cite{de1995new}. Although smooth friction approximations can be employed \cite{makkar2005new}, the proposed control design can be adapted to account for discontinuous dynamics (as, e.g., in \cite{verginis2020asymptotic}), we consider smooth terms for ease of exposition. Moreover, the incorporation of time dependence in $f_i(\cdot)$, $g_i(\cdot)$ reflects a time-varying and bounded external disturbance (e.g., wind or adversarial perturbations). } 
	
	Assumption \ref{ass:g pd} is a sufficiently controllability condition for \eqref{eq:dynamics}; intuitively, it states that the input matrices $g_i$ do not change the direction imposed to the system by $q_{i+1}$ when the latter are viewed as inputs (with $q_{k+1} = u$). Note that standard holonomic Lagrangian systems satisfy this condition. {Examples include robotic manipulators, omnidirectional mobile robots, and fully actuated aerial vehicles. Systems not covered by \eqref{eq:dynamics} consist of underactuated or non-holonomic robots, such as unicycles, underactuated aerial or underwater vehicles. Each of these systems requires special attention and cannot be framed into the general framework presented in this work. Funnel-control works for such systems can be found in \cite{zambelli2014posture,verginis2015decentralized,bechlioulis2016trajectory,berger2020tracking}. }
		

We consider that the robot operates in a workspace $\mathcal{W} \subset \mathbb{R}^3$ filled with obstacles occupying a closed set $\mathcal{O} \subset \mathbb{R}^3$. We denote the set of points that consist the volume of the robot at configuration $q_1$ as $\mathcal{A}(q_1) \subset \mathbb{R}^3$. The collision-free space is defined as the open set $\mathcal{A}_\text{free} \coloneqq \{ q_1\in\mathbb{T} : \mathcal{A}(q_1) \cap \mathcal{O} = \emptyset \}$. Our goal is to achieve safe navigation of the robot to a predefined goal region $Q_g \subset \mathcal{A}_\text{free}$ from an initial configuration $q_1(0) \in \mathcal{A}_\text{free}$ via a path ${\boldsymbol{q}_\text{p}:[0,\sigma] \to \mathcal{A}_\text{free}}$ satisfying $\boldsymbol{q}_\text{p}(0) = q_1(0)$ and $\boldsymbol{q}_\text{p}(\sigma) \in Q_g$, for some positive $\sigma$.

The problem we consider is the following:

\begin{problem} \label{prob:1}	
	Given $q(0))\in \mathcal{A}_{\text{free}}$ and $Q_g \subset \mathcal{A}_\text{free}$, respectively, design a control trajectory $u:[0,t_f] \to \mathbb{R}^{n}$, for some {finite} $t_f > 0$, such that the solution $q^\ast(t)$ of \eqref{eq:dynamics} satisfies $q^\ast(t) \in \mathcal{A}_\text{free}$, for all $t\in[0,t_f]$, and $q^\ast(t_f) \in Q_g$.
\end{problem}

The feasibility of Problem \ref{prob:1} is established in the following assumption.

\begin{assumption} \label{ass:path ass}
	There exists a (at least twice differentiable) path $\boldsymbol{q}_\textup{p}:[0,\sigma]\to \mathcal{A}_\textup{free}$ such that $\boldsymbol{q}_\textup{p}(0) = q(0)$ and $\boldsymbol{q}_\textup{p}(\sigma) \in Q_g$.
\end{assumption}

\section{Main Results}

We present here the proposed solution for Problem \ref{prob:1}. Our methodology follows a two-layer approach, consisting of a robust trajectory-tracking control design and a higher-level sampling-based motion planner.
Firstly, we design an adaptive control protocol that compensates for the uncertain dynamical parameters of the {robot} and forces the system to evolve in a funnel around a desired trajectory, whose size can be a priori chosen by the user/designer, and is completely independent from the system dynamics \eqref{eq:dynamics}. We stress that this constitutes the main difference from our previous work \cite{verginis2021adaptive}, where the derived funnel depends on the bounds of the various dynamic terms and the external disturbances. 
Secondly, we develop a geometric sampling-based motion planner that uses this funnel to find a collision free trajectory from the initial to the goal configuration. Intuitively, the robust control design {helps} the motion planner procedure, which does not have to take into account the complete dynamics \eqref{eq:dynamics}.
{Section \ref{sec:ppc control} gives some preliminary background on funnel control and provides the control design, while Section \ref{sec:motion planner} provides the motion planner. }


\subsection{Control Design} \label{sec:ppc control}

In order to tackle the unknown dynamics of \eqref{eq:dynamics} we use the methodology of 
funnel control \cite{bechlioulis2008robust,ilchmann2007tracking}. 
Funnel control aims at achieving containment of a scalar tracking error $e(t)$ in a user-prespecified time-varying set, defined by certain functions of time, as  
\begin{equation}
	-\rho(t) < e(t) < \rho(t),~\forall t\geq0, \label{eq:ppc}
\end{equation}
where $\rho(t)$ denotes a smooth and bounded function of time that satisfies $\rho(t_0) >|e(t_0)|$ and $\rho(t) > 0$, for all $t\geq t_0$, called funnel function (or performance function in \cite{bechlioulis2008robust}). Fig.~\ref{fig:funnel control} illustrates the aforementioned statements. 
Since the funnel set is user defined a priori, it can be set to converge to an arbitrarily small residual set with speed no less than a prespecified value, e.g., by using the funnel function $\rho(t) \coloneqq (\rho_0-\rho_\infty)e^{-\lambda t}+\rho_\infty$. The parameter $\rho_\infty \coloneqq \lim_{t\rightarrow\infty} \rho(t) > 0$ represents the maximum allowable value of the steady state error and can be set to a value reflecting the resolution of the measurement device, so that the error $e(t)$ practically converges to zero. 
Moreover, the constant $\lambda$ determines the decreasing rate of $\rho(t)$ and thus is used to set a lower bound on the convergence rate of $e(t)$. 
Therefore, the appropriate selection of the function $\rho(t)$ imposes certain transient and steady state performance characteristics on the tracking error $e(t)$. 
{Intuitively, larger $\lambda$ and small $\rho_\infty$ improve the performance of the system, yielding fast convergence close to zero. Although these constants can be arbitratrily set by a user, their values affect significantly the stress imposed on the system, and hence they should be chosen according to the system's capabilities.}
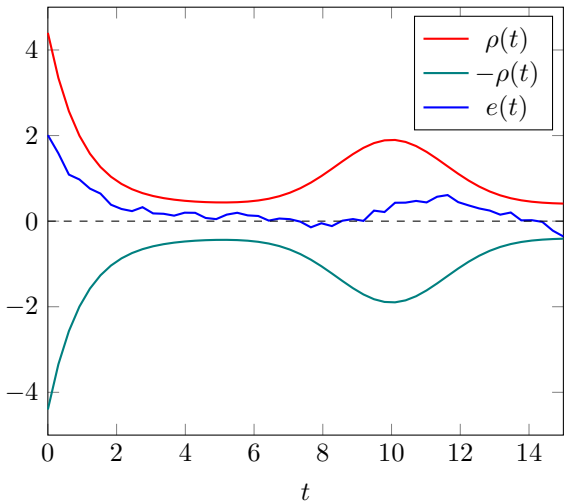
\begin{figure}
	\begin{tikzpicture}
		\begin{axis}[xmax=15,xmin=0,ymin=-5,ymax= 5,xlabel={$t$},
			samples=50]
			\addplot[domain=0:15, red, thick] ({\x}, {4*exp(-\x)+0.4 + 1.5*exp(-(\x-10)^2/5) }) ;
			\addplot[domain=0:15, teal, thick] ({\x}, {-4*exp(-\x)-0.4 - 1.5*exp(-(\x-10)^2/5) }) ;
			\addplot[domain=0:15, blue, thick]  ({\x}, {2*exp(-\x) - 0.015*\x^2 + 0.1*x +  .1*rand + 1.5*exp(-(\x-15)^2/5) + 1.25*exp(-(\x-11.5)^2/5)}      );
			\addplot[domain=0:15, black, dashed] ({\x},{0});
			\legend{$\rho(t)$,$-\rho(t)$,$e(t)$}
		\end{axis}
	\end{tikzpicture}
	\caption{Illustration of funnel control, where the error $e(t)$ is confined in the prescribed funnel defined by the function $\rho(t)$. \label{fig:funnel control}}
\end{figure}

The key point in funnel control is a transformation of the tracking error $e(t)$ that modulates it with respect to the corresponding funnel specifications, encapsulated in the function $\rho(t)$. This is achieved by converting the constrained problem to an unconstrained one via a transformation of the form $\mathsf{T}\left(\frac{e(t)}{\rho(t)}\right)$, where $\mathsf{T} : (-1, 1)\rightarrow(-\infty,\infty)$ is a strictly increasing, odd and bijective mapping. 
Then the funnel specifications are met by simply preserving the boundedness of $\mathsf{T}\left(\frac{e(t)}{\rho(t)}\right)$. 
Most funnel control schemes do not employ any information on the system dynamics, using a high-gain approach. That is, the control action approaches infinity as the state approaches the funnel boundary, ``pushing" thus the system to remain inside the funnel. In this work we extend the funnel control design to apply for the system \eqref{eq:dynamics} and the manifold $\mathbb{T}$, and we show how such a design can be used in the motion planning of the uncertain robotic system \eqref{eq:dynamics}. 

Let $q_\text{d}\coloneqq [(q^\mathfrak{t}_{\text{d}})^\top, (q^\mathfrak{r}_\text{d})^\top]^\top \coloneqq [q^\mathfrak{t}_{\text{d}_1},\dots,q^\mathfrak{t}_{\text{d}_{n_{tr}}},q^\mathfrak{r}_{\text{d}_1},\dots,q^\mathfrak{r}_{\text{d}_{n_r}}]^\top$ $:[t_0,t_0+t_f] \to \mathbb{T}$ be a smooth (at least $k$-times continuously differentiable) reference trajectory, with $q^\mathfrak{t}_\text{d}\in\mathbb{R}^{n_{tr}}$ and $q^\mathfrak{r}_\text{d}\in[0,2\pi)^{n_r}$ {being} its translational and rotational parts, respectively. {Such a trajectory will be derived by smoothening and adding time constraints to the output path of the sampling-based motion-planning algorithm that will be developed in the next section. Note that the smoothness assumption on $q_\textup{d}$ is not restrictive, since the smoothening of geometric paths eases the resulting robot motion and is hence common practice in real applications. 
Nevertheless, we stress that the proposed control algorithm can be applied separately on the raw path segments produced by the local collision-checking planner of the motion-planning algorithm, without requiring any smoothening. This might, however, induce discontinuities on the control algorithm, which might be problematic for robot actuators.}

 We wish to design the control input $u$ of \eqref{eq:dynamics} such that $q(t)$ converges to $q_\text{d}(t)$, despite the unknown terms $f_i$, $g_i$.
We start by defining the appropriate error metric between $q_1$ and $q_\text{d}$, which represents their distance. Regarding the translational part, we define the standard Euclidean error $e^\mathfrak{t} \coloneqq q^\mathfrak{t} - q^\mathfrak{t}_\text{d}$. For the rotation part, however, the same error $e^\mathfrak{r} \coloneqq q^\mathfrak{r} - q^\mathfrak{r}_{\text{d}}$ does not represent the minimum distance metric, since $q^\mathfrak{r}$ evolves on the $n_r$-dimensional sphere, and its use might cause conservative or infeasible results in the planning layer. Hence, unlike standard control schemes, which drive the Euclidean difference $e^\mathfrak{r}$ to zero (e.g., \cite{tomei1999robust,slotine1987adaptive}), we use the chordal metric $$d_C(x) \coloneqq 1 - \cos(x) \in [0,2], \forall x\in[0,2\pi),$$ extended for vector arguments $x = [x_1,\dots,x_\mathsf{n}] \in [0,2\pi)^\mathsf{n}$ to
\begin{equation} \label{eq:chordal}
	\bar{d}_C(x) \coloneqq \sum_{\ell\in\{1,\dots,\mathsf{n}\}}d_C(x_j).
\end{equation}  

Note that the chordal metric induces a limitation with respect to tracking on the unit sphere. Consider $$\eta^{\mathfrak{r}}_\ell \coloneqq d_C(e^{\mathfrak{r}}_\ell) = 1 - \cos(e^{\mathfrak{r}}_\ell),$$ where $e^{\mathfrak{r}}_\ell \coloneqq q^\mathfrak{r}_\ell - q^\mathfrak{r}_{\text{d}_\ell}$ is the $\ell$th element of $e^\mathfrak{r}$, $\ell\in\{1,\dots,n_r\}$. Differentiation yields $$\dot{d}_C(e^{\mathfrak{r}}_\ell) =\sin(e^\mathfrak{r}_\ell)\dot{e}^\mathfrak{r}_\ell,$$  $\forall \ell\in\{1,\dots,n_r\}$,
which is zero when $e^{\mathfrak{r}}_\ell = 0$ or $e^\mathfrak{r}_\ell = \pi$. The second case is an undesired equilibrium, which  implies that the point $e^\mathfrak{r}_\ell = 0$ cannot be stabilized from \textit{all} initial conditions using a continuous controller. This is an inherent property of dynamics on the unit sphere due to topological obstructions \cite{bhat2000topological}. {In the following, we devise a control scheme that, except for guaranteeing that $e^\mathfrak{r}_\ell(t)$ evolves in a predefined funnel, guarantees that $e^\mathfrak{r}_\ell(t) \neq \pi$, $\forall t\in (t_0,t_f]$, provided that $e^\mathfrak{r}_\ell(t_0) \neq \pi$, $\forall \ell\in\{1,\dots,n_r\}$.
	
The funnel is defined by the functions $\rho^\mathfrak{t}_j: [t_0,t_0 + t_f] \to [\underline{\rho}^\mathfrak{t}_j, \bar{\rho}^\mathfrak{t}_j]$, $\rho^\mathfrak{r}_\ell : [t_0,t_0+T] \to [\underline{\rho}^\mathfrak{r}_\ell, \bar{\rho}^\mathfrak{r}_\ell]$ {with initial and final values $\underline{\rho}^\mathfrak{t}_j$, $\underline{\rho}^\mathfrak{r}_\ell$, and $\bar{\rho}^\mathfrak{t}_j$, $\bar{\rho}^\mathfrak{r}_\ell$, respectively, i.e.,}
\begin{subequations} \label{eq:funnel constraints}
\begin{align}
	&\rho^\mathfrak{t}_j(t_0) = \bar{\rho}^\mathfrak{t}_j,  \rho^\mathfrak{r}_\ell(t_0) = \bar{\rho}^\mathfrak{r}_\ell \\
	&\rho^\mathfrak{t}_j(t_0+t_f) = \underline{\rho}^\mathfrak{t}_j,  \rho^\mathfrak{r}_\ell(t_0+t_f) = \underline{\rho}^\mathfrak{r}_\ell \\
	&0 < \underline{\rho}^\mathfrak{t}_j \leq \bar{\rho}^\mathfrak{t}_j,  0 < \underline{\rho}^\mathfrak{r}_\ell \leq \bar{\rho}^\mathfrak{r}_\ell < 2 
\end{align}
{and being consistent with the errors initially, i.e., }
\begin{align} \label{eq:funnel constraints initially}
	|e^\mathfrak{t}_j(t_0)| < \bar{\rho}^\mathfrak{t}_j, \ \  \eta^\mathfrak{r}_\ell(t_0) < \bar{\rho}^\mathfrak{r}_\ell 
\end{align}
\end{subequations}
$\forall j\in\{1,\dots,n_{tr}\}$, $\ell\in\{1,\dots,n_r\}$. Our aim is to design a control protocol such that 
\begin{subequations} \label{eq:ppc specs}	
\begin{align}
|e^\mathfrak{t}_j(t)| & < \rho^\mathfrak{t}_j(t), \ \ \forall j\in\{1\dots,n_{tr}\}, \label{eq:ppc_spec_1}\\ 	
\eta^\mathfrak{r}_\ell(t)  & < \rho^\mathfrak{r}_\ell(t), \ \ \forall \ell \in\{1\dots,n_r\}, \label{eq:ppc_spec_2} 
\end{align}
\end{subequations}
$\forall t\in[t_0,t_0 + t_f]$. Note that, since $\bar{\rho}_{\mathfrak{r}_\ell} < 2$, guaranteeing \eqref{eq:ppc_spec_2} ensures that $\eta^\mathfrak{r}_\ell(t) < 2$, i.e., $e^\mathfrak{r}_\ell(t) \neq \pi$, $\forall t\in[t_0,t_0 + t_f], \ell\in\{1,\dots,n_r\}$ and avoidance of the respective singularity. The funnel functions can be defined a priori by a user, specifying the performance of the system in terms of overshoot and steady-state value of the errors $e^\mathfrak{t}_j$, $e^\mathfrak{r}_\ell$. {For instance, for the exponentially decaying $\rho^\mathfrak{t}_j(t) = (\bar{\rho}^\mathfrak{t}_j - \underline{\rho}^\mathfrak{t}_j)\exp(-\lambda_jt) + \underline{\rho}^\mathfrak{t}_j$, $\forall t\in[t_0,t_0+t_f]$, a user can choose the constants $\underline{\rho}^\mathfrak{t}_j$, $\bar{\rho}^\mathfrak{t}_j$, $\lambda_j$ dictating the maximum error steady-state value, overshoot, and speed of convergence.} The only hard condition is property \eqref{eq:funnel constraints initially} above, stating that the errors needs to respect the funnel constraints initially. Note also that the funnel functions do not depend on the robot dynamics, and can converge to values $\underline{\rho}^\mathfrak{t}_j$, $\underline{\rho}^\mathfrak{r}_\ell$ arbitrarily close to zero at $t_0 + t_f$, achieving thus practical stability. {We provide more details on the choice of the funnels after the control-design algorithm, which is described next.}

Let us define first the normalized errors as
\begin{subequations} \label{eq:xi}
\begin{align}
&\xi^\mathfrak{t}_j \coloneqq \frac{e^\mathfrak{t}_j}{\rho^\mathfrak{t}_j}, \ \ \forall j\in\{1\dots,n_{tr}\}, \\
&\xi^\mathfrak{r}_\ell \coloneqq \frac{\eta^\mathfrak{r}_\ell}{\rho^\mathfrak{r}_\ell}, \ \  \forall \ell\in\{1\dots,n_r\}.
\end{align} 
\end{subequations}
{Note that, in order for the errors $e^\mathfrak{t}_j$ and $\eta^\mathfrak{r}_\ell$ to satisfy the funnel constraints, the control design must guarantee that $\xi_j^\mathfrak{t} \in (1,1)$ and $\xi^\mathfrak{r}_\ell \in [0,1)$. In order to do that, we define 	
the transformed errors and signals} 
\begin{subequations} \label{eq:epsilon+r}
\begin{align}
&\varepsilon^\mathfrak{t}_j \coloneqq \ln\left(\frac{1 + \xi^\mathfrak{t}_j}{1 - \xi^\mathfrak{t}_j}\right), \ \ \forall j\in\{1\dots,n_{tr}\},  \\
&\varepsilon^\mathfrak{r}_\ell \coloneqq \ln\left(\frac{1}{1 - \xi^\mathfrak{r}_\ell} \right), \ \  \forall \ell\in\{1\dots,n_r\}, \\
&r^\mathfrak{t}_j \coloneqq \frac{\partial \varepsilon^\mathfrak{t}_j}{\partial \xi^\mathfrak{t}_j} = \frac{2}{1-(\xi^\mathfrak{t}_j)^2}, \ \ \forall j\in\{1\dots,n_{tr}\},  \\
&r^\mathfrak{r}_\ell \coloneqq \frac{\partial \varepsilon^\mathfrak{r}_\ell}{\partial \xi^\mathfrak{r}_\ell} = \frac{1}{1 - \xi^\mathfrak{r}_\ell}, \ \  \forall \ell\in\{1\dots,n_r\}.
\end{align}
\end{subequations}
{Note that $\varepsilon^\mathfrak{t}_j$, $r^\mathfrak{t}_j$, and $\varepsilon^\mathfrak{r}_\ell$, $r^\mathfrak{r}_\ell$ diverge to infinity as $\xi_j^\mathfrak{t}$ and $\xi^\mathfrak{r}_\ell$ approach $1$, respectively. The control design exploits this property; it aims to keep these signals bounded in order to achieve $\xi_j^\mathfrak{t}(t) \in (1,1)$ and $\xi^\mathfrak{r}_\ell(t) \in [0,1)$. We now proceed with a back-stepping methodology \cite{krstic1995nonlinear}. Since $q_2$ is part of the system state and cannot be designed, we set a desired reference signal that we want $q_2$ to track. In particular, we define the reference signal for $q_2$ as $\alpha_1 \coloneqq [(\alpha^{\mathfrak{t}})^\top,(\alpha^\mathfrak{r})^\top]^\top$, with}
\begin{subequations} \label{eq:alpha}
\begin{align}
\alpha^{\mathfrak{t}} \coloneqq - K^{\mathfrak{t}} \widetilde{r}^{\mathfrak{t}} (\widetilde{\rho}^{\mathfrak{t}})^{-1} \varepsilon^{\mathfrak{t}} \\
\alpha^\mathfrak{r} \coloneqq - K^\mathfrak{r} \widetilde{s}^\mathfrak{r} (\widetilde{\rho}^{\mathfrak{r}})^{-1} r^\mathfrak{r},
\end{align}
\end{subequations}
where $K^{\mathfrak{t}} \coloneqq \text{diag}\{[k^\mathfrak{t}_j]_{j\in\{1,\dots,n_{tr}\}}\} \in \mathbb{R}^{n_{tr} \times n_{tr}}$, $K_\mathfrak{r} \coloneqq \text{diag}\{[k^\mathfrak{r}_\ell]_{\ell \in\{1,\dots,n_{r}\}}\} \in\mathbb{R}^{n_r \times n_r}$ are diagonal positive definite gain matrices, $\widetilde{r}^{\mathfrak{t}} \coloneqq \text{diag}\{ [r^\mathfrak{t}_j]_{j\in\{1,\dots,n_{tr}\}} \}$, $r^\mathfrak{r} \coloneqq  [r^\mathfrak{r}_1, \dots, r^\mathfrak{r}_{n_r}]$, $\widetilde{\rho}^{\mathfrak{t}} \coloneqq \text{diag}\{ [\rho^\mathfrak{t}_j]_{j\in\{1,\dots,n_{tr}\}} \}$, $\widetilde{\rho}^\mathfrak{r} \coloneqq \text{diag}\{ [\rho^\mathfrak{r}_\ell]_{\ell\in\{1,\dots,n_r\}} \}$, $\varepsilon^\mathfrak{t} \coloneqq [\varepsilon^\mathfrak{t}_1,\dots,\varepsilon^\mathfrak{t}_{n_{tr}}]^\top$, and $\widetilde{s}^\mathfrak{r} \coloneqq \text{diag}\{ [\sin(e^\mathfrak{r}_\ell)]_{\ell\in\{1,\dots,n_r\}} \}$.

The rest of the algorithm proceeds recursively: 
for $i\in \{2,\dots,k\}$, we define the error 
\begin{equation} \label{eq:e_i}
e_i \coloneqq \begin{bmatrix}
e_{i_1} \\ 
\vdots \\
e_{i_n}
\end{bmatrix} \coloneqq q_i - \alpha_{i-1} \in \mathbb{R}^n,
\end{equation}
where $\alpha_{i-1}$ will be given subsequently in \eqref{eq:alpha_i}. 
We design funnel functions $\rho_{i_m}: [t_0,t_0+t_f] \to [\underline{\rho}_{i_m},\bar{\rho}_{i_m}]$, $\underline{\rho}_{i_m} \leq \bar{\rho}_{i_m} $, such that $\rho_{i_m}(t_0)=\bar{\rho}_{i_m} > |e_{i_m}(t_0)|$\footnote{Note that $e_{i_m}(t_0)$ can be measured at the time instant $t_0$ and the functions $\rho_{i_m}$ can be designed accordingly.}, $\forall m\in\{1,\dots,n\}$, and define 
\begin{subequations} \label{eq:xi_i,epsilon_i,r_i}
\begin{align}
&\xi_i \coloneqq \begin{bmatrix}
\xi_{i_1} \\ 
\vdots \\
\xi_{i_n}
\end{bmatrix} \coloneqq \rho_i^{-1} e_i, \\
&\varepsilon_i \coloneqq\begin{bmatrix}
\varepsilon_{i_1} \\ 
\vdots \\
\varepsilon_{i_n}
\end{bmatrix} \coloneqq 
\begin{bmatrix}
\ln\left( \frac{1 + \xi_{i_1}}{1 - \xi_{i_1}} \right) \\
\vdots  \\
\ln\left( \frac{1 + \xi_{i_n}}{1 - \xi_{i_n}} \right)
\end{bmatrix} \\
& r_i \coloneqq \text{diag}\left\{  \left[ \frac{\partial \varepsilon_{i_m}}{\partial \xi_{i_m}} \right]_{m\in\{1,\dots,n\}} \right\}, 
\end{align}
\end{subequations}
where $\rho_i \coloneqq \textup{diag}\{[\rho_{i_m}]_{i\in\{1,\dots,n\}}\} \in \mathbb{R}^{n\times n}$. 
Finally, we design the intermediate reference signals as 
\begin{equation}\label{eq:alpha_i}
	\alpha_{i} \coloneqq -K_i \rho_i^{-1} r_i \varepsilon_i, \forall i\in\{2,\dots,k-1\},
\end{equation}
and the control law
\begin{equation}\label{eq:control law}
	u = -K_k \rho_k^{-1} r_k \varepsilon_k,
\end{equation}
{where $K_i \in \mathbb{R}^{n \times n}$, $i\in{2,\dots,k}$, are positive definite gain matrices. }

{The control algorithm of this subsection is summarized in Algorithm \ref{alg:ppc pseudocode}, taking as inputs the system order $k$, the desired trajectory $q_\textup{d}(t)$, the funnel functions $\rho^\mathfrak{t}_j$, $\rho^\mathfrak{r}_\ell$, and the control gain matrices $K^\mathfrak{t}$, $K^\mathfrak{r}$, $K_i$, for all $j\in\{1,\dots,n_{tr}\}$, $\ell \in \{1,\dots,n_r\}$, $i\in\{2,\dots,k\}$, and outputting the control input $u$ at each time instant $t\in[t_0,t_0+t_f]$.  We have added the $\mathsf{GetFeedback}()$ function (line 4), which returns the sensing information for the state $q_1,\dots,q_k$. Note also that the funnels defined by the functions $\rho_i$ are defined at $t=t_0$ (line 22), since they require the value of $e_i(t_0)$ in order to guarantee $\rho_{i_m}(t_0) > |e_{i_m}(t_0)|$, for $m\in\{1,\dots,n\}$; the type and structure of the functions $\rho_i$, however, can be determined a priori by a user. A standard choice is exponentially decaying $\rho_{i_m}(t) = (\bar{\rho}_{i_m} - \underline{\rho}_{i_m})\exp(-\lambda_{i_m}t) + \underline{\rho}_{i_m}$, or constant ones $\rho_{i_m}(t) = \bar{\rho}_{i_m}$, with the rule $\rho_{i_m}(t_0) = \bar{\rho}_{i_m} = |e_{i_m}(t_0)| + \alpha$, for some $\alpha > 0$. Finally, although tight funnels $\rho^\mathfrak{t}_j$, $\rho^\mathfrak{r}_\ell$ might be desired to achieve close proximity of $q_1(t)$ to $q_\textup{d}(t)$, the funnels defined by $\rho_i$ are only required to be bounded; convergence to very small values (e.g., by choosing very small values $\underline{\rho}_{i_m}$ for exponentially-decaying funnels) does not have a direct physical interpretation in the system's configuration space and can overstress the system causing unnecessarily large control inputs.}

\begin{algorithm}
	\caption{$\mathsf{FunnelControl}$}\label{alg:ppc pseudocode}
	\begin{algorithmic}[1]
		\Require{$k$, $t_f$, $q_\textup{d}$, $\rho^\mathfrak{t}_j$, $\rho^\mathfrak{r}_\ell$, $K^\mathfrak{t}$, $K^\mathfrak{r}$, $K_i$, $j\in\{1,\dots,n_{tr}\}$, $\ell \in \{1,\dots,n_r\}$ }
		\Ensure{$u(t)$}
		\Procedure{Funnel Control Design}{}	
		\For {$t \in [t_0,t_0+t_f] $} 
		\For {$j\in\{1\dots,n_{tr}\}$, $\ell\in\{1\dots,n_r\}$} 
		\State $(q_1,\dots,q_k) \gets \mathsf{GetFeedback}()$;
		\State $e^\mathfrak{t} \gets q^\mathfrak{t}(t) - q^\mathfrak{t}_\textup{d}(t)$;
		\State $\xi^\mathfrak{t}_j \gets \frac{e^\mathfrak{t}_j}{\rho^\mathfrak{t}_j(t)}$; \hspace{12.5mm} $\xi^\mathfrak{r}_\ell \gets \frac{\eta^\mathfrak{r}_\ell}{\rho^\mathfrak{r}_\ell(t)}$;
		\State $\varepsilon^\mathfrak{t}_j \gets \ln\left(\frac{1 + \xi^\mathfrak{t}_j}{1 - \xi^\mathfrak{t}_j}\right)$; \hspace{4.5mm} $\varepsilon^\mathfrak{r}_\ell \gets \ln\left(\frac{1}{1 - \xi^\mathfrak{r}_\ell} \right)$;				
		\State $r^\mathfrak{t}_j \gets  \frac{2}{1-(\xi^\mathfrak{t}_j)^2}$; \hspace{9mm} $r^\mathfrak{r}_\ell \gets \frac{1}{1 - \xi^\mathfrak{r}_\ell}$;				
		\EndFor
		\State $\widetilde{\rho}^{\mathfrak{t}} \gets \text{diag}\{ [\rho^\mathfrak{t}_j(t)]_{j\in\{1,\dots,n_{tr}\}} \}$;
		\State $\widetilde{\rho}^\mathfrak{r} \gets \text{diag}\{ [\rho^\mathfrak{r}_\ell(t)]_{\ell\in\{1,\dots,n_r\}} \}$;			
		\State $\varepsilon^\mathfrak{t} \gets [\varepsilon^\mathfrak{t}_1,\dots,\varepsilon^\mathfrak{t}_{n_{tr}}]^\top$;
		\State $\widetilde{r}^{\mathfrak{t}} \coloneqq \text{diag}\{ [r^\mathfrak{t}_j]_{j\in\{1,\dots,n_{tr}\}} \}$;
		\State $r^\mathfrak{r} \coloneqq  [r^\mathfrak{r}_1, \dots, r^\mathfrak{r}_{n_r}]$; 	
		\State $\widetilde{s}^\mathfrak{r} \coloneqq \text{diag}\{ [\sin(e^\mathfrak{r}_\ell)]_{\ell\in\{1,\dots,n_r\}} \}$;			
		\State $\alpha^{\mathfrak{t}} \gets - K^{\mathfrak{t}} \widetilde{r}^{\mathfrak{t}} (\widetilde{\rho}^{\mathfrak{t}})^{-1} \varepsilon^{\mathfrak{t}}$;
		\State $\alpha^\mathfrak{r} \gets - K^\mathfrak{r} \widetilde{s}^\mathfrak{r} (\widetilde{\rho}^{\mathfrak{r}})^{-1} r^\mathfrak{r}$;
		\State $a_1 \gets [(\alpha^\mathfrak{t})^\top, (\alpha^\mathfrak{r})^\top]^\top$;
		\For {$i \in \{2,\dots,k\}$} 
		\State $e_i \gets q_i - a_{i-1}$;
		\For {$m\in\{1,\dots,n\}$}					 
		\If {$t=t_0$}				
		\State Define $\rho_{i_m}$ such that $\rho_{i_m}(t_0) >  |e_{i_m}|$;
		\EndIf 
		\State $\xi_{i_m} \gets \frac{e_{i_m}}{\rho_{i_m}}$; 
		\State $\varepsilon_{i_m} \gets \ln\left( \frac{1 + \xi_{i_m}}{1 - \xi_{i_m}} \right)$;
		\State $r_{i_m} \gets \frac{2}{1 - \xi_{i_m}^2}$, 
		\EndFor						
		\State $\rho_i \gets \textup{diag}\{ [\rho_{i_m}]_{m\in\{1,\dots,n\}} \}$;
		\State $r_i \gets \textup{diag}\{ [r_{i_m}]_{m\in\{1,\dots,n\}} \}$;
		\State $\varepsilon_i \gets [\varepsilon_1,\dots,\varepsilon_n]^\top$;
		\State $\alpha_{i} \gets -K_i \rho_i^{-1} r_i \varepsilon_i$;				
		\EndFor
		\State $u \gets \alpha_k$;
		\EndFor
		\EndProcedure
	\end{algorithmic}
\end{algorithm}

\begin{remark}
	The control algorithm \eqref{eq:xi}-\eqref{eq:control law} resembles the function of reciprocal barriers used in optimization. That is, the intermediate reference and control signals \eqref{eq:alpha}, \eqref{eq:alpha_i}, \eqref{eq:control law} approach infinity as the errors $|e^\mathfrak{t}_j|$, $1-e^\mathfrak{r}_\ell$, $|e_{i_m}|$, $i\in\{2,\dots,k-1\}$ approach the respective funnel functions $\rho^\mathfrak{t}_j$, $\rho^\mathfrak{r}_\ell$, $\rho_{i_m}$, $j\in\{1,\dots,n_{tr}\}$, $\ell\in\{1,\dots,n_{r}\}$, $m\in\{1,\dots,n\}$, $i\in\{2,\dots,k\}$. Intuitively, this forces these errors to remain inside their respective funnels, by compensating for the unknown dynamic terms of \eqref{eq:dynamics}, which are assumed to be continuous and hence bounded in these funnels. {In addition, note that the control algorithm \textit{does not use} any information on the state- and time-dependent system dynamics $f_i(\cdot)$, $g_i(\cdot)$, giving rise to two important properties; firstly, it can be easily applied to a large variety of systems with different dynamic parameters; secondly, it is robust against unknown, possibly adversarial, time-varying disturbances. The latter is clearly illustrated in the performed experiments of Section \ref{sec:exps}.}
\end{remark}

{
\begin{remark}[Control gain selection] \label{rem:gains}
The control gain matrices $K^\mathfrak{t}$, $K^\mathfrak{r}$, $K_i$, $i\in\{2,\dots,k\}$ are chosen by the user and can be any positive definite matrices. It should be noted, however, that their choice affects both the quality of evolution of the errors inside  the  funnel envelopes as  well  as the  control  input  characteristics  (e.g.,  decreasing  the  gain values  leads  to  increased  oscillatory  behavior  within, which is improved  when  adopting  higher  values,  enlarging,  however, the  control  effort  both  in  magnitude  and  rate).  Additionally, fine tuning might be needed in real-time scenarios, to retain the required control input signals within the feasible range that can be implemented by the  actuators. 
In fact, by following the proof of correctness of the proposed control algorithm (Appendix \ref{app:ppc proof}), we can derive expressions connecting the control input magnitude with the control gains (an explicit derivation can be found in \cite{verginis2019robust}).
Hence, we can derive a closed-form rule for choosing the control gains for the control input to satisfy certain saturation constraints. Nevertheless, such expressions involve upper bounds of the unknown dynamic terms $f_i(\cdot)$, $g_i(\cdot)$, and hence such bounds must be known for the derivation of a closed-form rule.  In addition, the discrete nature of the microcontroller units of robot actuators prevents the \textit{continuous} application of the control law \eqref{eq:control law}, which might hinder the performance of the overall scheme. Therefore, tuning of the control gains towards optimal performance should be performed off-line or using a simulator.  
\end{remark}
}

The next theorem guarantees the correctness of the proposed protocol.

\begin{theorem} \label{th:control theorem} 
	Let the dynamics \eqref{eq:dynamics} as well as prescribed funnels $\rho^\mathfrak{t}_j$, $\forall j\in\{1,\dots,n_{tr}\}$, $\rho^\mathfrak{r}_\ell$, $\forall \ell\in\{1,\dots,n_r\}$ satisfying the prescribed initial constraints \eqref{eq:funnel constraints}. Then the control protocol \eqref{eq:xi}-\eqref{eq:control law} guarantees that 
	\begin{subequations} \label{eq:theorem bounds}
	\begin{align} 
	& |e^\mathfrak{t}_j(t)| < \rho^\mathfrak{t}_j(t), \forall j\in\{1,\dots,n_{tr}\}, \\
	&\eta^\mathfrak{r}_\ell(t) = 1 - \cos(e^\mathfrak{r}(t)) < \rho^\mathfrak{r}_\ell(t), \forall \ell \in \{1,\dots,n_r\}
	\end{align}
	\end{subequations}
	as well as the boundedness of all closed loop signals, 	$t\in[t_0,t_0+t_f]$. 
\end{theorem}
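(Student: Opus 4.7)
The plan is to follow a standard funnel-control argument combined with a backstepping cascade, organized in three stages: (i) local existence of a solution to the closed-loop system, (ii) Lyapunov-based boundedness of the normalized errors on the maximal existence interval, and (iii) extension of this interval to all of $[t_0,t_0+t_f]$. First, I would consider the open set $\mathcal{D} = (-1,1)^{n_{tr}} \times [0,1)^{n_r} \times (-1,1)^{n(k-1)}$ in which the stacked vector of normalized errors $\xi \coloneqq (\xi^\mathfrak{t},\xi^\mathfrak{r},\xi_2,\dots,\xi_k)$ must live in order for $\varepsilon^\mathfrak{t}$, $\varepsilon^\mathfrak{r}$, $\varepsilon_i$ to be well defined. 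By the funnel constraint \eqref{eq:funnel constraints initially} and the choice of $\rho_{i_m}(t_0) > |e_{i_m}(t_0)|$, the initial value $\xi(t_0)$ lies in a compact subset of $\mathcal{D}$. Rewriting the closed loop \eqref{eq:dynamics}, \eqref{eq:alpha}, \eqref{eq:alpha_i}, \eqref{eq:control law} in $\xi$-coordinates yields an ODE whose right-hand side is continuous in $\xi$ on $\mathcal{D}$ and piecewise continuous in $t$ (Assumption \ref{ass:dynamics}). Standard theory then produces a maximal solution on $[t_0,\tau_{\max})$ with $\tau_{\max}\le t_0+t_f$, and either $\tau_{\max}=t_0+t_f$ or $\xi(t)$ escapes every compact subset of $\mathcal{D}$ as $t\to\tau_{\max}$.

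Second, for the Lyapunov step I would proceed level by level. At level $1$, consider $V_1 = \tfrac12\|\varepsilon^\mathfrak{t}\|^2 + \tfrac12\|\varepsilon^\mathfrak{r}\|^2$. Differentiating and using $\dot e^\mathfrak{t} = f_1^\mathfrak{t} + g_1^\mathfrak{t} q_2 - \dot q^\mathfrak{t}_\textup{d}$ together with $\dot \eta^\mathfrak{r}_\ell = \sin(e^\mathfrak{r}_\ell)\dot e^\mathfrak{r}_\ell$, and splitting $q_2 = \alpha_1 + e_2$, produces a dominant quadratic term coming from $\alpha_1$ that is negative definite thanks to Assumption \ref{ass:g pd}, plus residual terms that are continuous functions of $(\xi,t,e_2)$. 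On any compact subset of $\mathcal{D}$, continuity of $f_1,g_1$, uniform boundedness in $t$, and smoothness of $q_\textup{d}$ and of the funnel functions make these residuals uniformly bounded. At level $i \ge 2$, I would repeat with $V_i = V_{i-1} + \tfrac12\|\varepsilon_i\|^2$, using $\dot e_i = f_i + g_i q_{i+1} - \dot\alpha_{i-1}$, $q_{i+1} = \alpha_i + e_{i+1}$ (with $q_{k+1} = u = \alpha_k$), and the definitions \eqref{eq:alpha_i}, \eqref{eq:control law}. The cascade structure gives $\dot V_i \le -c_i\|\varepsilon_i\|^2 + \phi_i(V_{i-1})$ on compact subsets of $\mathcal{D}$, from which boundedness of $V_k$ on $[t_0,\tau_{\max})$ is inherited upward from $V_1$.

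Third, I would close the argument by contradiction. If $\xi(t)$ approached the boundary of $\mathcal{D}$ as $t\to\tau_{\max}$, then some component of $\varepsilon$ and hence $V_k$ would diverge, contradicting the boundedness obtained above. Therefore $\xi(t)$ stays in a compact subset of $\mathcal{D}$, which by the standard maximal-interval extension forces $\tau_{\max}=t_0+t_f$. Translating back through \eqref{eq:xi} delivers \eqref{eq:theorem bounds}, and boundedness of every closed-loop signal then follows: $\varepsilon$, $r$, $\xi$ are bounded by construction, $\alpha_i$ is a continuous function of these and of $\rho_i,q_\textup{d}$ and thus bounded, and $q_i = \alpha_{i-1} + e_i$ with $e_i = \rho_i \xi_i$ is bounded as well.

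The main obstacle will be the recursive bookkeeping in the backstepping: at each level $i$, $\dot\alpha_{i-1}$ enters $\dot e_i$ and is a nonlinear function of $\xi_1,\dots,\xi_{i-1}$, the funnel functions, and their derivatives. Showing that $\dot\alpha_{i-1}$ stays bounded on compact subsets of $\mathcal{D}$, which is what makes the cascade close, requires carefully exploiting the $C^1$ regularity of $f_i,g_i$ in $\bar q_i$ (Assumption \ref{ass:dynamics}), the smoothness of $q_\textup{d}$, and the smoothness of $\rho^\mathfrak{t}_j,\rho^\mathfrak{r}_\ell,\rho_{i_m}$. An additional subtlety is the rotational channel: the chordal transformation introduces the factor $\sin(e^\mathfrak{r}_\ell)$ in $\dot\eta^\mathfrak{r}_\ell$, which is precisely why $\widetilde{s}^\mathfrak{r}$ appears in $\alpha^\mathfrak{r}$; one must verify that the constraint $\bar\rho^\mathfrak{r}_\ell < 2$ prevents $e^\mathfrak{r}_\ell$ from reaching the singularity at $\pi$, so that $\sin(e^\mathfrak{r}_\ell)$ does not cause a degeneracy in the $\varepsilon^\mathfrak{r}$-Lyapunov term.
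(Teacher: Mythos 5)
Your overall architecture (maximal solution on the open set where the normalized errors live, Lyapunov-based bounds on the transformed errors, then a contradiction/extension argument to get $\tau_{\max}=t_0+t_f$) is exactly the paper's route, but two steps in your sketch do not go through as stated. The first is the choice $V_1=\tfrac12\|\varepsilon^\mathfrak{t}\|^2+\tfrac12\|\varepsilon^\mathfrak{r}\|^2$. The rotational reference signal in \eqref{eq:alpha} is $\alpha^\mathfrak{r}=-K^\mathfrak{r}\widetilde{s}^\mathfrak{r}(\widetilde{\rho}^\mathfrak{r})^{-1}r^\mathfrak{r}$, i.e., it is built from $r^\mathfrak{r}$, \emph{not} from $\varepsilon^\mathfrak{r}$. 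With your $V_1$, the multiplier of $g_1q_2$ in the rotational rows of $\dot V_1$ is $\varepsilon^\mathfrak{r}_\ell r^\mathfrak{r}_\ell\sin(e^\mathfrak{r}_\ell)/\rho^\mathfrak{r}_\ell$, so after substituting $q_2=\alpha_1+e_2$ the ``dominant'' term has the form $-x^\top g_1 y$ with $x=Dy$, $D$ a state-dependent nonnegative diagonal matrix (the $\varepsilon^\mathfrak{r}_\ell$ factors) that is not a multiple of the identity. Since $g_1$ is only known to satisfy $\lambda_{\min}(g_1+g_1^\top)\geq\underline{\lambda}$ (Assumption \ref{ass:g pd}) and is in general non-diagonal and non-symmetric, $y^\top D g_1 y$ has no sign, so the claimed negative definiteness ``thanks to Assumption \ref{ass:g pd}'' does not materialize. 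The paper avoids this by taking $V_1=\tfrac12(\varepsilon^\mathfrak{t})^\top K^\mathfrak{t}\varepsilon^\mathfrak{t}+\sum_\ell k^\mathfrak{r}_\ell\varepsilon^\mathfrak{r}_\ell$ (linear, not quadratic, in $\varepsilon^\mathfrak{r}_\ell$), which makes the gradient of $V_1$ align with $-\alpha_1$ so that the leading term is the genuine quadratic form $-\tfrac12\alpha_1^\top(g_1+g_1^\top)\alpha_1\leq-\underline{g}\|\alpha_1\|^2$; the constraint $\bar\rho^\mathfrak{r}_\ell<2$ then enters through $\sin^2(e^\mathfrak{r}_\ell)=\rho^\mathfrak{r}_\ell\xi^\mathfrak{r}_\ell(1+\cos(e^\mathfrak{r}_\ell))$ and $1+\cos(e^\mathfrak{r}_\ell)\geq 2-\bar\rho^\mathfrak{r}_\ell>0$ to keep this dissipation nondegenerate -- the very point you listed as ``to be verified.''

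The second issue is that bounding the residual terms ``on any compact subset of $\mathcal{D}$'' is circular: which compact subset the solution stays in is precisely what the Lyapunov step must produce, so all bounds must be valid on the whole open domain and independent of $\tau_{\max}$. The resolution (as in the paper) is structural: inside the domain the \emph{state-dependent} quantities $f_1$, $g_1e_2$, $\dot q_\textup{d}$, $\dot{\widetilde{\rho}}\xi$ are bounded because the funnels themselves confine $e^\mathfrak{t}$, $\eta^\mathfrak{r}$, $e_2$ (these bounds do not depend on $\tau_{\max}$), whereas the \emph{unbounded} gradient factors are collected into a single vector $\kappa$ so that $\dot V_1\leq-\underline{m}\|\kappa\|^2+\bar B_1\|\kappa\|$; ultimate boundedness (Khalil, Thm.~4.18) then gives $|\xi^\mathfrak{t}_j|\leq\bar\xi^\mathfrak{t}<1$, $\xi^\mathfrak{r}_\ell\leq\bar\xi^\mathfrak{r}<1$ with margins independent of the maximal interval. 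The same care is needed in your cascade inequality $\dot V_i\leq-c_i\|\varepsilon_i\|^2+\phi_i(V_{i-1})$: the boundedness of $\dot\alpha_{i-1}$ and of $g_ie_{i+1}$ is not a function of $V_{i-1}$ alone but follows recursively from the strict $\xi$-bounds of the previous step and from the confinement of $e_{i+1}$ by its own funnel $\rho_{i+1}$ inside the domain; the paper handles each level with its own $V_i=\tfrac12\varepsilon_i^\top K_i\varepsilon_i$ and propagates these bounds step by step before invoking the extension theorem.
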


\begin{proof}
	The proof is given in Appendix \ref{app:ppc proof}.
\end{proof}

\begin{remark} [Funnel properties]  \label{rem:funnel}
Theorem \ref{th:control theorem} establishes a funnel around the 
desired trajectory $q_\text{d}$ where the state $q(t)$ will evolve in. This funnel will be used as clearance in the motion planner of the subsequent section to derive a collision-free path to the goal region. We stress that this funnel can be \textit{a priori chosen by a user}, in contrast to our previous work \cite{verginis21sampling}, where the corresponding funnel depends on the system's dynamic terms that are unknown to the user. The only hard constraint is the one imposed by \eqref{eq:funnel constraints initially} at $t_0$, i.e., $|e^\mathfrak{t}_j(t_0)| < \bar{\rho}^\mathfrak{t}_j$, $\eta^\mathfrak{r}_\ell(t_0) < \bar{\rho}^\mathfrak{r}_\ell$, $\forall j\in\{1,\dots,n_{tr}\}$, $\ell \in \{1,\dots,n_r\}$. Note, however, that 
the collision-free geometric trajectory $q_\text{d}$ of the motion planner will connect the initial condition $q(0)$ to the goal and hence it is reasonable to enforce $q_\textup{d}(0) = q(0)$, which implies that the aforementioned constraint is trivially satisfied. Moreover, the selection of $\rho^\mathfrak{t}_j$, $\rho^\mathfrak{r}_\ell$ can be chosen such that the respective funnels are arbitrarily small implying that the system evolves arbitrarily close to the derived trajectory $q_\text{d}$. It should be noted, nevertheless, that too shrunk or {very fast-converging} funnels might yield excessive control inputs that cannot be realized by the actuators in realistic systems. {Therefore, the funnel characteristics must be always chosen in accordance to the capabilities of the system. Similarly to the control gains (see Remark \ref{rem:gains}), the funnel characteristics can be explicitly connected to the system's control input, requiring, however, upper bounds of the unknown dynamics $f_i(\cdot)$, $g_i(\cdot)$. Hence, tuning can be attempted off-line or using a simulator. As an example, in our results of Section \ref{sec:exps}, where we perform computer simulations and hardware experiments using 6-DOF robotic manipulators, we choose the funnels as follows. For the computer simulations, we choose $\rho^\mathfrak{t}_j(t) = 0.05\exp(-0.01t) + 0.1$, $\rho^\mathfrak{r}_j(t) = 0.005\exp(-0.01t) + 0.005$, implying shrinking funnels from $0.1$ to $0.05$ and from $0.01$ to $0.005$ (rad), respectively, with exponential convergence dictated by $\exp(-0.01t)$, while for the hardware experiments we choose constant funnels with magnitude ranging from $0.2$ to $0.4$. }
\end{remark}

\subsection{Motion Planner} \label{sec:motion planner}

{We introduce now the framework of \textit{KinoDynamic motion planning via Funnel control}, or \textit{KDF} motion-planning framework; The framework uses the control design of Section \ref{sec:ppc control} to augment geometric sampling-based motion-planning algorithms and solve the kinodynamic motion-planning problem.}

{Before presenting the framework, we define the extended-free space, which will be used to integrate the results from the feedback control of the previous subsection. In order to do that, we define first the open polyhedron as 
\begin{align}\label{eq:polyhedron def}
\mathcal{P}(\mathsf{z},\bar{\rho}) \coloneqq \{ \mathsf{y}\in\mathbb{T} : & |\mathsf{y}^\mathfrak{t}_j - \mathsf{z}^\mathfrak{t}_j| < \bar{\rho}^\mathfrak{t}_j,  \notag \\ 
&\hspace{-5mm} 1 - \cos(\mathsf{y}^\mathfrak{r}_\ell - \mathsf{z}^\mathfrak{r}_\ell) < \bar{\rho}^r_\ell, \notag \\
&\hspace{-20mm} \forall j\in\{1,\dots,n_{tr}\},\ell \in\{1,\dots,n_r\} \}
\end{align}
where $\mathsf{y},\mathsf{z}\in\mathbb{T}$ consist of translational and rotational terms (similarly to $q_1$),
and $\bar{\rho}$ $\coloneqq$ $[\bar{\rho}^\mathfrak{t}_1,\dots,\bar{\rho}^\mathfrak{t}_{n_{tr}},\bar{\rho}^\mathfrak{r}_1,\dots,\bar{\rho}^\mathfrak{r}_{n_r} ]^\top$ $\in\mathbb{R}^{n_{tr}+n_r}$ is the vector of maximum funnel values. We define now, similarly to \cite{verginis2021adaptive}, the $\bar{\rho}$-extended free space 
\begin{align} \label{eq:extended free space}
	\bar{\mathcal{A}}_{\text{free}}(\bar{\rho}) \coloneqq \{\mathsf{z}\in\mathbb{T}:\bar{\mathcal{A}}(\mathsf{z},\bar{\rho}) \cap \mathcal{O} = \emptyset \},
\end{align}
where $\bar{\mathcal{A}}(\mathsf{z},\bar{\rho}) \coloneqq \bigcup_{\mathsf{y}\in \mathcal{P}(\mathsf{z},\bar{\rho})} \mathcal{A}(\mathsf{y})$. {Note that, for vectors $\rho_1$ and $\rho_2$ $\in\mathbb{R}^{n_{tr}+n_r}$, with $\rho_1 \succeq \rho_2$, with $\succeq$ denoting element-wise inequality, it holds that $\bar{\mathcal{A}}_{\text{free}}(\rho_1) \subseteq \bar{\mathcal{A}}_{\text{free}}(\rho_2)$}.

In addition, we need a distance metric that captures accurately the proximity of $\bar{\mathcal{A}}_{\text{free}}(\bar{\rho}) \subset \mathbb{T}$. As elaborated in the previous section, the Euclidean distance is not an appropriate distance metric in $\mathbb{T}$ due to the fact that the rotational part $\mathfrak{r}$ evolves on the $n_r$-dimensional sphere. Hence, having already defined the chordal metric $\bar{d}_C$ in \eqref{eq:chordal}, we define a suitable distance metric {for vectors $x = [(x^\mathfrak{t})^\top,(x^\mathfrak{r})^\top]^\top$, $y = [(y^\mathfrak{t})^\top,(y^\mathfrak{r})^\top]^\top$ $\in\mathbb{T}$} as $d_\mathbb{T}:\mathbb{T}^2 \to \mathbb{R}_{\geq 0}$, with 
\begin{equation} \label{eq:metric d}
	d_\mathbb{T}(x,y) = \|x^\mathfrak{t} - y^\mathfrak{r}\|^2 + \bar{d}_C(x^\mathfrak{r} - y^\mathfrak{r}). 
\end{equation}

{The intuition behind the KDF framework is as follows.}
The control scheme of the previous subsection guarantees that the robot can track a trajectory within the bounds \eqref{eq:theorem bounds}. {In other words, given a desired trajectory signal $q_\text{d}:[t_0,t_0+t_f]\to\mathbb{T}$, the control algorithm  \eqref{eq:xi}-\eqref{eq:control law} guarantees that $q_1(t) \in \mathcal{P}(q_\textup{d}(t),\bar{\rho})$, for all $t\in[t_0,t_0+t_f]$. Therefore, by the construction of $\bar{\mathcal{A}}_{\textup{free}}(\bar{\rho})$, if $q_\textup{d}(t)$ belongs to $\bar{\mathcal{A}}_\text{free}(\bar{\rho})$, $q_1(t)$ belongs to $\mathcal{A}_{\textup{free}}$. The proposed sampling-based framework aims at finding a path in $\bar{\mathcal{A}}_{\textup{free}}(\bar{\rho})$; this path will be then endowed with time constraints in order to form the trajectory $q_\textup{d}:[t_0,t_0+t_f] \to \bar{\mathcal{A}}_\textup{free}(\bar{\rho}) $, which will then safely tracked by the system using the designed controller.}

{Common geometric sampling-based motion-planning algorithms follow a standard iterative procedure that build a discrete network $\mathcal{G}=(\mathcal{V},\mathcal{E})$, (tree, roadmap) of points  in the free space connecting the initial configuration to the goal; $\mathcal{V}$ and $\mathcal{E}$ denote the nodes (points) and edges, respectively, of the network. Standard functions involved in such algorithms include $\mathsf{Sample}()$, $\mathsf{Nearest}(\mathcal{G},\mathsf{y})$, $\mathsf{Closest}(\mathcal{G},\mathsf{y},K)$, $\mathsf{Steer}(\mathsf{y},\mathsf{z})$, and $\mathsf{ObstacleFree}(\mathsf{y},\mathsf{z})$; $\mathsf{Sample}()$ samples a random point from a distribution in $\mathcal{A}_{\textup{free}}$; $\mathsf{Nearest}(\mathcal{G},\mathsf{y})$ and $\mathsf{Closest}(\mathcal{G},\mathsf{y},K)$ find the closest and $K$ closest, respectively, nodes of $\mathcal{G}$ to $\mathsf{y}$, according to some distance metric; $\mathsf{Steer}(\mathsf{y},\mathsf{z})$ computes a point lying on line from $\mathsf{z}$ to $\mathsf{y}$
and $\mathsf{ObstacleFree}(\mathsf{y},\mathsf{z})$ checks whether the path from $\mathsf{y}$ to $\mathsf{z}$ belongs to the free space $\mathcal{A}_\textup{free}$ (i.e., collision-free). }

{Among the aforementioned functions, the sampling and collision checking functions $\mathsf{Sample}()$, $\mathsf{ObstacleFree}(\mathsf{y},\mathsf{z})$ respectively, constitute the main building blocks upon which all sampling-based motion-planning algorithms are developed. The framework we propose in this work \textit{modifies} these functions, creating a basis for a new class of sampling-based motion-planning algorithms; such algorithms, in combination with the funnel controller of Section \ref{sec:ppc control}, are able to solve the \textit{kinodynamic} motion-planning problem only by sampling in an extended free \textit{geometric configuration} space, without resorting to sampling of control inputs or simulation of the (uncertain) system dynamics.}

{As stated before, we aim to find a path in the extended free space $\bar{\mathcal{A}}_\textup{free}(\bar{\rho})$. To this end, we need to sample points and perform collision checking in $\bar{\mathcal{A}}_\textup{free}(\bar{\rho})$. Therefore, we define the functions $\mathsf{SampleExt}(\bar{\rho})$ and $\mathsf{ObstacleFreeExt}(\mathsf{y},\mathsf{z},\bar{\rho})$; $\mathsf{SampleExt}(\bar{\rho})$ samples a point from a uniform distribution in the extended free space $\bar{\mathcal{A}}_\textup{free}(\bar{\rho})$; $\mathsf{ObstacleFreeExt}(\mathsf{y},\mathsf{z},\bar{\rho})$ checks whether the path {$X_\text{Line}:[0,\sigma]\to \mathbb{T}$, for some positive $\sigma$,} from $\mathsf{y}$ to $\mathsf{z}$ is collision free with respect to the extended free space, i.e., check whether $\mathsf{y}' \in \bar{\mathcal{A}}_{\text{free}}(\bar{\rho})$, $\forall \mathsf{y}' \in X_\text{Line}$. We elaborate on the exact collision checking procedure later in Remark \ref{rem:coll check}. }

{The new functions $\mathsf{SampleExt}(\cdot)$ and $\mathsf{ObstacleFreeExt}(\cdot)$ can be used in any geometric sampling-based motion planning algorithm, giving thus rise to a new family of algorithms, which produce a safe path in an extended free space $\bar{\mathcal{A}}_\textup{free}(\bar{\rho})$. This path is then tracked by the system using the control algorithm of Section \ref{sec:ppc control}. Note, however, that the control algorithm guarantees tracking of a \textit{time-varying} smooth 
(at least $k$-times continuously differentiable) trajectory $q_\textup{d}(t)$, whereas the output of the respective motion planning algorithm is a path, i.e., a sequence of points in $\mathbb{T}$. Therefore, we smoothen this path and endow it with a time behavior, producing hence a time trajectory. 
The combination of the aforementioned steps, namely the family of geometric sampling-based motion planning algorithms in $\bar{\mathcal{A}}_\textup{free}(\bar{\rho})$, the time endowment, and the funnel-control algorithm of Section \ref{sec:ppc control}, constitute the framework of \textit{KinoDynamic motion planning via Funnel control}, or \textit{KDF} motion-planning framework. This framework solves the kinodynamic motion-planning problem, without resorting to sampling of control inputs or employment of the system dynamics (either in the motion-planning or the control module).} {Note that the smoothening of the output path is not required to be performed by the sampling-based algorithm; it is required by the overall KDF framework due to to the smooth-trajectory requirement of the funnel controller. We elaborate later (see Remark \ref{rem:line segment}) how this smoothening restriction can be relaxed. }

{Next, we provide indicatively two examples of KDF motion-planning algorithms {(KDF-MP)}; we extend the popular RRT and PRM algorithms to form the KDF-RRT and KDF-PRM algorithms, which produce paths in the extended free space $\bar{\mathcal{A}}_\textup{free}(\bar{\rho})$. }
{The two algorithms are presented in Algorithms \ref{alg:main tree ppc} and \ref{alg:prm ppc}, respectively. As described above, the main differences are the sampling and collision-checking functions 
$\mathsf{SampleExt}(\cdot)$ and $\mathsf{ObstacleFreeExt}(\cdot)$, respectively, which operate in the extended free space $\bar{\mathcal{A}}_\textup{free}(\bar{\rho})$. The distance metric used in the several functions is $d_\mathbb{T}$, defined in \eqref{eq:metric d}.}
	

\begin{algorithm}
	\caption{KDF-RRT}\label{alg:main tree ppc}
	\begin{algorithmic}[1]
		\Require{$\bar{\rho}$, $\mathcal{A}_\textup{free}$, $Q_g$, $q_1(0)$}
		\Ensure{Tree $\mathcal{G}$ in $\bar{\mathcal{A}}_\text{free}(\bar{\rho})$}
		\Procedure{TREE}{}
		\State $\mathcal{V} \leftarrow \{q_1(0)\}$; $\mathcal{E} \leftarrow \emptyset$; 
		\State {$\mathsf{ReachGoal} \leftarrow \mathsf{False}$};
		\While { {not $\mathsf{ReachGoal}$} } 
		\State $\mathcal{G} \leftarrow (\mathcal{V},\mathcal{E})$;		
		\State $q_\text{rand} \leftarrow \mathsf{SampleExt}(\bar{\rho})$; 
		\State $q_\text{nearest} \leftarrow \mathsf{Nearest}(\mathcal{G},q_\text{rand})$;
		\State $q_\text{new} \leftarrow \mathsf{Steer}(q_\text{nearest},q_\text{rand})$;
		\If {$\mathsf{ObstacleFreeExt}(q_\text{nearest},q_\text{new},\bar{\rho})$}
		\State $V \leftarrow V\cup \{q_\text{new}\}$;
		\State	$\mathcal{E} \leftarrow \mathcal{E} \cup\{(q_\text{nearest},q_\text{new})\}$;
		\EndIf
		\For {{$q'\in \mathcal{V}$}} 
		\If { { $\mathsf{ObstacleFreeExt}(q',Q_g,\bar{\rho})$} } 
		\State ${V \leftarrow V\cup \{Q_g\}} $;
		\State	$ {\mathcal{E} \leftarrow \mathcal{E} \cup\{(q',Q_g)\}} $; 
		\State {$\mathsf{ReachGoal} \leftarrow \mathsf{True}$};
		\EndIf
		\EndFor
		\EndWhile
		\EndProcedure
	\end{algorithmic}
\end{algorithm}

\begin{algorithm}
	\caption{KDF-PRM}\label{alg:prm ppc}
	\begin{algorithmic}[1]
		\Require{$\bar{\rho}$, $\mathcal{A}_\textup{free}$, $N$}
		\Ensure{Graph $\mathcal{G}$ in $\bar{\mathcal{A}}_\text{free}(\bar{\rho})$}
		\Procedure{PRM}{}
		\State $\mathcal{V} \leftarrow \emptyset$; 
		\State $\mathcal{E} \leftarrow \emptyset$; 
		\State $i \leftarrow 0$;
		\While {$i < N$} 
		\State $\mathcal{G} \leftarrow (\mathcal{V},\mathcal{E})$;
		\State $q_\text{rand} \leftarrow \mathsf{SampleExt}(\bar{\rho})$; 
		\State $\mathcal{V} \leftarrow \mathcal{V}\cup \{ q_\text{rand} \} $;		
		\State $N_q \leftarrow \mathsf{Closest}(\mathcal{G},q_\text{rand},K)$;
		\ForAll{$q' \in N_q$}		
		\If {$\mathsf{ObstacleFreeExt}(q',q_\text{rand},\bar{\rho})$}
		\If {{$\{ (q',q_\text{rand}) \} \notin \mathcal{E}$}}
		\State $\mathcal{E} \leftarrow \mathcal{E} \cup \{ (q',q_\text{rand}) \} $;
		\State $i \leftarrow i+1$;	
		\EndIf 
		\EndIf
		\EndFor

		\EndWhile
		\EndProcedure
	\end{algorithmic}
\end{algorithm}

{The algorithms follow identical procedures as their original counterparts. KDF-RRT builds a tree $\mathcal{G}$ aiming to connect the initial point $q_1(0)$ to the goal $Q_g$ through the extended free space $\bar{\mathcal{A}}_\text{free}(\bar{\rho})$. It samples random points in $\bar{\mathcal{A}}_\text{free}(\bar{\rho})$ (line 7) and aims to extend $\mathcal{G}$ towards them (lines 7-11). The algorithm ends if $\mathcal{G}$ can be safely connected to the goal $Q_g$ via $\bar{\mathcal{A}}_\text{free}(\bar{\rho})$ (lines 12-16). After the execution of the algorithm, a standard search algorithm can be employed to find the sequence of edges that lead from $q_1(0)$ to $Q_g$ and concatenate them to produce a solution path.
Similarly, KDF-PRM builds a graph $\mathcal{G}$ of $N$ nodes in $\bar{\mathcal{A}}_\text{free}(\bar{\rho})$. It samples random points (line 7) which it aims to connect to $K$ closest points of $\mathcal{G}$ (lines 9-14). After the construction of {KDF-PRM}, requests of multiple queries, each consisting of a starting and a goal point in $\bar{\mathcal{A}}_{\text{free}}(\bar{\rho})$, are attempted to be solved. The query phase attempts to connect the starting and goal points to the same component of the PRM (e.g., using the $\mathsf{ObstacleFreeExt}(\cdot)$ function), and reports failure if it fails. Note that the probability of that succeeding increases with the density of the PRM. More details regarding the RRT and PRT algorithms can be found in the related literature, e.g., \cite{kuffner2000rrt,kavraki1996probabilistic}. The probabilistic completeness of the {KDF-MP} algorithms follows from the probabilistic completeness of their original counterparts (see \cite{verginis21sampling}).}

{Note that both {KDF-MP algorithms} take as input the original free space $\mathcal{A}_\textup{free}$ and the vector $\bar{\rho}$ that forms the extended free space $\bar{\mathcal{A}}_\text{free}(\bar{\rho})$ as in \eqref{eq:extended free space}.  This vector is the connection of the algorithms to the control module of Section \ref{sec:ppc control}, since it consists of the funnel bounds $[\bar{\rho}^\mathfrak{t}_1,\dots,\bar{\rho}^\mathfrak{t}_{n_{tr}},\bar{\rho}^\mathfrak{r}_1,\dots,\bar{\rho}^\mathfrak{r}_{n_r} ]^\top$, and can be chosen by the user. Intuitively, smaller funnel values lead to a larger extended free space $\bar{\mathcal{A}}_\text{free}(\bar{\rho})$ (note that if $\bar{\rho}$ consists of zeros, then $\bar{\mathcal{A}}_\text{free}(\bar{\rho}) = \mathcal{A}_\textup{free}$), giving the chance to navigate through potential narrow passages or with larger distance from the obstacles. Moreover, note that, in order to be able to connect the goal $Q_g$ to the data structure via $\bar{\mathcal{A}}_\text{free}(\bar{\rho})$ (e.g., lines 13-15 of Algorithm \ref{alg:main tree ppc}), the goal itself must belong to $\bar{\mathcal{A}}_\text{free}(\bar{\rho})$. In view of Assumption \ref{ass:path ass}, $Q_g$ belongs to the open set $\mathcal{A}_\textup{free}$. Therefore, by invoking continuity properties of the free space, we conclude that there exists a $\rho_\varepsilon \in\mathbb{R}^{n_{tr}+n_r}$ such that $Q_g \in \bar{\mathcal{A}}_\text{free}(\rho_{\varepsilon})$\footnote{{Since the free space $\mathcal{A}_\textup{free}$ and the goal configuration $Q_g$ are known,  such a $\rho_\varepsilon$ can be explicitly found.}}. Hence, by choosing $\bar{\rho}$ such that $\rho_\varepsilon \succeq \bar{\rho}$, one can achieve $\bar{\mathcal{A}}_\text{free}(\rho_{\varepsilon}) \subseteq \bar{\mathcal{A}}_\text{free}(\bar{\rho})$ and hence $Q_g \in \bar{\mathcal{A}}_\text{free}(\bar{\rho})$. As stressed before, however, tight funnels might need excessively large control inputs that might not be realizable by real actuators. Therefore, one must take into account the capabilities of the system when choosing $\bar{\rho}$ and the funnel functions of Section \ref{sec:ppc control}, as mentioned in Remark \ref{rem:funnel}. If it is not possible to select $\bar{\rho}$ such that $\rho_\varepsilon \succeq \bar{\rho}$ (e.g., if the goal $Q_g$ is too close to an obstacle), then one can consider a new goal $Q_g'$ that is close to $Q_g$ and belongs to $\bar{\mathcal{A}}_\textup{free}(\bar{\rho})$. That is, $Q_g'\coloneqq \arg\min_{q \in \mathbb{A}} d_\mathbb{T}(q,Q_g)$, where $\mathbb{A}$ is a compact subset of $\bar{\mathcal{A}}_\textup{free}(\bar{\rho})$ for a chosen $\bar{\rho}$.
}

The control protocol of Section \ref{sec:ppc control} guarantees tracking of a \textit{time-varying} smooth (at least $k$-times continuously differentiable) trajectory $q_\textup{d}(t)$, whereas the output of a {KDF-MP} algorithm (e.g., Algorithms \ref{alg:main tree ppc}, \ref{alg:prm ppc}) is a path, i.e., a sequence of points in $\mathbb{T}$. Therefore, we endow the latter with a time behavior, as follows. 

The output path is first converted to a smooth (at least $k$-times continuously differentiable) one. This is needed to smoothly interpolate the connecting points of the consecutive edges of the solution path that is obtained from the tree of Algorithm \ref{alg:main tree ppc}. {This smoothening procedure must be performed in accordance to the extended free space $\bar{\mathcal{A}}_{\text{free}}(\bar{\rho})$, so that the smoothed path still belongs in $\bar{\mathcal{A}}_{\text{free}}(\bar{\rho})$. }
Time constraints are then enforced on the smooth path to create a timed trajectory $q_\textup{d}:[0,t_f] \to \bar{\mathcal{A}}_\textup{free}(\bar{\rho})$, for some $t_f > 0$, which is then given to the control protocol of Section \ref{sec:ppc control} as the desired trajectory input. Note that $q_\textup{d}(0)$ must satisfy the funnel constraints \eqref{eq:funnel constraints}. 

\begin{remark} \label{rem:line segment}
{It is also possible to use the output path of the KDF motion-planning algorithm without any  post-processing steps, i.e., the {raw segments} that correspond to the edges of the respective data structure (tree, graph). Each one of these segments can be endowed with time constraints, as well as separate funnel functions. The control algorithm of Section \ref{sec:ppc control} is then applied separately for these segments, possibly with discontinuities at the connecting points. Although one avoids the use of post-processing steps on the output path, such discontinuities might be problematic for the actuators and might jeopardize the safety of the system.}
\end{remark}

\begin{remark}
	Note that the duration of the resulting trajectory $t_f$, and hence the respective velocity $\dot{q}_\textup{d}$ can be a prior chosen by a user and hence the robotic system can execute the path in a predefined time interval. There is also no constraint on this duration, since the control protocol of Section \ref{sec:ppc control} guarantees funnel confinement with respect to any arbitrarily fast time trajectory. Nevertheless, it should be noted that the physical limits of the system's actuators prevent the achievement of any time trajectory, and the latter should be properly defined in accordance to any such limits, {similarly to the selection of the control gains and the funnel characteristics (see Remarks \ref{rem:gains} and \ref{rem:funnel}).}
\end{remark}


{Algorithm \ref{alg:KDF} provides the overall KDF framework, including the {KDF-MP} algorithm, the conversion to a time-varying trajectory, and the application of the funnel control algorithm of Section \ref{sec:ppc control}. The algorithm extracts first the bounds $\bar{\rho} =$ $[\bar{\rho}^\mathfrak{t}_1,\dots,\bar{\rho}^\mathfrak{t}_{n_{tr}},\bar{\rho}^\mathfrak{r}_1,\dots,\bar{\rho}^\mathfrak{r}_{n_r} ]^\top$ (line 2) which are used in a {KDF-MP} (e.g., KDF-RRT or KDF-PRM as in Algorithms \ref{alg:main tree ppc},\ref{alg:prm ppc}), along with the free space $\mathcal{A}_\textup{free}$, and other potential arguments such as the goal $Q_g$ or a desired number of nodes $N$ (line 3). The output path is converted to a smooth time-varying trajectory with the desired duration $t_f$ (line 4), which is then tracked by the system using the funnel control algorithm (line 5).}

\begin{algorithm}
	\caption{KDF}\label{alg:KDF}
	\begin{algorithmic}[1]
		\Require{$\mathcal{A}_\textup{free}$, $Q_g$, $N$, $t_f$,  $k$, $q_1(0)$, $q_\textup{d}$, $\rho^\mathfrak{t}_j$, $\rho^\mathfrak{r}_\ell$, $K^\mathfrak{t}$, $K^\mathfrak{r}$, $K_i$, $j\in\{1,\dots,n_{tr}\}$, $\ell \in \{1,\dots,n_r\}$ }
		\Ensure{$u(t)$}
		\Procedure{KDF}{}		
		\State $\bar{\rho} \leftarrow \mathsf{Bounds}(\rho^\mathfrak{t}_j, \rho^\mathfrak{r}_\ell)$;
		\State $\mathbf{q}_p \leftarrow \textup{KDF}-\mathsf{MP}(\bar{\rho},\mathcal{A}_\textup{free},  Q_g, q_1(0), N)$;
		\State $q_\textup{d} \leftarrow \mathsf{TimeTraj}(\mathbf{q}_p,t_f)$;
		\State $u \leftarrow \mathsf{FunnelControl}(t_f, k, q_\textup{d}, \rho^\mathfrak{t}_j, \rho^\mathfrak{r}_\ell, K^\mathfrak{t}, K^\mathfrak{r}, K_i)$;		
		\EndProcedure
	\end{algorithmic}
\end{algorithm}

%

\begin{remark}[\textbf{Collision Checking in $\bar{\mathcal{A}}_\text{free}(\bar{\rho})$}]	 \label{rem:coll check} 

{The proposed feedback control scheme guarantees that $q(t) \in \mathcal{P}( q_\textup{d}(t), \bar{\rho})$ for any trajectory $q_\textup{d}(t)$, formed by the {several line segments $X_\text{Line}$ that connect the nodes in $\mathcal{V}$ sampled in {Algorithm} \ref{alg:main tree ppc}}. 
Therefore, checking whether the points $q_s \in X_\text{Line}$ belong to $\mathcal{A}_\text{free}$, as in standard motion planners \cite{kavraki1996probabilistic,karaman2010incremental}, is not sufficient. For each such point $q_s \in X_\text{Line}$, one must check whether $z \in \mathcal{A}_\text{free}$, $\forall z \in \mathcal{P}( q_s, \bar{\rho})$, which is equivalent to checking if $q_s \in \bar{\mathcal{A}}_\text{free}(\bar{\rho})$. For simple robotic structures, like, e.g., mobile robots or UAV (see Section \ref{subsec:computer sim}), whose volume can be bounded by convex shapes, one can enlarge this or the obstacles' volume by $\bar{\rho}$ and perform the collision checking procedure on the remaining free space. However, more complex structures e.g., robotic manipulators (see Section \ref{sec:exps}), necessitate a more sophisticated approach, since they can assume nonconvex complex shapes in various configurations. 
For such systems there are two {steps} one could follow. Firstly, for each $q_s$, a finite number of points $z$ can be sampled from a uniform distribution in $\mathcal{P}( q_s, \bar{\rho})$ and separately checked for collision. {For} a sufficiently high number of such samples, and assuming a certain {``fat"-structure of the workspace obstacles (e.g., there are no long and skinny obstacles such as wires, cables and tree branches, etc., see (\cite{van1993complexity}) for more details)}, this approach can be considered to be complete, i.e., the resulting path will belong to the extended free space $\mathcal{A}_\text{free}$.  
Secondly, we calculate the limit poses of each link of the robot, based on the lower and upper bounds by the joints that affect it, as defined by $\bar{\rho}$. Subsequently, we compute the convex hull of these limit poses, which is expanded by an appropriate constant to yield an over-approximation of the swept volume of the potential motion of the link, as described in \cite{schulman2014motion}. The resulting shape is then checked for collisions for each link separately.}
\end{remark}

\begin{figure}
	\centering
	\includegraphics[width=.45\textwidth]{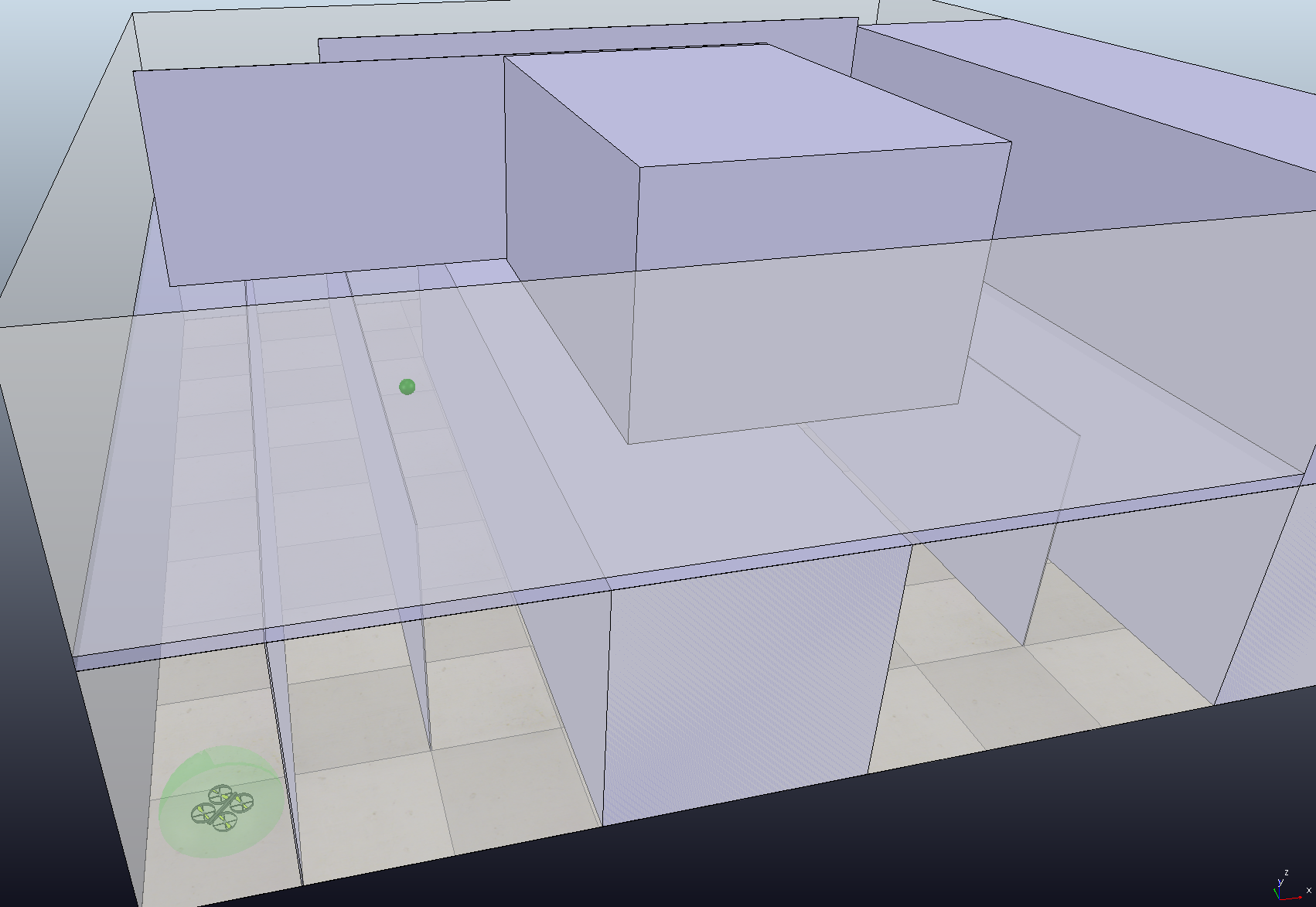}
	\caption{{The obstacle-cluttered 3D workspace and the starting position of the UAV, along with its augmented volume (green sphere around the UAV) to account for $\bar{\mathcal{A}}_\text{free}(\bar{\rho})$, and the goal configuration (smaller green sphere).} }
	\label{fig:uav}
\end{figure}

\section{Experimental Results} \label{sec:exps}

This section is devoted to experimental results that validate the theoretical findings. Firstly, we present computer simulation results from the application {of the KDF framework} to a UAV moving in $\mathbb{R}^3$, as well as a UR5 robot, in obstacle-cluttered environments. {We use KDF-RRT as the motion planner and we compare the efficiency with a standard geometric and kinodynamic RRT algorithms}.

Secondly, we present experimental results using the {KDF framework on a $6$DOF HEBI manipulator}. We compare the performance of the proposed funnel control algorithm with our previous work \cite{verginis21sampling} as well as a standard PID controller.

\subsection{Computer Simulations} \label{subsec:computer sim}

We apply here the {KDF framework by using a KDF-RRT motion planner and the funnel control algorithm presented in Sections \ref{sec:motion planner} and \ref{sec:ppc control}}, respectively, in two computer simulated scenarios by using the V-REP robotic simulator \cite{Vrep}. In both cases, the KDF-RRT was implemented using the algorithms of the OMPL library \cite{OMPL}, which was appropriately interfaced with V-REP. The control algorithm was implemented via a ROS node in MATLAB environment, communicating with the V-REP scenes using ROS messages at a frequency of 100kHz. The V-REP scenes were updated at a frequency of 1kHz. \\

\underline{\textbf{Unmanned Autonomous Vehicle}}\\

The first case consists of a UAV moving in an obstacle-cluttered $3$D space, as shown in Fig.  \ref{fig:uav}. In order to comply with the dynamic model of Section \ref{sec:pf}, we view the UAV as a fully actuated rigid body with dynamics 
\begin{subequations} \label{eq:sim dynamics}
\begin{align} 
	&\dot{q}_1 = f_1(q_1,t) + g_1(q_1,t)q_2 \\
	&\dot{q}_2 = f_2(q_1,q_2,t) + g_2(q_1,q_2,t)u
\end{align}
\end{subequations}
where $q_1=[q_1^\mathfrak{t},q_2^\mathfrak{t},q_3^\mathfrak{t}]^\top$, $q_2$ $\in\mathbb{R}^3$ are the linear position and velocity of the UAV, $u$ is the $3$D force, acting as the control input, and $f_1$, $g_1,$ $f_2$, $g_2$ are unknown functions satisfying Assumption \ref{ass:dynamics}. The incorporation of standard underactuated UAV dynamics in the proposed framework consists part of our future work. 

\begin{figure}[t]
	\centering
	\includegraphics[width=.5\textwidth]{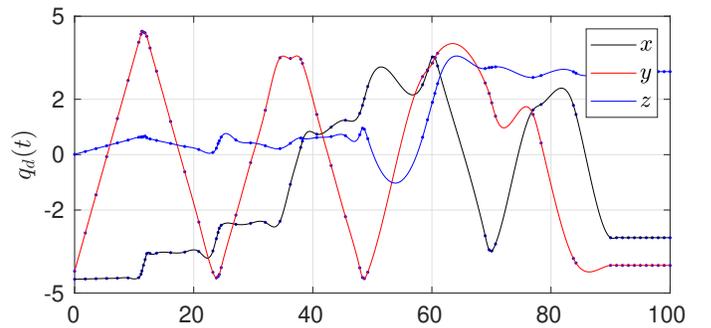}
	\caption{ {The 3D (x,y,z) output path  of the KDF-RRT algorithm (blue points), along with the smoothened time-varying trajectory, in meters, for the UAV scenario.} }
	\label{fig:q_d_3D}
\end{figure}

The UAV aims to navigate safely to a goal position $Q_g = [-3,-4,3]^\top$, starting from $q_1(0) = [-3.5,-4,0.01]^\top$ within $20$ seconds and space bounds $(-5,5)$, $(-5,5)$, $(0,4)$ in x-, y-, and z-dimensions respectively (the x- and y- dimensions correspond to the floor dimensions of Fig. \ref{fig:uav}). For the safe path tracking, we choose the exponentially decaying funnel functions $\rho^\mathfrak{t}_1(t)= \rho^\mathfrak{t}_2(t)=\rho^\mathfrak{t}_3(t) = \rho(t) \coloneqq 0.15\exp(-0.1t) + 0.05$ $\in[0.05,0.2]$ (meters), implying $\bar{\rho} = 0.2[1,1,1]^\top$, as well as 
$\rho_{2_j}(t) \coloneqq (\max\{2|e_{2_j}(0)|,0.5\}-0.1)\exp(-0.1t) + 0.1$ $\in[0.1,\max\{2|e_{2_j}(0)|,0.5\}]$ (meters/second) for $j\in\{1,\dots,3\}$. Hence, the minimum distance from the obstacles and the path output by the {KDF-RRT} algorithm must be larger than $0.2$ meters. In the conducted simulation, this was achieved by enlarging the radius of the UAV volume sphere  by $0.2$, which was then checked for collision (see the green sphere in Fig. \ref{fig:uav}).

The obtained path consists of $50$ points in $\bar{\mathcal{A}}_\text{free}(\bar{\rho})$ and is converted to a smooth time trajectory as follows. 
We construct $q_\text{d}:[0,100] \to \mathbb{R}^3$, such that $q_\text{d}(0) = q_1(0)$ and $q_\text{d}(t) = Q_g$, for $t\in[90,100]$, using a standard fitting procedure. 
For the construction of $q_\textup{d}(t)$, each pair of two path points $\mathsf{h}=[\mathsf{h}_1,\mathsf{h}_2,\mathsf{h}_3]^\top$, $\mathsf{w}=[\mathsf{w}_1,\mathsf{w}_2,\mathsf{w}_3]^\top$ $\in\bar{\mathcal{A}}_\text{free}(\bar{\rho})$  is endowed with a time duration proportional to their distance, i.e., equal to $\frac{1}{90}\max_{i\in\{1,2,3\}}\{ |\mathsf{h}_i - \mathsf{w}_i| \}$. 
{For the specific control and scene update frequencies and chosen funnels, the control gains that yield satisfactory behavior (reasonable control inputs and avoidance of oscillations) were found via offline tuning to be $K^\mathfrak{t} = 2I_3$ and $K_2 = 35I_3$. } 


The signals of the resulting motion of the UAV for the two different cases are depicted in Figs. \ref{fig:errors_UAV_smooth} and \ref{fig:distance_u_UAV_smooth}. In particular, Fig. \ref{fig:errors_UAV_smooth} shows the evolution of the errors $e^\mathfrak{t}_j(t)$, $e_{2_m}(t)$ {(in meters and meters/second, respectively)} along with the respective funnel functions $\rho^\mathfrak{t}_j(t)$, $\rho_{2_m}(t)$, 
It can be verified that the errors always respect the respective funnels, guaranteeing thus the successful execution of the respective trajectories. 
Moreover, Fig. \ref{fig:distance_u_UAV_smooth} 
depicts the distance of the UAV from the obstacles $D_{UAV}(t)$ (in meters) as well as the resulting control inputs {$u = [u_1,u_2,u_3]^\top$ for the three spatial dimensions (in Newton). Although the output path was smoothened without taking into account the extended free space $\bar{\mathcal{A}}_\text{free}(\bar{\rho})$, the UAV was able to successfully navigate to the goal configuration safely. Moreover, the funnel controller produced reasonable control inputs, without  excessive oscillations or extreme magnitude.}\\

\begin{figure}
	\centering
	\includegraphics[width=.5\textwidth]{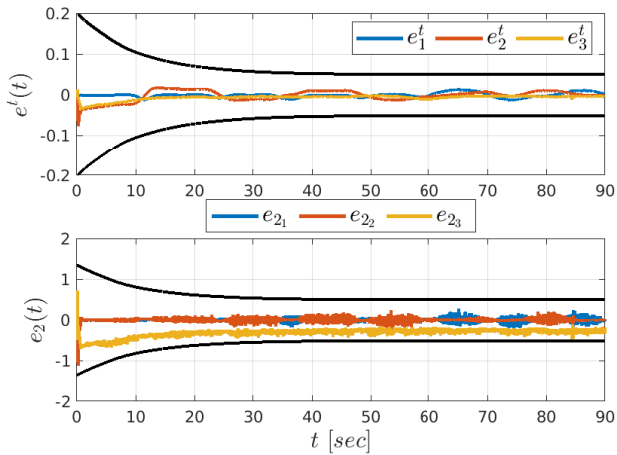}
	\caption{{Top: the evolution of the errors $e^\mathfrak{t}_1(t)$, $e^\mathfrak{t}_2(t)$, $e^\mathfrak{t}_3(t)$ (in meters), along with the funnels $\rho^\mathfrak{t}_1(t) = \rho^\mathfrak{t}_2(t) = \rho^\mathfrak{t}_3(t)$, shown in black, for $t \in [0,90]$ seconds. Bottom: the evolution of the velocity errors $e_{2_1}(t)$, $e_{2_2}(t)$, $e_{2_3}(t)$ (in meters/second), along with the funnels $\rho_{2_1}(t) = \rho_{2_2}(t) = \rho_{2_3}(t)$, shown in black, for $t \in [0,90]$ seconds.} }
	\label{fig:errors_UAV_smooth}
\end{figure}

\begin{figure}
	\centering
	\includegraphics[width=.5\textwidth]{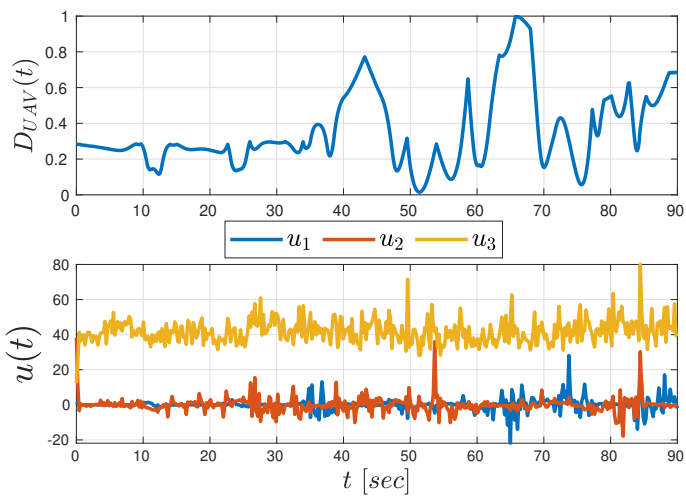}
	\caption{{Top: the distance $D_{UAV}(t)$ (in meters) of the UAV from the obstacles for $t \in [0,90]$ seconds. Bottom: the evolution of the control inputs $u_1(t), u_2(t), u_3(t)$ (in Newton) for the three spatial dimensions and $t \in [0,90]$ seconds.}}
	\label{fig:distance_u_UAV_smooth}
\end{figure}

%

\underline{\textbf{UR5 Robotic Manipulator}}\\

\begin{figure}
	\centering
	\includegraphics[width=.45\textwidth]{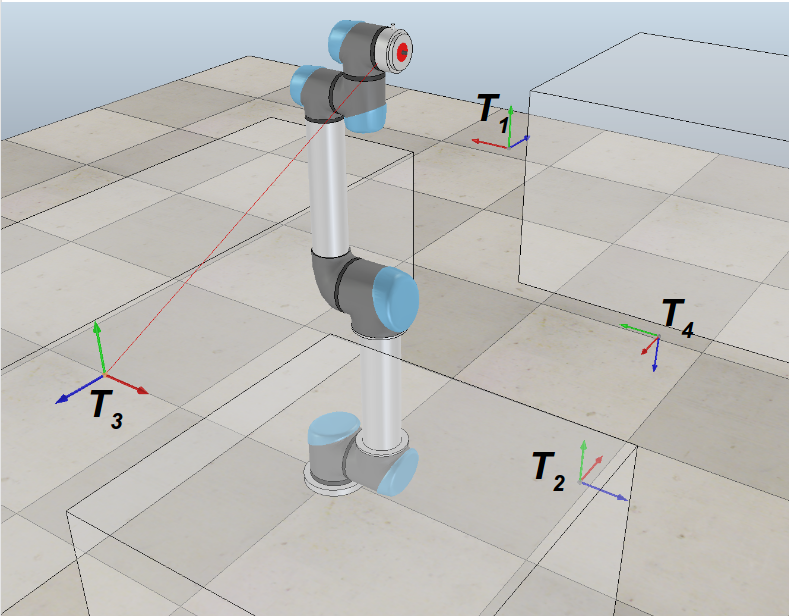}
	\caption{{The initial configuration of the UR5 robot in an obstacle-cluttered environment with four targets.}}
	\label{fig:ur5}
\end{figure}

The second case consists of a UR5 $6$DOF robotic manipulator, whose dynamics are considered to have the form \eqref{eq:sim dynamics} and whose end-effector aims to sequentially navigate to four points in $\mathbb{R}^6$ (position, orientation), as pictured in Fig. \ref{fig:ur5}. By using inverse kinematics algorithms, we translate these points to desired points for the joint variables of the manipulator, which are then used in a sequential application of the proposed scheme. We consider here that the base joint of the manipulator operates in the unit circle $[0,2\pi)$, whereas the rest of the joints operate in  $[-\pi,\pi] \subset \mathbb{R}$ defined by mechanical and structural limits, resulting in $q_1 = [q_{1_1},\dots,q_{1_6}]^\top = [q^\mathfrak{t}_1,\dots,q^\mathfrak{t}_5,q^\mathfrak{r}_1]^\top$, based on the notation of Section \ref{sec:pf}. 

We consider that the robot end-effector has to sequentially navigate from its initial configuration $q_0 = [0,0,0,0,0,0]^\top$ to the following four target points (shown in Fig. \ref{fig:ur5}).
\begin{itemize}
	\item Target 1: $T_1 = [-0.15,-0.475,0.675]^\top$ and Euler-angle orientation $[\frac{\pi}{2},0,0]^\top$, which yields the configuration $q_1 = [-0.07, -1.05, 0.45, 2.3, 1.37, -1.33]^\top$.
	\item Target 2: $T_2 = [-0.6,0,2.5]^\top$ and Euler-angle orientation $[0,-\frac{\pi}{2},-\frac{\pi}{2}]^\top$, which yields the configuration $q_2 = 
	[1.28, 0.35, 1.75, 0.03, 0.1, -1.22]^\top$
	\item Target 3: $T_3 = [-0.025,0.595,0.6]^\top$ and Euler-angle orientation $[-\frac{\pi}{2},0,\pi]^\top$, which yields the configuration $q_3 = [-0.08, 0.85, -0.23, 2.58, 2.09, -2,36]^\top$ 
	\item Target 4: $T_4 = [-0.525,-0.55,0.28]^\top$ and Euler-angle orientation $[\pi,0,-\frac{\pi}{2}]^\top$, which yields the configuration $q_4 = [-0.7, -0.76, -1.05, -0.05, -3.08, 2.37]\top$
\end{itemize} 

{The target points where chosen such that they yield increasing difficulty with respect to the navigation path of the robot.}
The paths for each pair are computed on the fly {using the KDF-RRT algorithm}, after the manipulator reaches 
each target. For the safe tracking of the four output paths, we choose the funnel functions such that $\bar{\rho} = 0.01[1,15,15,15,15,15]$, as will be elaborated later. 
Regarding the collision checking in $\bar{\mathcal{A}}_\text{free}(\bar{\rho})$ of KDF-RRT, {we check a finite number of samples around each point of the resulting path for collision.}
We run {KDF-RRT} with $10$ and $50$ such samples and we compared the results to a standard {geometric} RRT algorithm in terms of time per number of nodes. The results for {$30$ runs} of the algorithms are given in Figs. \ref{fig:basic_times}-\ref{fig:basic_nodes} for the four paths, in logarithmic scale. One can notice that the average nodes created do not differ significantly among the different algorithms. As expected, however, {KDF-RRT} requires more time than the standard {geometric} RRT algorithm, since it checks the extra samples in $\bar{\mathcal{A}}_\text{free}(\bar{\rho})$ for collision. One can also notice that the time increases with the number of samples. However, more samples imply greater coverage of $\bar{\mathcal{A}}_\text{free}(\bar{\rho})$ and {hence the respective solutions are more likely to be complete with respect to collisions.}

\begin{figure}[t]
	\centering
	\subcaptionbox{}
	{\includegraphics[width = 0.24\textwidth]{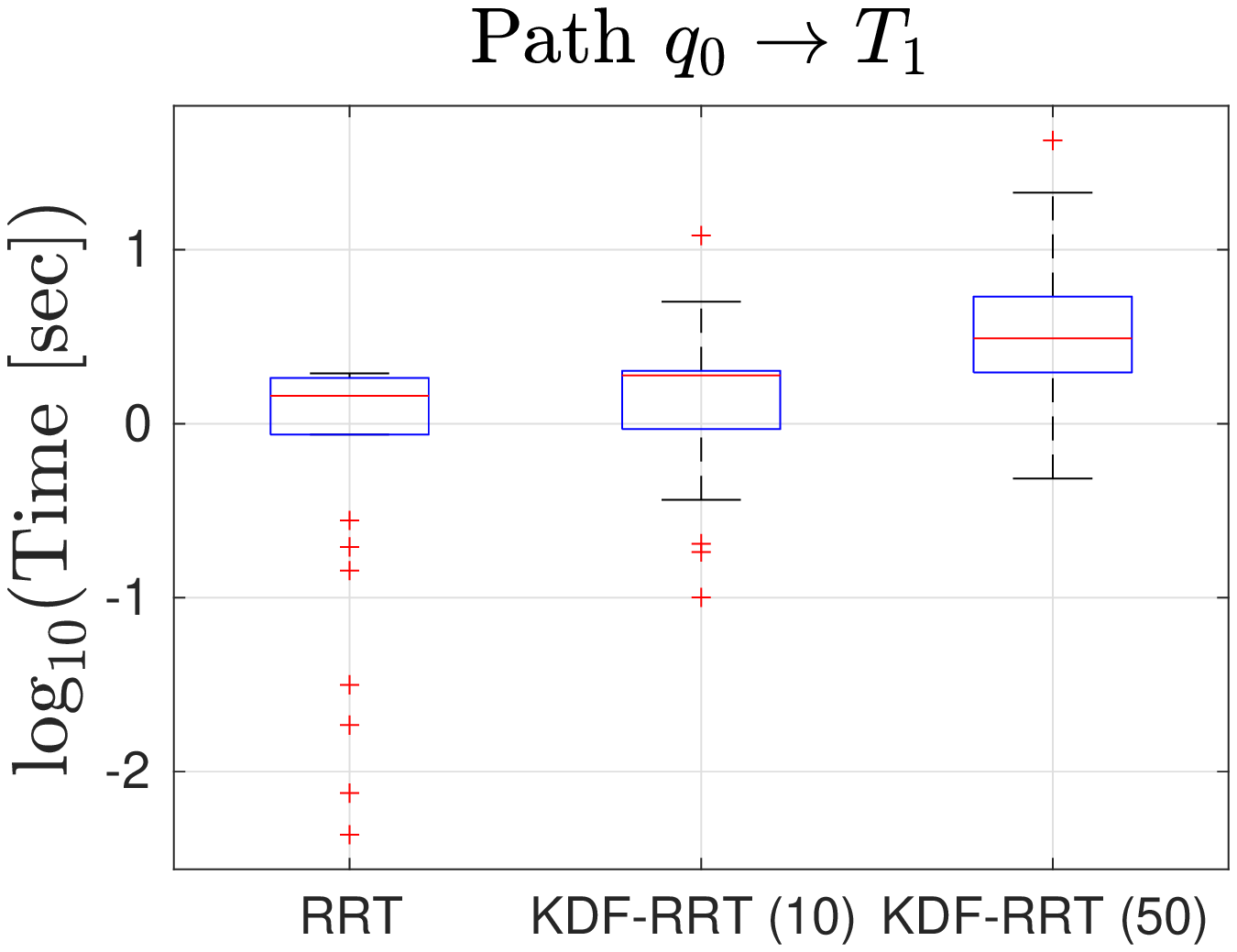}}
	\subcaptionbox{}
	{\includegraphics[width = 0.24\textwidth]{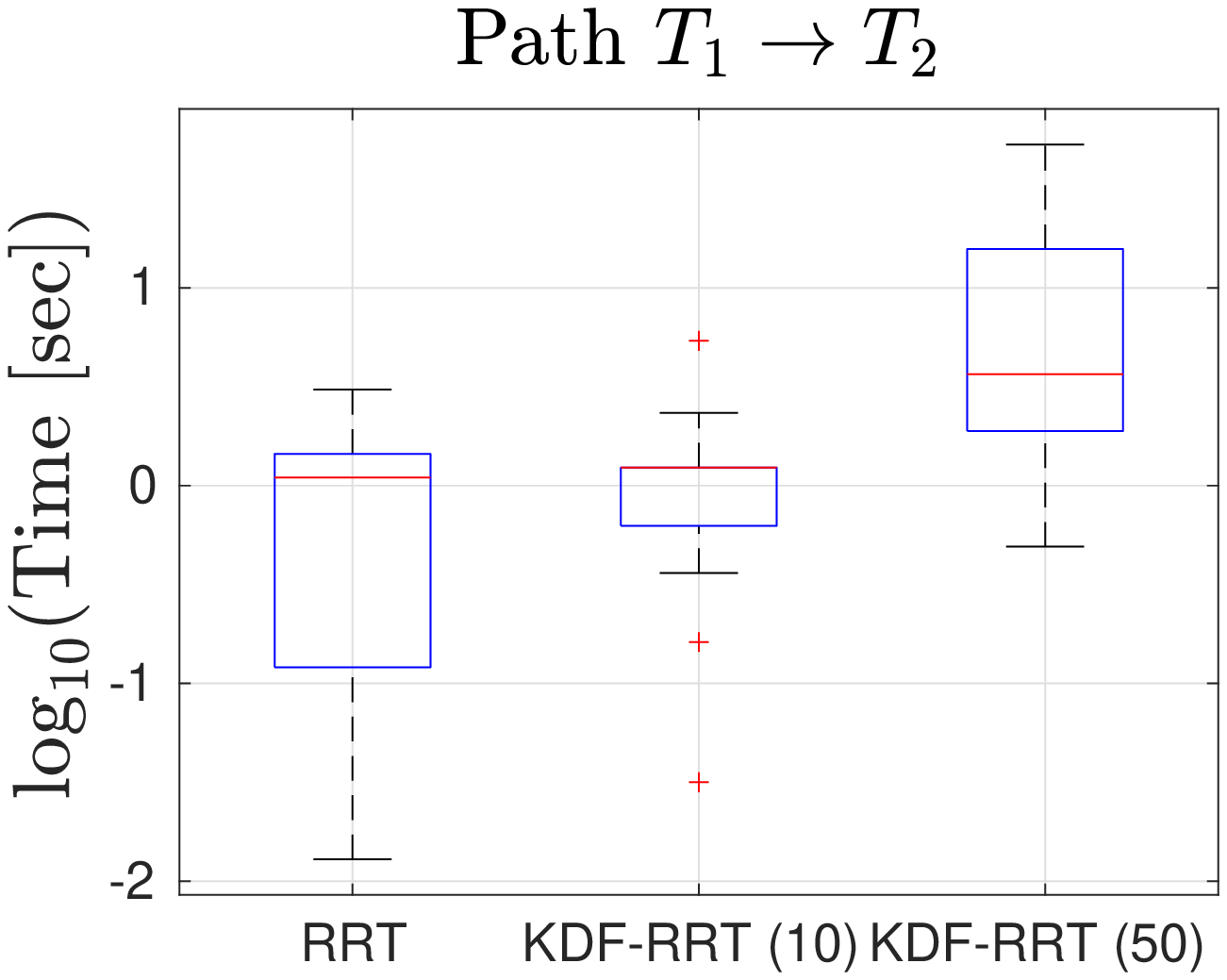}}
	\subcaptionbox{}
	{\includegraphics[width = 0.24\textwidth]{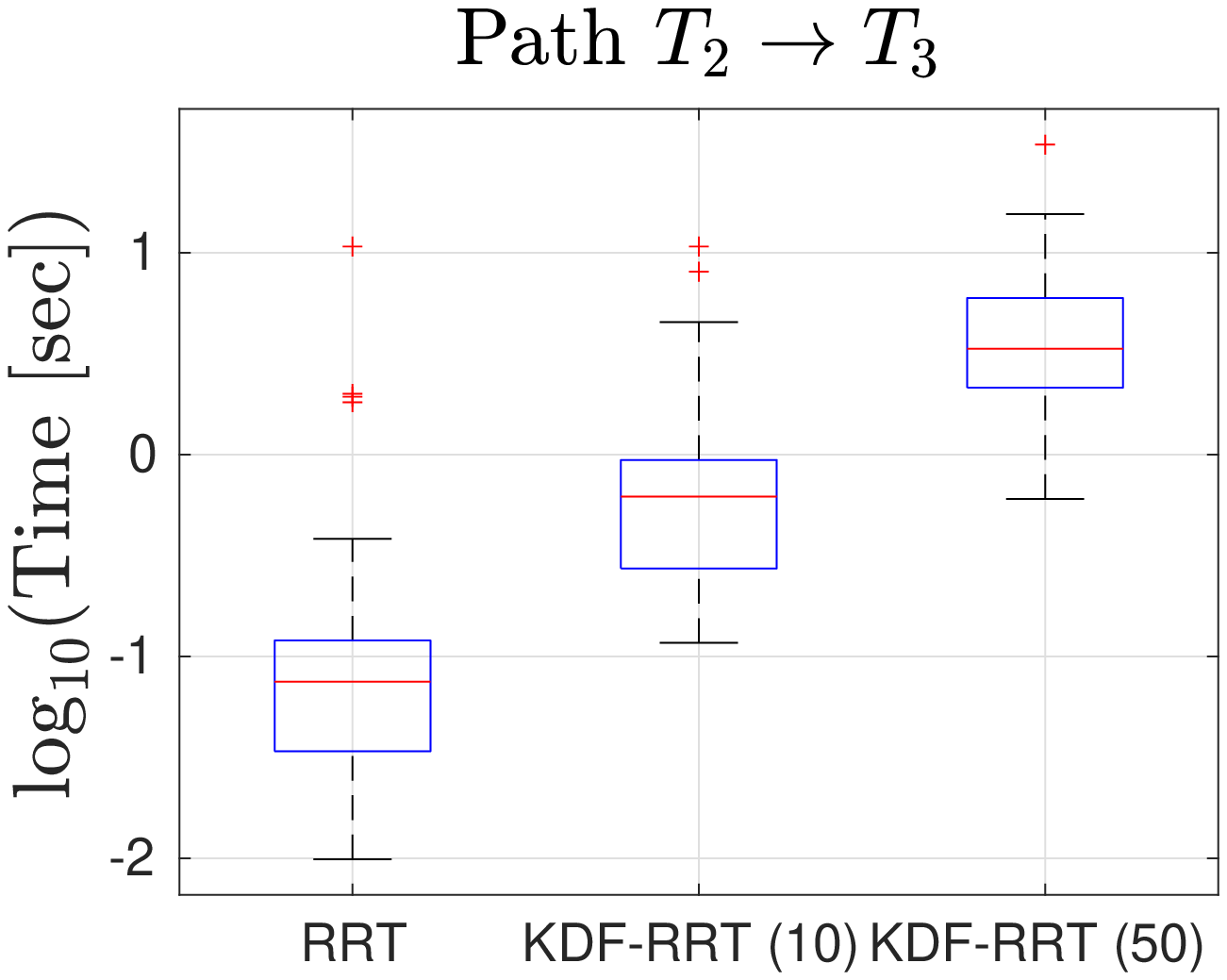}}
	\subcaptionbox{}
	{\includegraphics[width = 0.24\textwidth]{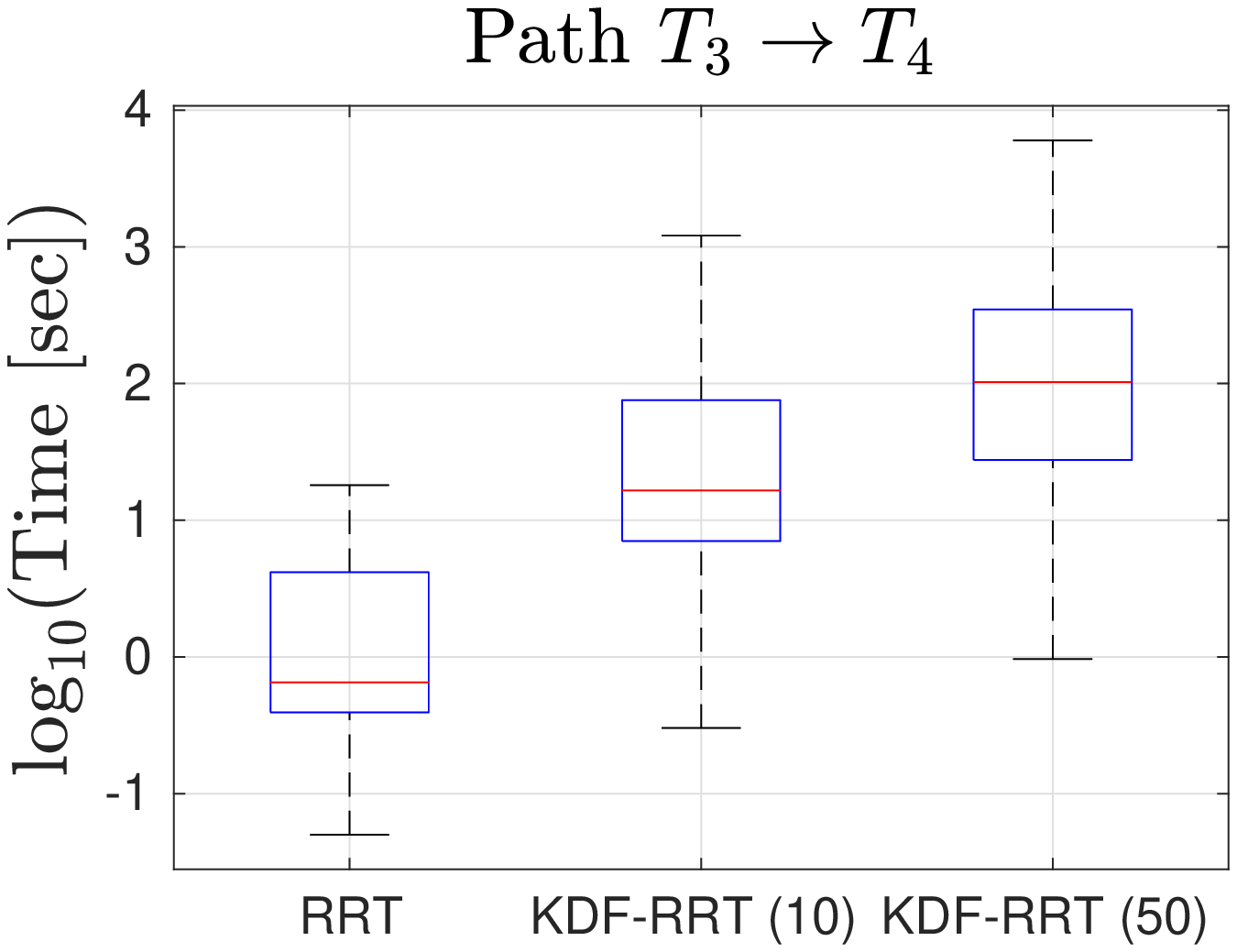}}
	\caption{Box plots showing the execution time of the three algorithms (in logarithmic scale) for the four paths; '+' indicate the outliers. } \label{fig:basic_times}
\end{figure}

{Since, in contrast to the standard geometric RRT, KDF-RRT implicitly takes into account the robot {dynamics \eqref{eq:sim dynamics}} through the  designed tracking control scheme and the respective extended free space $\bar{\mathcal{A}}_\text{free}(\bar{\rho})$, we compare the results to a standard kinodynamic RRT algorithm that simulates forward the robot dynamics, assuming {known} dynamical parameters. In particular, we run the algorithm only for the first two joints, with initial configuration $[0,0]^\top$ and a {randomly chosen goal} configuration at $[-\frac{\pi}{18},\frac{\pi}{4}]^\top$ rad, while keeping the other joints fixed at $0$. For the forward simulation of the respective dynamics we chose a sampling step of $10^{-3}$ sec and total simulation time $30$ sec for each constant control input. The termination threshold distance was set to $0.25$ (with respect to the distance $d_\mathbb{T}$), i.e., the algorithm terminated when the forward simulation reached a configuration closer than $0.25$ units to the goal configuration. 	
{The results for 10 runs of the algorithm are depicted in Fig. \ref{fig:kinod_time_nodes}, which provides the execution time and number of nodes created in logarithmic scale. Note that, even for this simple case (planning for only two joints), the execution time is comparable to the KDF-RRT case of $50$ samples in the fourth path scenario $q_3 \to q_4$.} Running the kinodynamic RRT for more than two joints resulted in unreasonably large execution times {(more than 1 hour)} and hence they are not included in the results. 
{This can be attributed to the randomized inputs and complex robot dynamics; the bias-free random sample of constant inputs and forward simulation of the complex robot dynamics requires a significant amount of time to sufficiently explore the $12$-dimensional state space. One may argue that other accelerated algorithms can be used (e.g., BIT \cite{gammell2015batch}), however it is not trivial to reduce such long running time.}}

\begin{figure}[t]
	\centering
	\subcaptionbox{}
	{\includegraphics[width = 0.24\textwidth]{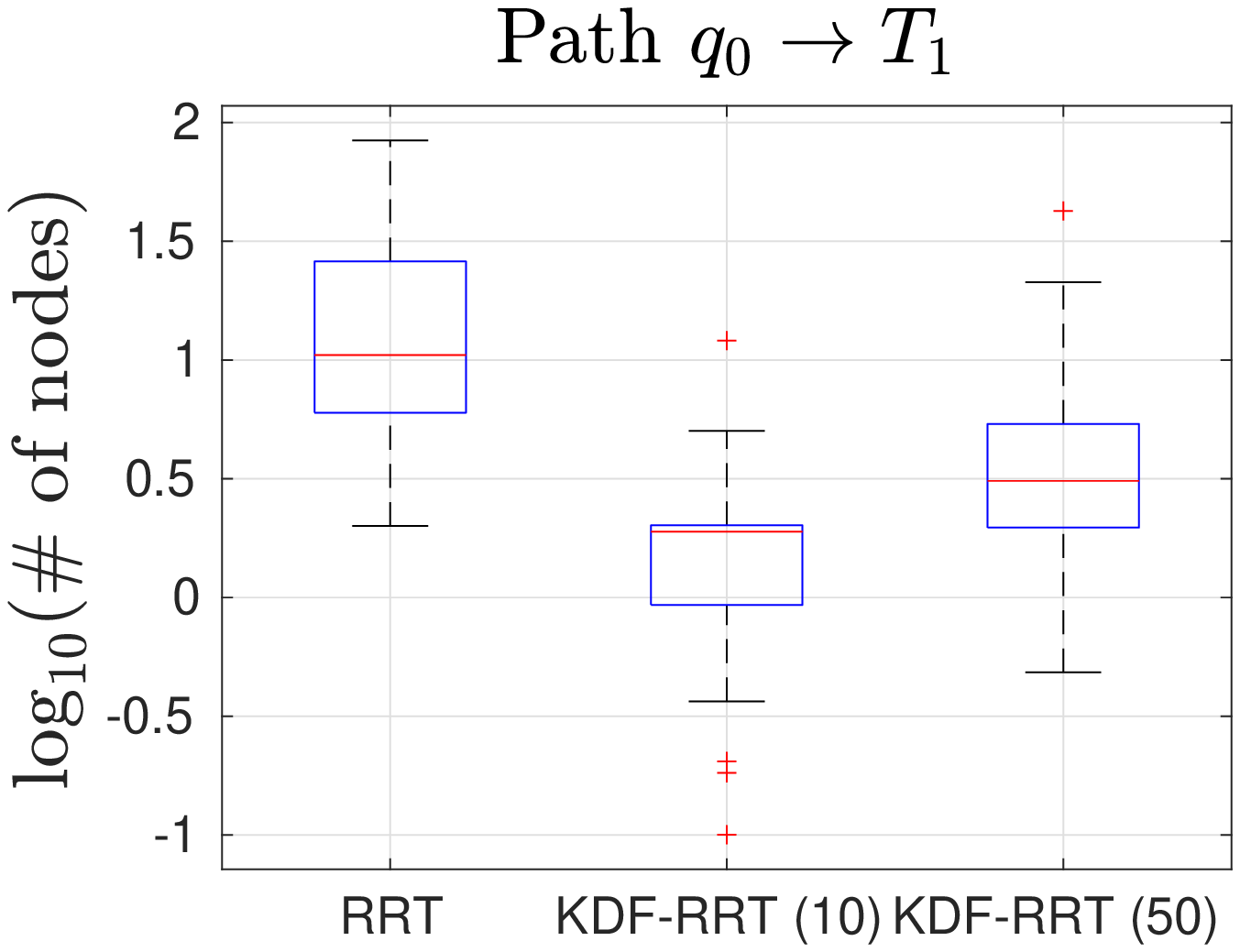}}
	\subcaptionbox{}
	{\includegraphics[width = 0.24\textwidth]{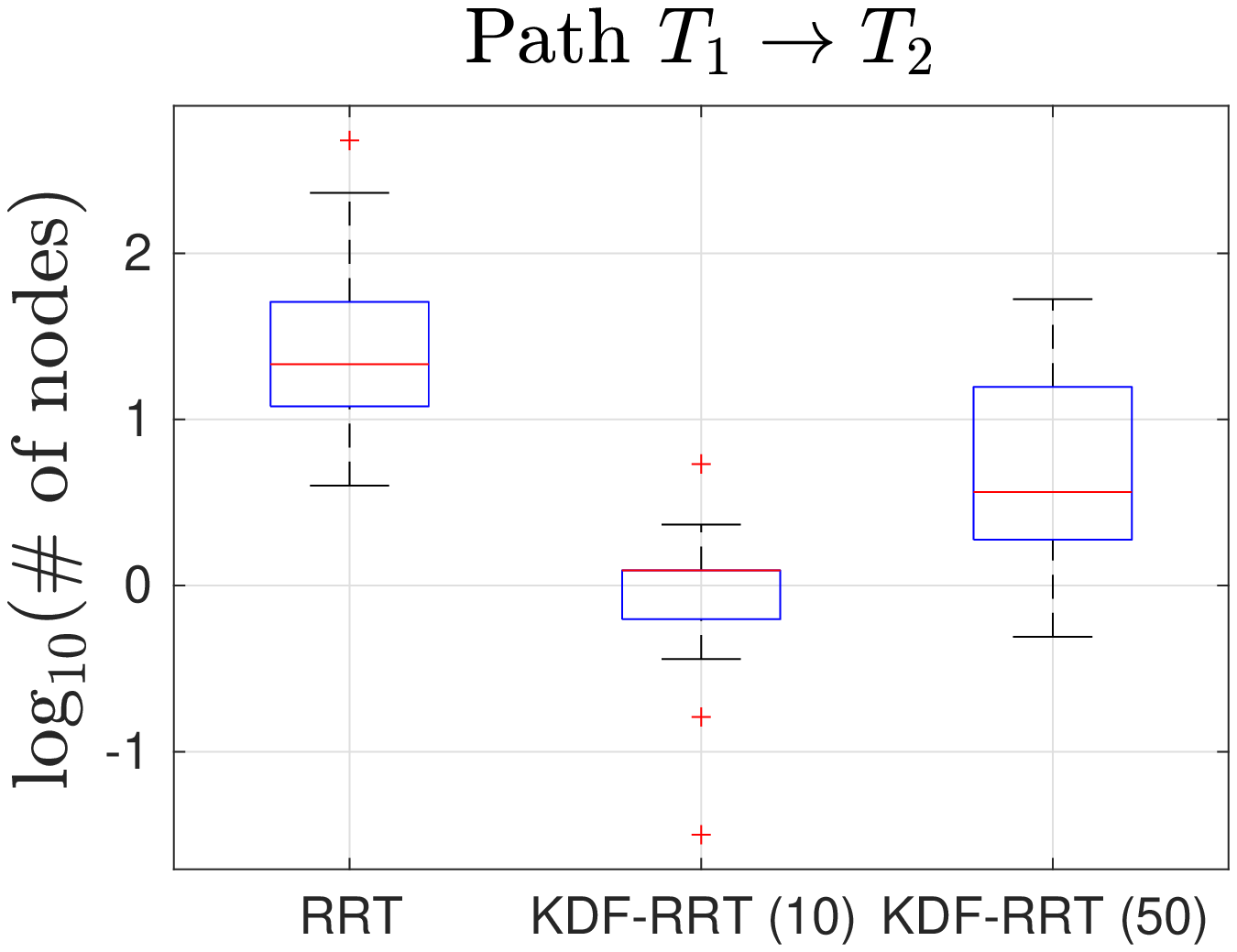}}
	\subcaptionbox{}
	{\includegraphics[width = 0.24\textwidth]{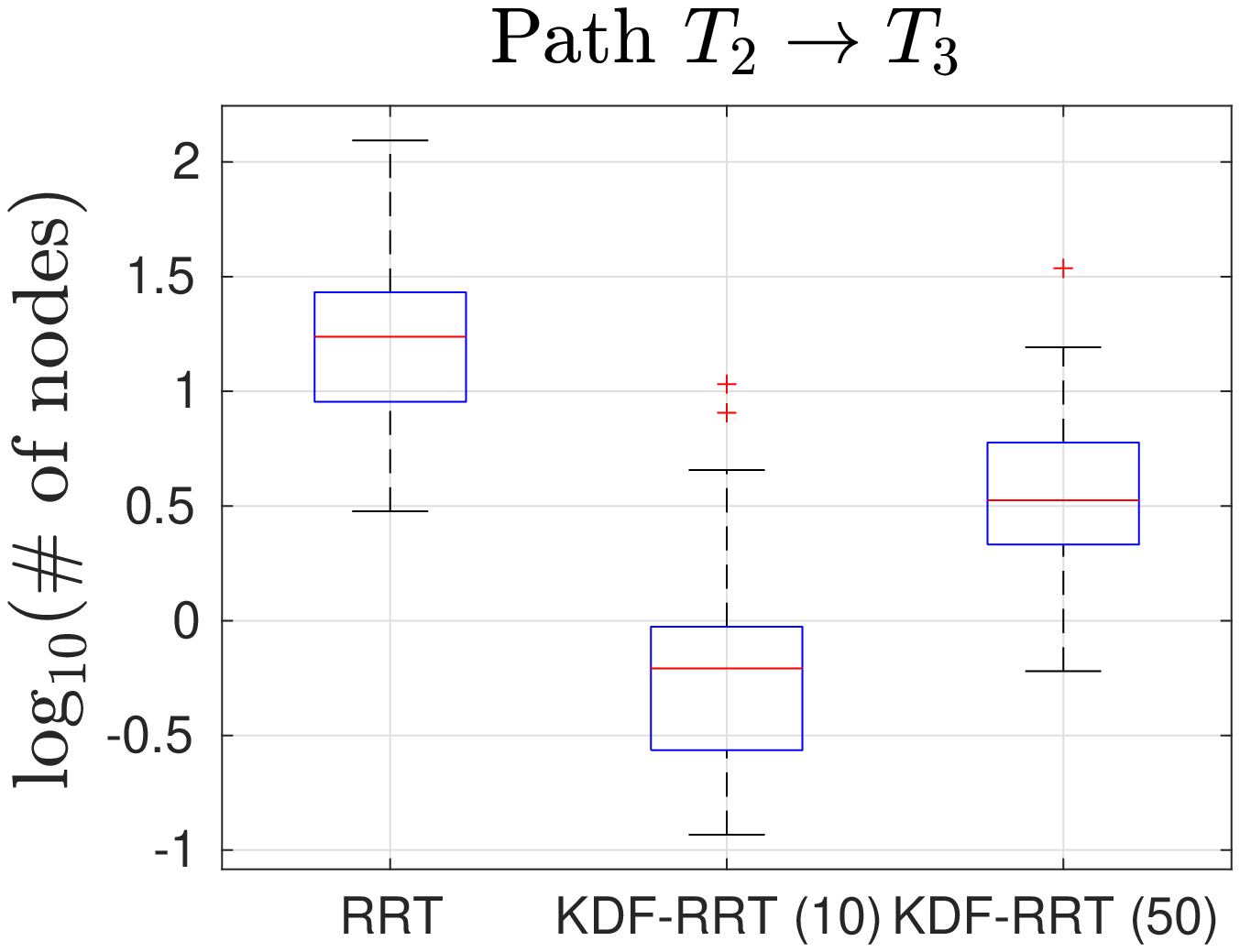}}
	\subcaptionbox{}
	{\includegraphics[width = 0.24\textwidth]{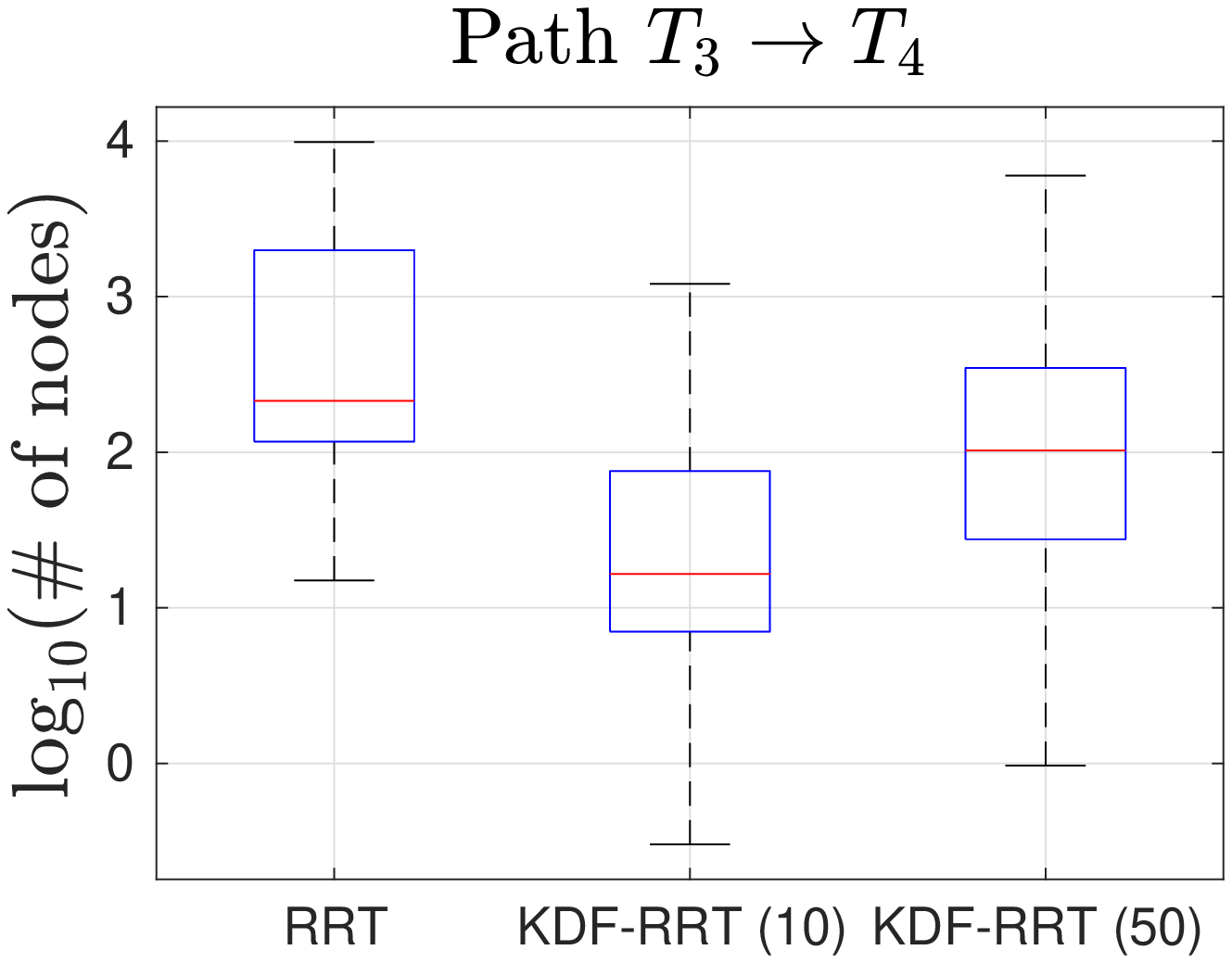}}
	\caption{Box plots showing the number of nodes created in the three algorithms in logarithmic scale for the four paths; '+' indicate the outliers. } \label{fig:basic_nodes}
\end{figure}

\begin{figure}[t]
	\centering
	\subcaptionbox{}
	{\includegraphics[width = 0.24\textwidth]{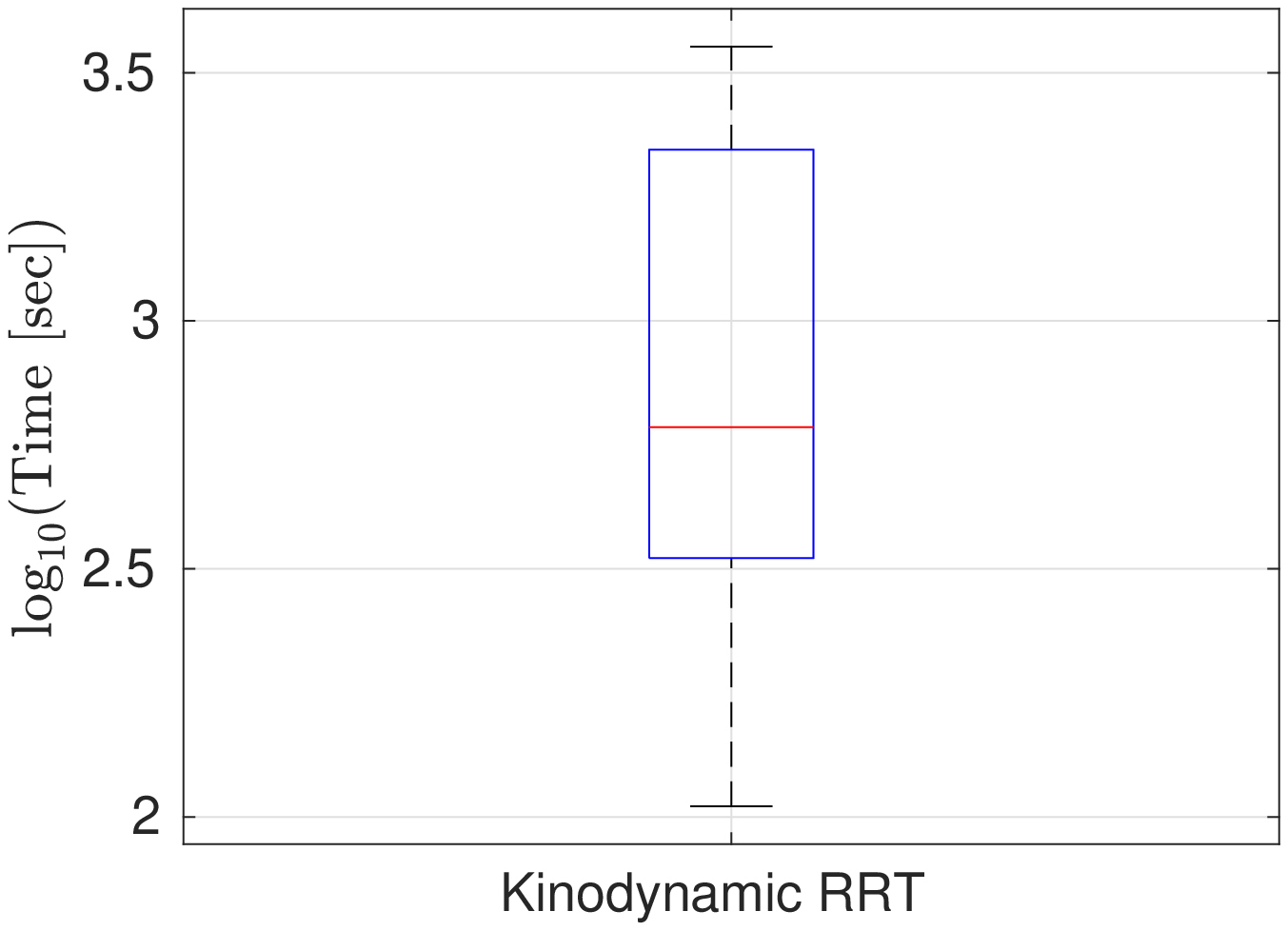}}
	\subcaptionbox{}
	{\includegraphics[width = 0.24\textwidth]{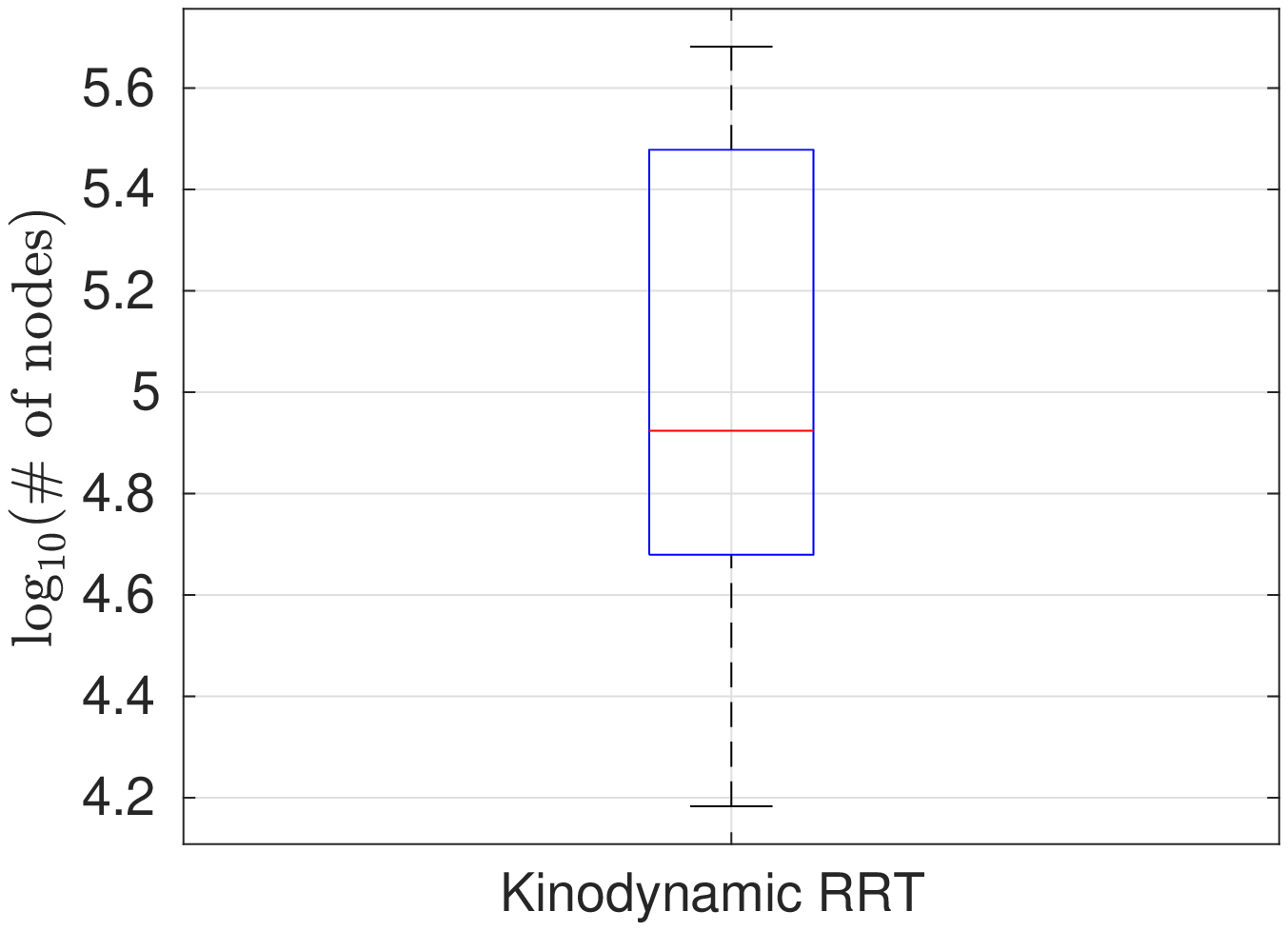}}
	\caption{Box plots showing the execution time (a) and number of nodes (b) created for the kinodynamic RRT in logarithmic scale (for the first two joints). } \label{fig:kinod_time_nodes}
\end{figure}

Next, we illustrate the motion of the robot through the four target points via the control design of Section \ref{sec:ppc control}. 
For each sub-path $T_i \to T_{i+1}$, with $T_0=q_0$ we fit desired trajectories {$q^{\mathfrak{r},i}_{\textup{d}_1}$, $q^{\mathfrak{t},i}_{\textup{d}_j}$, $j\in\{1,\dots,5\}$, with time duration $t_f^i = 11$ seconds,}
as depicted in Fig. \ref{fig:qd_ur5}, where the extra superscript stands for the path $i\in\{0,\dots,3\}$.
For safe tracking, we choose the exponentially decaying functions $\rho^{\mathfrak{t},i}_j(t) = 0.05\exp(-0.01(t-t_{s})) + 0.1 \in [0.1,0.15]$ (rad), for all $j\in\{1,\dots,5\}$, and $\rho^{\mathfrak{r},i}_1(t) = 0.005\exp(-0.01(t-t_{\mathsf{p}_i})) + 0.005 \in [0.005,0.01]$, (implying hence $\bar{\rho} = 0.01[1,15,15,15,15,15]$ as mentioned before), as well as $\rho^i_{2_j}(t) = 2\max\{\max_{j\in\{1,\dots,6\}}\{|e_{2_j}(t_{\mathsf{p}_i})|\}, 0.25\}$, and $\{t_{\mathsf{p}_0},t_{\mathsf{p}_1},t_{\mathsf{p}_2},t_{\mathsf{p}_3}\} \coloneqq\{0,11,22,33,44\}$ are the starting times of the four paths.

\begin{figure}[t]
	\centering
	\subcaptionbox{}
	{\includegraphics[width = 0.24\textwidth]{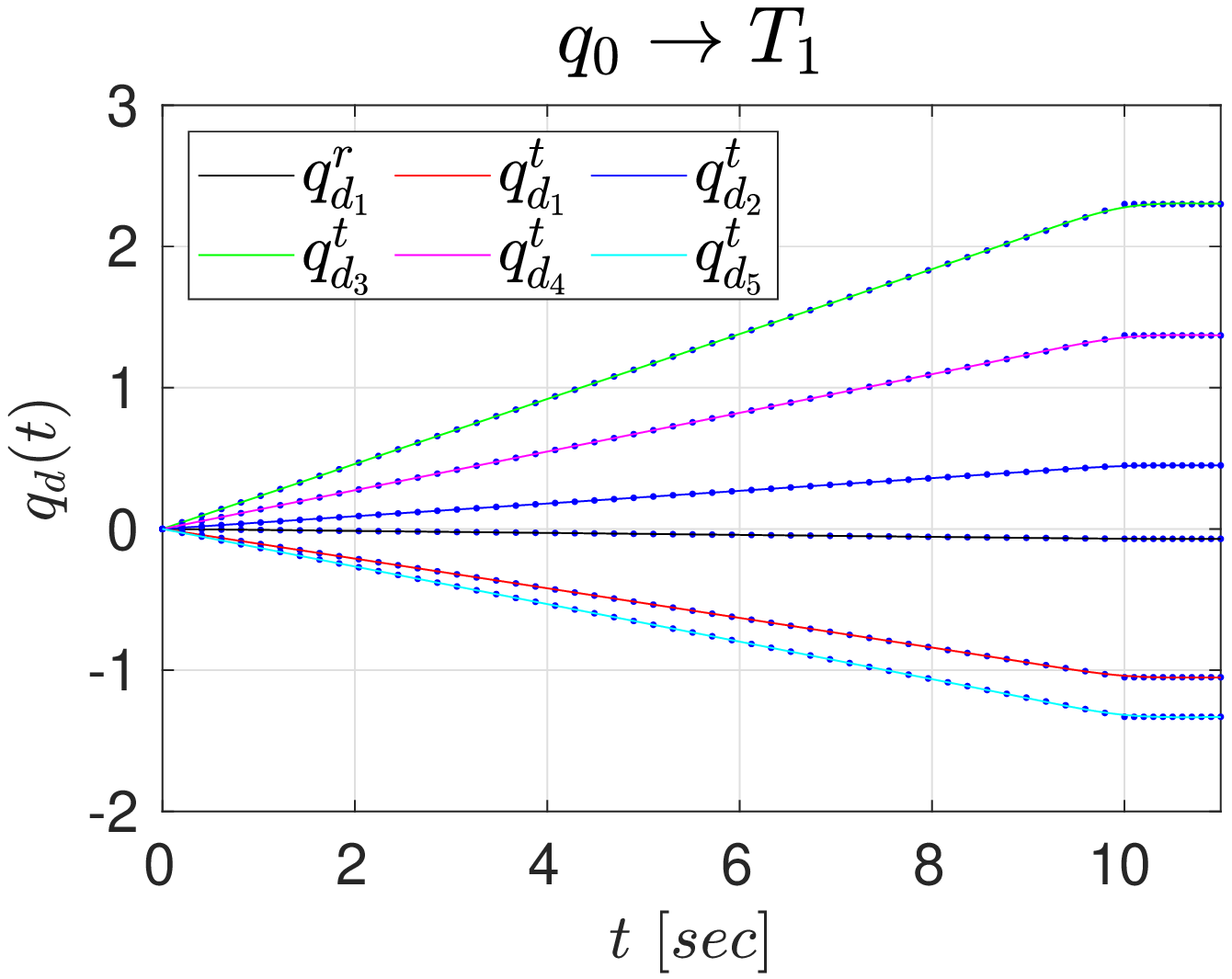}}
	\subcaptionbox{}
	{\includegraphics[width = 0.24\textwidth]{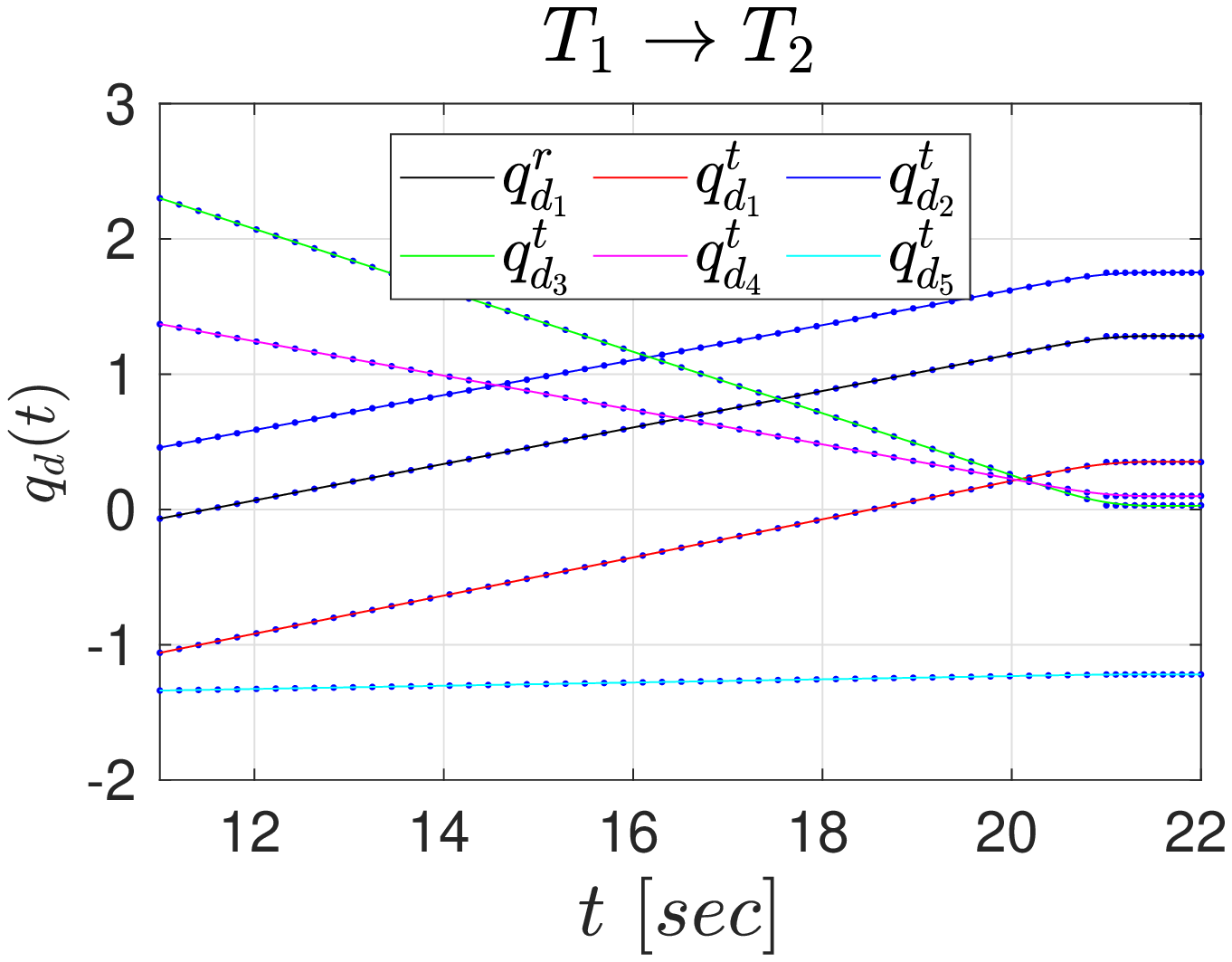}}
	\subcaptionbox{}
	{\includegraphics[width = 0.24\textwidth]{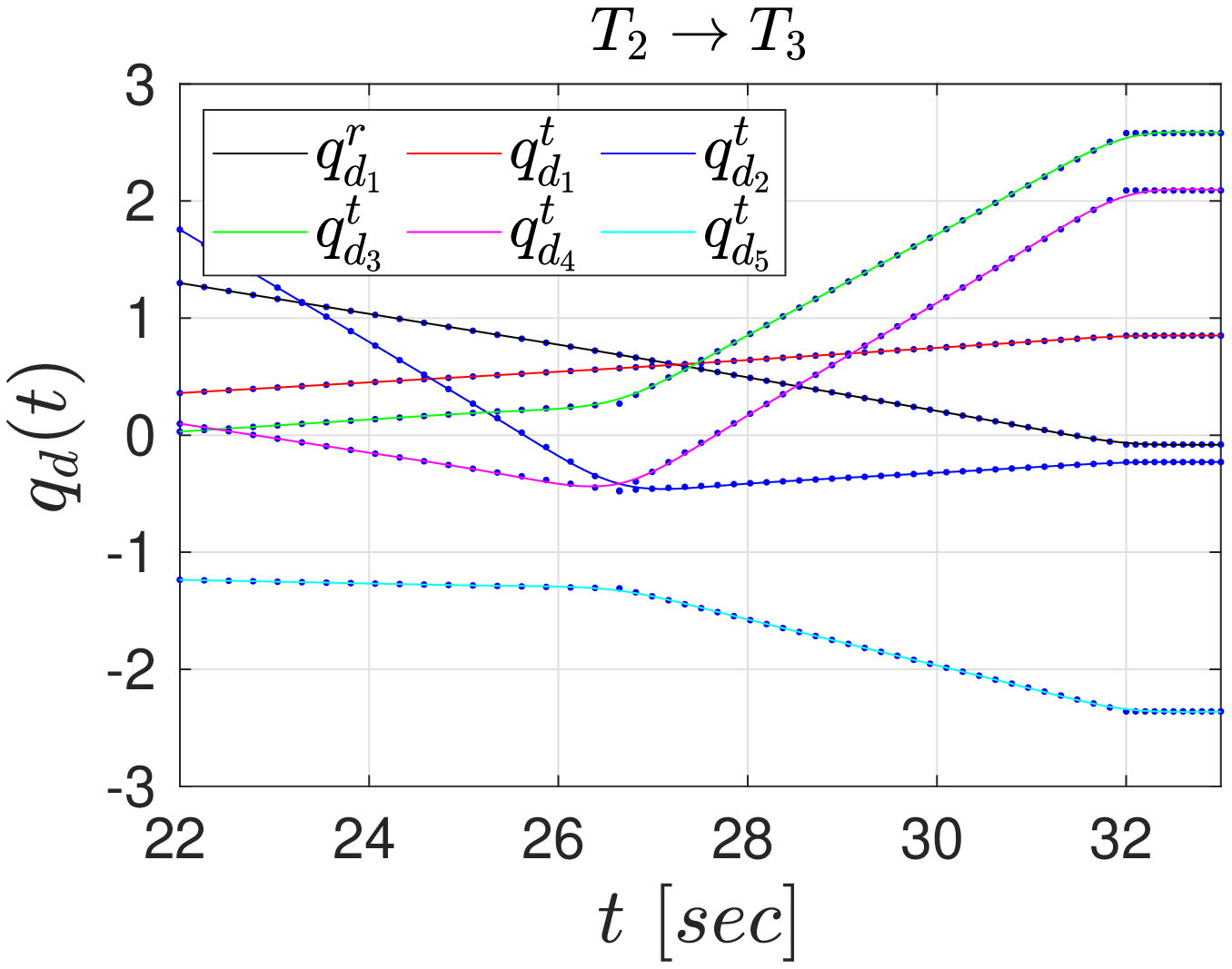}}
	\subcaptionbox{}
	{\includegraphics[width = 0.24\textwidth]{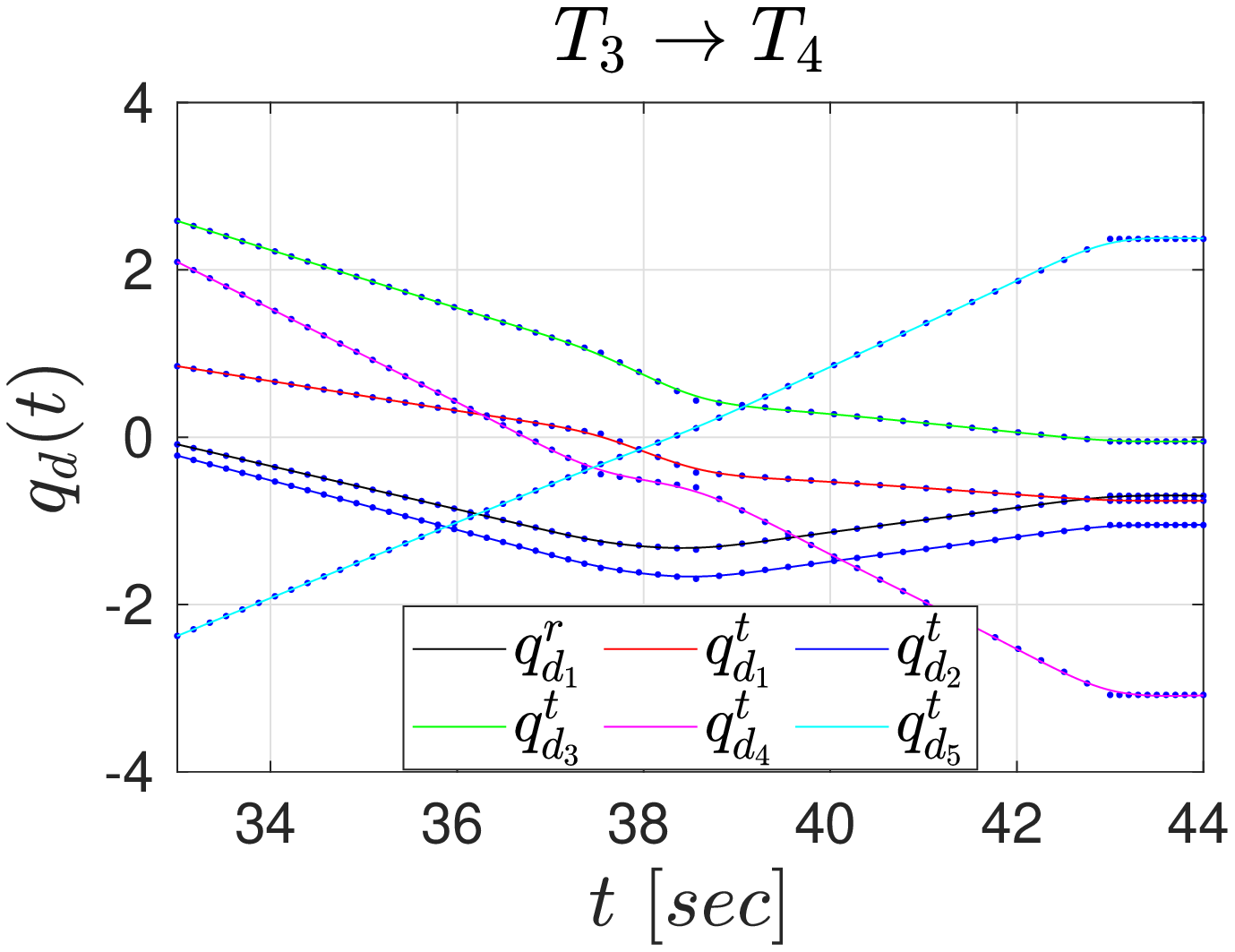}}
	\caption{{The output paths and the respective time-varying trajectories for the four paths. }} \label{fig:qd_ur5}
\end{figure}

\begin{figure}[!ht]
	\centering
	\includegraphics[width=.525\textwidth]{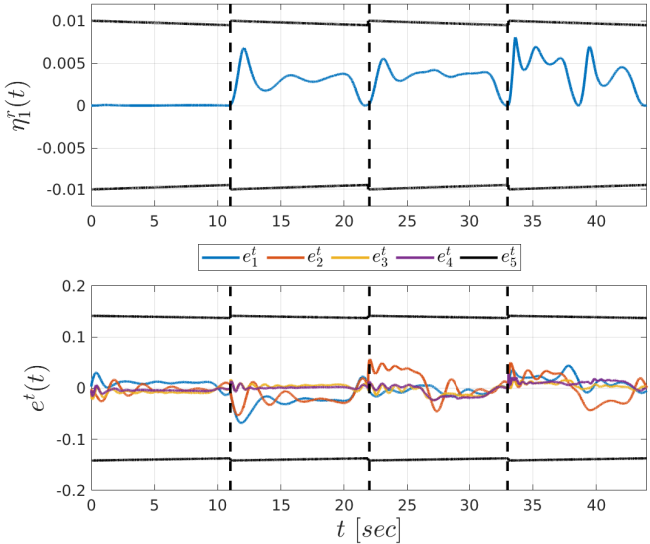}
	\caption{{Top: the evolution of the error $\eta_1^\mathfrak{r}(t)$ (in $\cos(\text{rad})$), along with the respective funnel $\rho^\mathfrak{r}_1(t)$, shown in black, for the four paths. Bottom: the evolution of the errors $e_j^\mathfrak{t}(t)$ (in rad), along with the respective funnel $\rho^\mathfrak{t}_j(t)$, shown in black, for the four paths.}}
	\label{fig:eq_ur5}
\end{figure}

The control gains were chosen as $K^\mathfrak{t} = \text{diag}(1,1,1,5,5,5)$ and $K_2 = 0.1I_6$.

The results of the experiment are depicted in Figs. \ref{fig:eq_ur5}-\ref{fig:D_and_u_ur5}. In particular, Fig. \ref{fig:eq_ur5} depicts the evolution of the errors $\eta^\mathfrak{r}_1(t)$ $e^\mathfrak{t}_j(t)$ (top and bottom in $\cos(\text{rad})$ and rad, respectively), which always satisfy the funnels defined by the respective performance functions $\rho^\mathfrak{r}_1(t)$, $\rho^\mathfrak{t}_j(t)$, $j\in\{1,\dots,5\}$. Similarly, Fig. \ref{fig:e2_ur5} depicts the evolution of the errors $e_{2_j}(t)$ (in rad/seconds), evolving inside the funnel defined by $\rho_{2_j}(t)$, $j\in\{1,\dots,6\}$. Finally, Fig. \ref{fig:D_and_u_ur5} illustrates the minimum distance (in meters) of the UR5 from the obstacles in the environment (top), which is always positive and verifies thus the safety of the framework, and the evolution of the control inputs $u(t)=[u_1,\dots,u_6]^\top$ (in Newton $\cdot$ meters) for the six joint actuators (bottom).

\begin{figure}
	\centering
	\includegraphics[width=.5\textwidth]{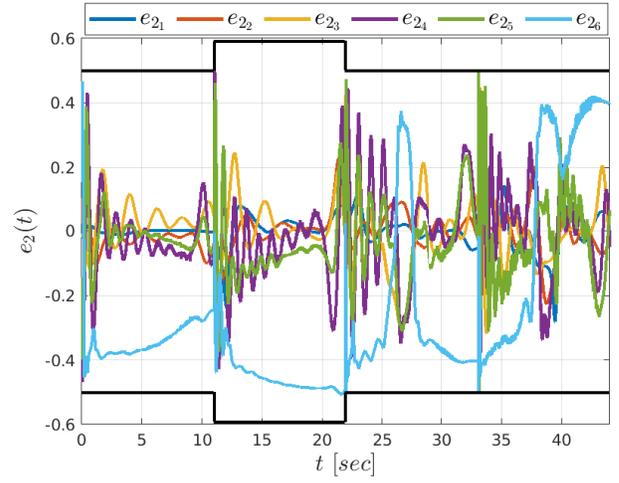}
	\caption{{The evolution of the velocity errors $e_{2_j}(t)$ (in rad/seconds), along with the respective funnels $\rho_{2_j}(t)$ (in black), $j\in\{1,\dots,6\}$, for the four paths. }}
	\label{fig:e2_ur5}
\end{figure}

\begin{figure}
	\centering
	\includegraphics[width=.5\textwidth]{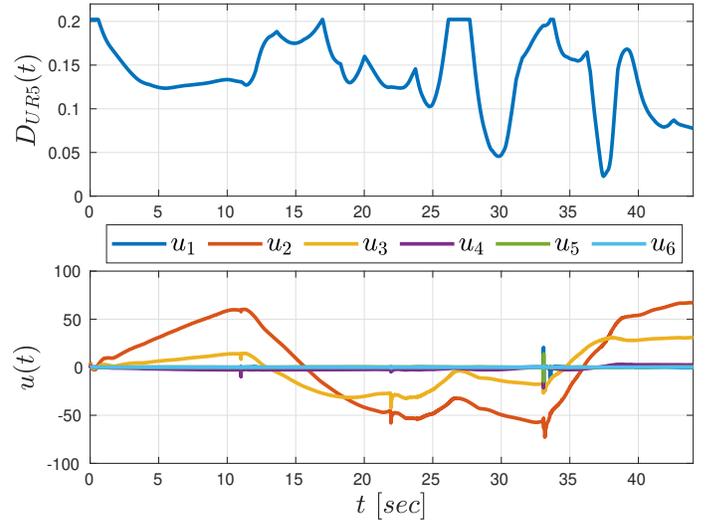}
	\caption{{Top: the distance $D_{UAV}(t)$ (in meters) of the robot from the obstacles for the four paths. Bottom: the evolution of the control inputs $u(t) = [u_1(t), \dots, u_6(t)]^\top$ (in Newton $\cdot$ meters) for the four paths.}}
	\label{fig:D_and_u_ur5}
\end{figure}

\subsection{Hardware Experiments}

\begin{figure}
	\centering
	\includegraphics[width=.4\textwidth]{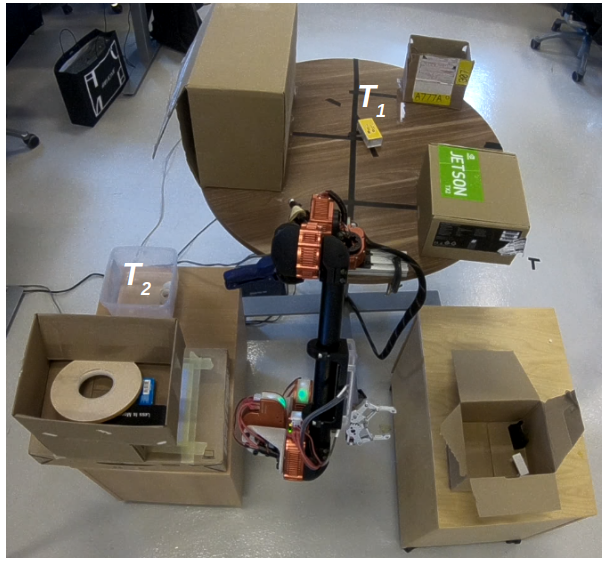}
	\caption{The initial configuration of the HEBI robot in an obstacle-cluttered environment.}
	\label{fig:exp_init}
\end{figure}

\begin{figure}[!ht]
	\centering
	\subcaptionbox{}
	{\includegraphics[width = 0.24\textwidth]{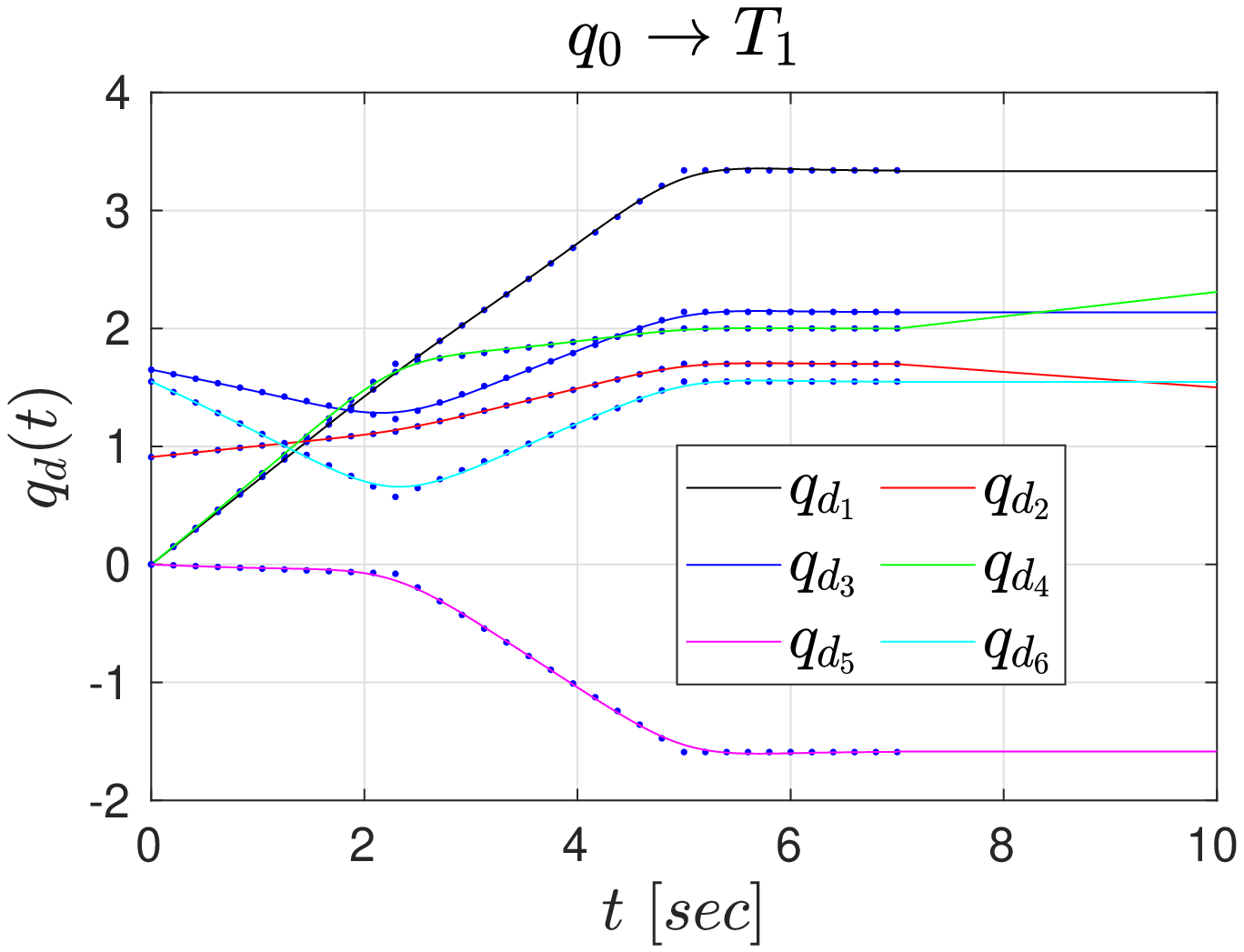}}
	\subcaptionbox{}
	{\includegraphics[width = 0.24\textwidth]{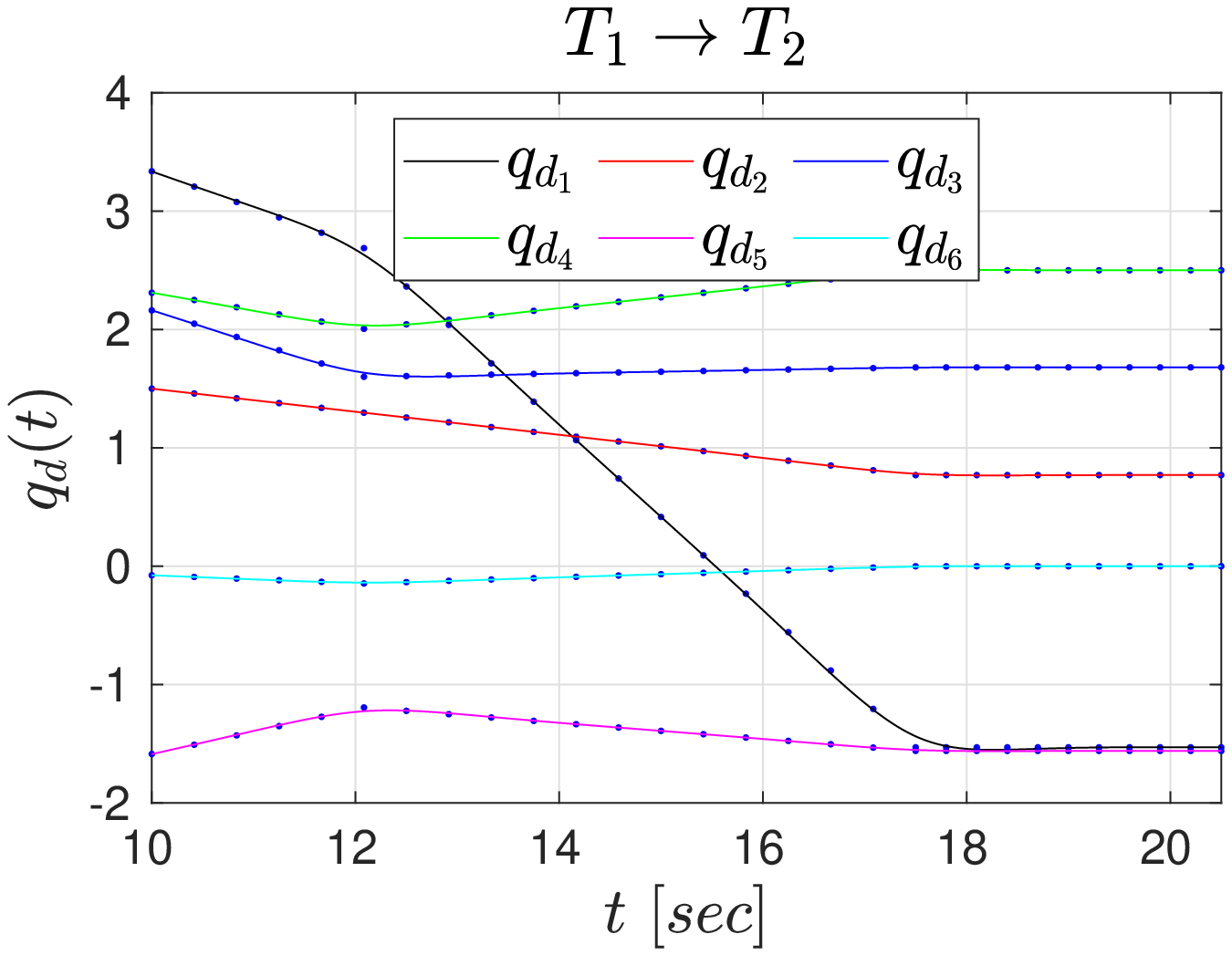}}
	\caption{{The output sequence of points and the respective trajectories for the two paths of the hardware experiment. }} \label{fig:qd_exp}
\end{figure}

This section is devoted to the experimental validation of the proposed framework using a $6$DOF manipulator from HEBI-Robotics subject to $2$nd-order dynamics (as in \eqref{eq:sim dynamics}), which consists of $6$ rotational joints (see Fig. \ref{fig:exp_init}) operating in $[-\pi,\pi]$, resulting in $q_1 = [q^\mathfrak{t}_1,\dots,q^\mathfrak{t}_6]^\top$. 

We consider that the robot has to perform a pick-and-place task, where it has to pick an object from $T_1$ and deliver it in $T_2$ (see Fig. \ref{fig:exp_init}). 
We use the KDF-RRT algorithm, with $\bar{\rho} = [0.15,0.1,0.1,0.2,0.2,0.2]$ rad, to generate two paths: from the initial configuration to a point close to $T_1$ (to avoid collision with the object), and from $T_1$ to $T_2$. Regarding the collision checking in $\bar{\mathcal{A}}_\text{free}(\bar{\rho})$, we check 10 samples around each point of the resulting path for collision.  We next fit smooth trajectories for the two paths $q^{\mathfrak{r},1}_\textup{d}(t)$, $q^{\mathfrak{r},2}_\textup{d}(t)$, with duration of $t_{f_1} = 7$ and $t_{f_2} = 11$ seconds, respectively, as shown in Fig. \ref{fig:qd_exp}.
For grasping the object, we use a simple linear interpolation to create an additional time-varying trajectory segment to $T_1$ with duration of 3 seconds (see Fig. \ref{fig:qd_exp}(a) for $t\in[7,10]$).

For the execution of the control algorithm, we choose constant funnel functions 
$\rho^{\mathfrak{t},i} = [\rho^{\mathfrak{t},i}_{1},\dots,\rho^{\mathfrak{t},i}_{6}]^\top = \bar{\rho} = [0.15,0.1,0.1,0.2,0.2,0.2]$ rad, for the two paths $i\in\{1,2\}$.
Moreover, we choose $\rho_{2_j}  = 15$ for all $j\in\{1,\dots,6\}$, and the control gains as $K^\mathfrak{t} = \text{diag}(1.25,1.5,1,2,1,1)$, $K_2 = \text{diag}(250, 200, 150, 50, 20, 10)$. 

\begin{figure}[t]
	\centering
	\includegraphics[width = 0.55\textwidth]{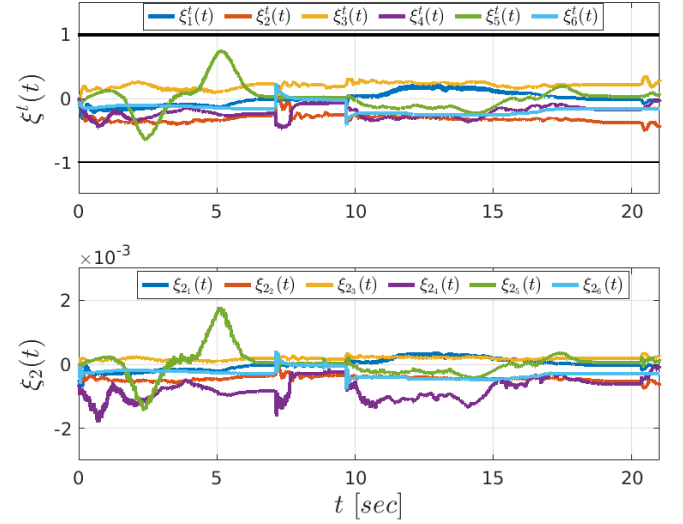}
	\caption{{The evolution of the normalized errors $\xi_j^\mathfrak{t}$ (top) and $\xi_{2_j}(t)$, for $j\in\{1,\dots,6\}$, of the hardware experiment.} } \label{fig:ksi_exp}
\end{figure}

\begin{figure}[!ht]
	\centering
	\includegraphics[width = 0.4\textwidth]{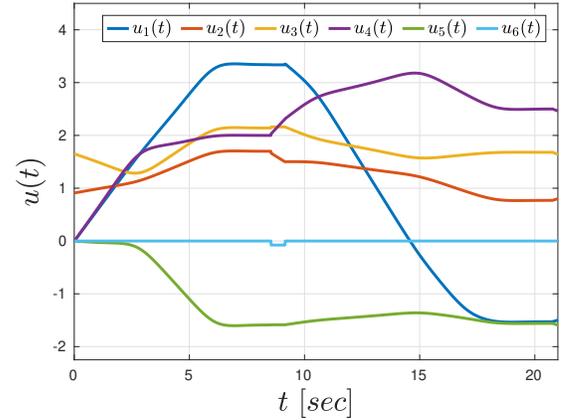}
	\caption{{The evolution of the control inputs $u(t) = [u_1(t),\dots,u_6(t)]^\top$ of the hardware experiment.} } \label{fig:u_exp}
\end{figure}

The results of the experiment are depicted in Figs \ref{fig:ksi_exp} and \ref{fig:u_exp}. In particular, Fig. \ref{fig:qd_exp} depicts the normalized signals $\xi^\mathfrak{t}(t) = [\xi^\mathfrak{t}_1,\dots,\xi^\mathfrak{t}_6]^\top$ and $\xi_2(t) = [\xi_{2_1},\dots,\xi_{2_6}]^\top$ (top and bottom, respectively) for $t \in [0,21]$ seconds. It can be observed that for the entire motion, it holds that $\xi^\mathfrak{t}_j \in (-1,1)$, $\xi_{2_j}(t) \in (-1,1)$, for all $j\in\{1,\dots,6\}$, which implies that $-\rho^\mathfrak{t}_j < e^\mathfrak{t}_j(t) = q_1(t) - q_\text{d}(t) < \rho^\mathfrak{t}_j$,  $-\rho_{2_j} < e_2(t) = q_{2_j}(t) - \alpha_{1_j}(t) < \rho^\mathfrak{t}_j$, for all $j\in\{1,\dots,6\}$ and $t \in [0,21]$ seconds, with $\alpha_1$ as in \eqref{eq:alpha}. Therefore, we conclude that the robot tracks the path output by the KDF-RRT algorithm within the prescribed funnel, avoiding thus collisions. Snapshots of the path execution are given in Fig. \ref{fig:snapshots_exp} at two time instances, namely $t = 10$, and $t=19$ seconds.

\begin{figure*}[!ht]
	\centering
	\subcaptionbox{}
	{\includegraphics[width = 0.4\textwidth]{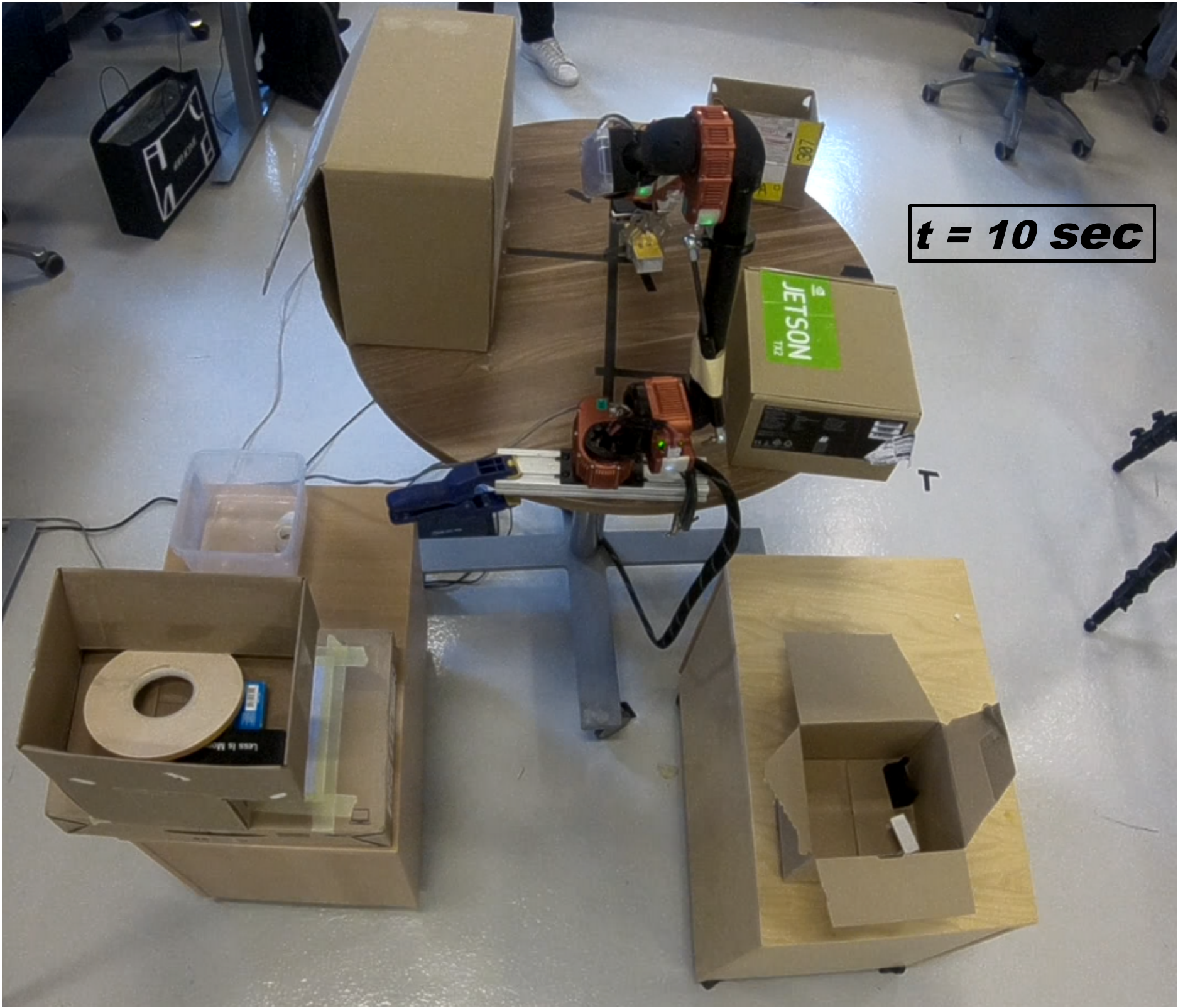}}
	\subcaptionbox{}
	{\includegraphics[width = 0.4\textwidth]{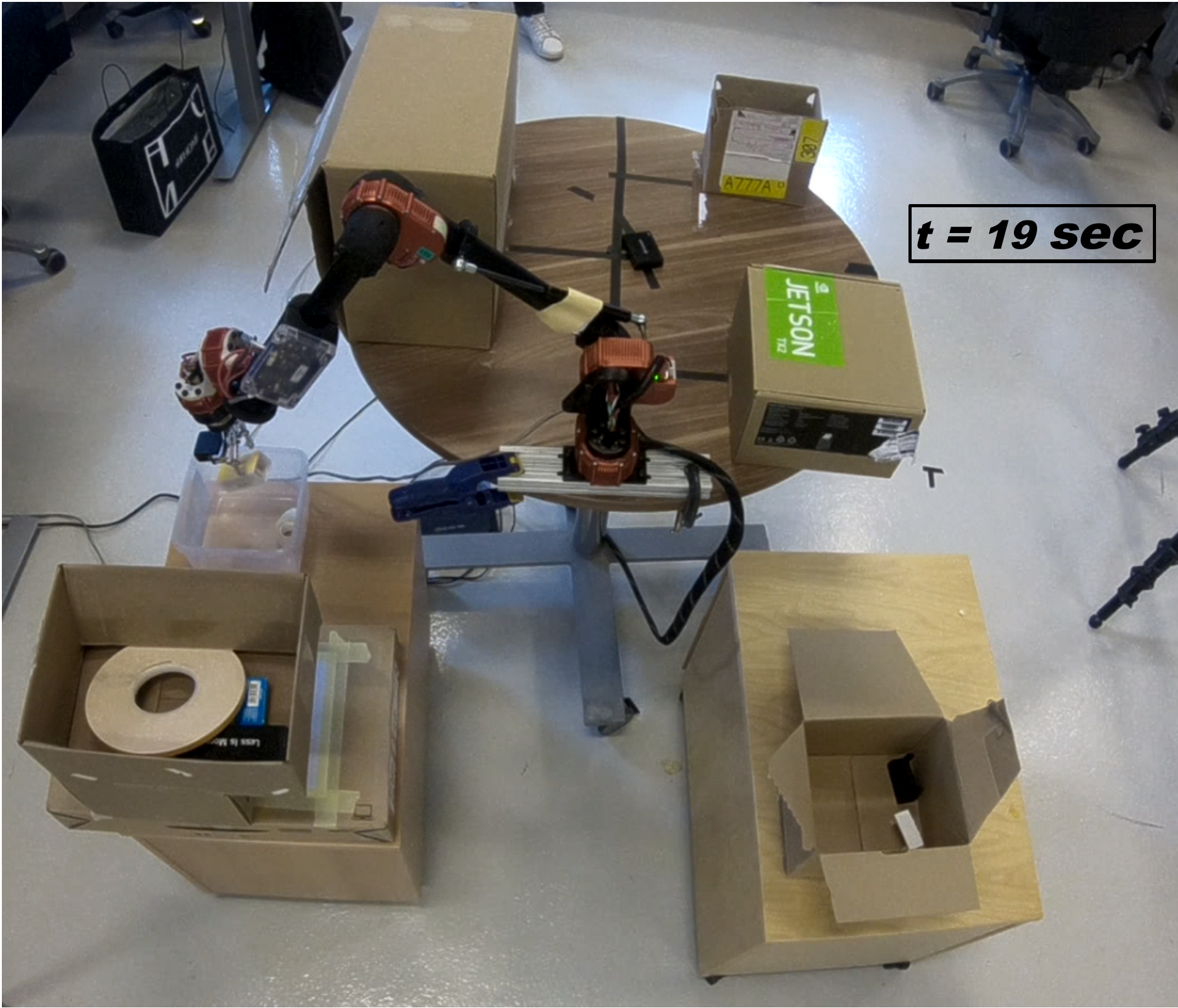}}
	\caption{{Snapshots of the hardware experiment at $t=10$ (a), and $t=19$ (b) seconds. }} \label{fig:snapshots_exp}
\end{figure*}

We further test the robustness of the proposed control scheme against \textit{adversarial} disturbances. In particular, we disturb the manipulator using a rod three times during the execution of the aforementioned trajectory (see Fig. \ref{fig:hebi dist}). In order to prevent the control scheme from having invalid values (see the domain of definition of \eqref{eq:epsilon+r} and \eqref{eq:xi_i,epsilon_i,r_i}), we set $\xi^\mathfrak{t}_j = \max\left\{ \min\left\{1, \frac{e^\mathfrak{t}_j}{\rho^\mathfrak{t}_j} \right\}, -\frac{e^\mathfrak{t}_j}{\rho^\mathfrak{t}t_j} \right\}$, $\xi_{2_j} = \max\left\{ \min\left\{1, \frac{e_{2_j}}{\rho_{2_j}} \right\}, -\frac{e_{2_j}}{\rho_{2_j}} \right\}$ for all $j\in\{1,\dots,6\}$. The evolution of the signals $\xi^\mathfrak{t}(t)$, $\xi_2(t)$ are depicted in Fig. \ref{fig:ksi_exp_dist} for $21$ seconds, with vertical black dashed lines depicting the instants of the disturbance, which affects mostly the first joint of the system; {note from Fig. \ref{fig:ksi_exp_dist} that $\xi^\mathfrak{t}_1$ and $\xi_{1_1}$ are excessively increased with respect to their nominal values shown in Fig. \ref{fig:ksi_exp}, implying a large increase in the respective errors $e^\mathfrak{t}_1$ and $e_{2_1}$.}
Nevertheless, one can conclude that, despite the presence of adversarial disturbances, the system 
manages to successfully recover and complete the derived path.

In order to further evaluate the proposed control algorithm, we compared our results with a standard well-tuned PID controller as well as the parametric adaptive control scheme (PAC) of our previous work \cite{verginis21sampling}. The signals $\xi^\mathfrak{t}$ for these two control schemes are depicted in Fig. \ref{fig:comaprison_exp}. Note that the controllers fail to retain the normalized errors $\xi^\mathfrak{t}_j(t)$ in the interval $(-1,1)$. Although in the particular instance this did not lead to collisions, it jeopardizes the system motion, since it does not comply with the bounds set in the KDF-RRT algorithm.

\begin{figure}[t]
	\centering
	\includegraphics[width = 0.45\textwidth]{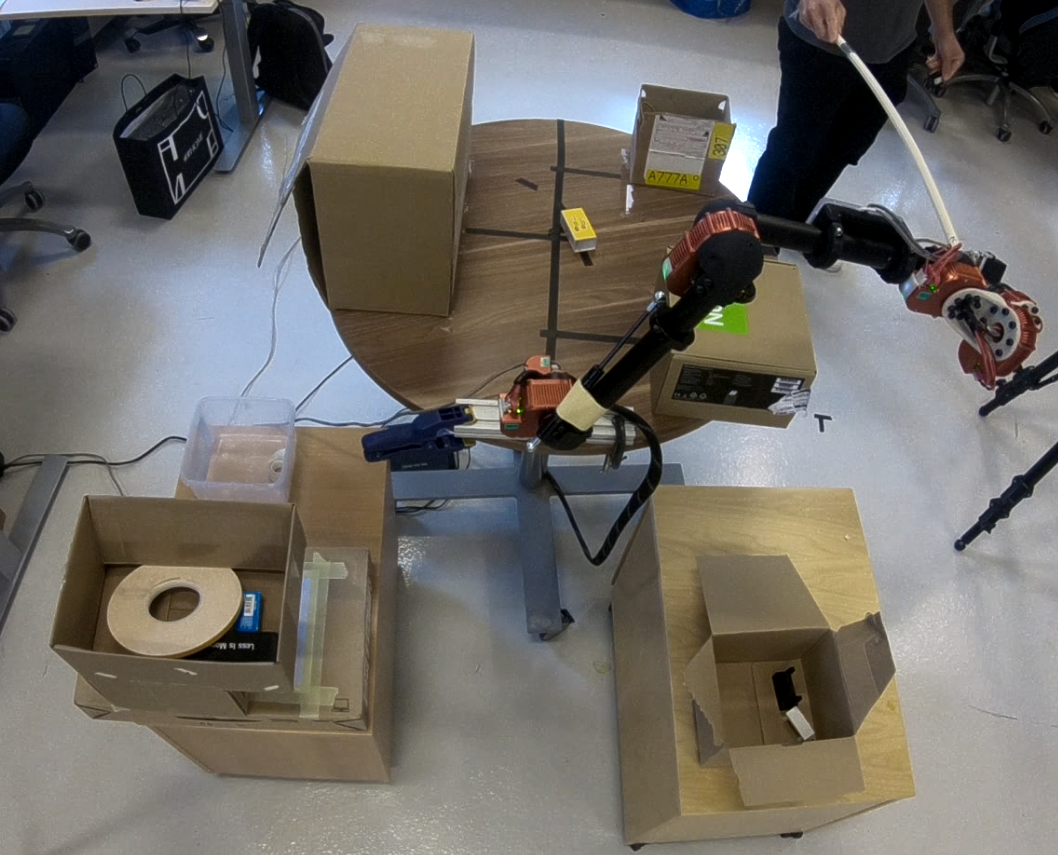}
	\caption{{Application of adversarial disturbances in the hardware experiment.} } \label{fig:hebi dist}
\end{figure}

\begin{figure}[t]
	\centering
	\includegraphics[width = 0.55\textwidth]{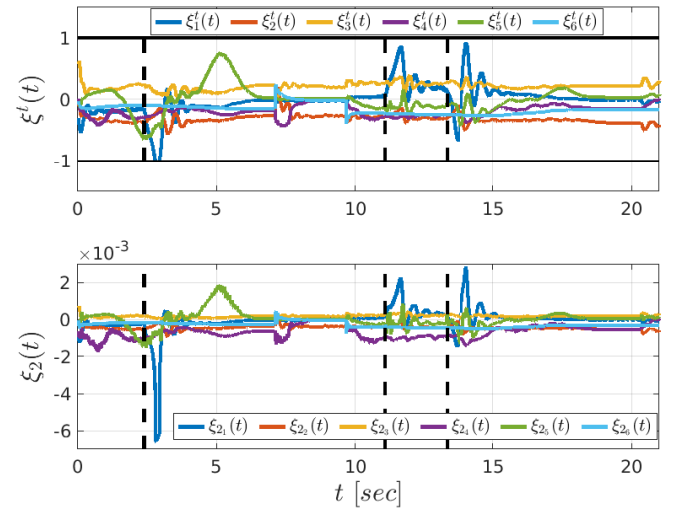}
	\caption{{The evolution of the normalized errors $\xi_j^\mathfrak{t}$ (top) and $\xi_{2_j}(t)$, for $j\in\{1,\dots,6\}$, of the hardware experiment, in the case of adversarial disturbances. The time instants of the disturbance application are shown with vertical dashed lines.} } \label{fig:ksi_exp_dist}
\end{figure}

\begin{figure}[t]
	\centering
	\includegraphics[width = 0.475\textwidth]{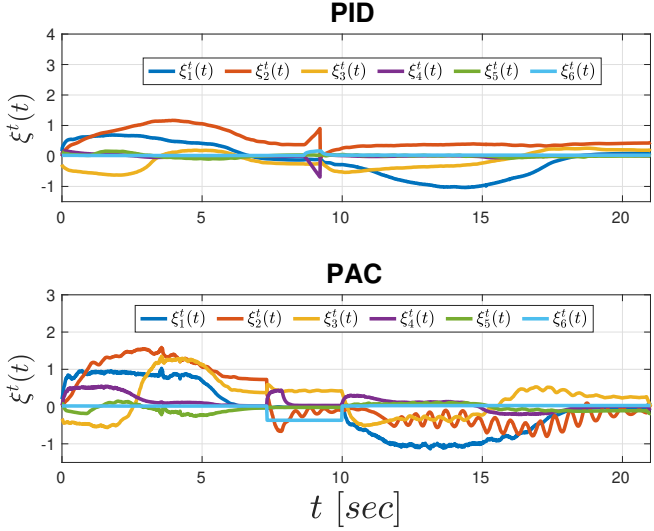}
	\caption{{The evolution of the normalized errors $\xi_j^\mathfrak{t}$ (top) and $\xi_{2_j}(t)$, for $j\in\{1,\dots,6\}$, of the hardware experiment, when using a PID controller (top), and the adaptive control algorithm from \cite{verginis21sampling}}. } \label{fig:comaprison_exp}
\end{figure}

\section{Conclusion}
{We develop KDF, a new framework for solving the kinodynamic motion-planning problem for complex systems with uncertain dynamics. The framework comprises of three modules: first, a family of geometric sampling-based motion planners that produce a path in an extended free space; secondly, a smoothening and time endowment procedure that converts the path into a smooth time-varying trajectory; and finally, a funnel-based feedback control scheme that guarantees safe tracking of the trajectory. Neither of the modules uses any information on the system dynamics. Experimental results demonstrate the effectiveness of the proposed method. Future directions will focus on extending KDF to systems with nonholonomic and underactuated dynamics and taking into account explicit input constraints.
}


%

\appendices
\section{Proof Theorem \ref{th:control theorem}} \label{app:ppc proof}
Consider the non-empty open set 
\begin{align}
	\Omega \coloneqq \big\{& (\bar{q},t) \in \mathbb{T}\times\mathbb{R}^{n(k-1)}\times[t_0,t_0+t_f) : \xi^\mathfrak{t}_j \in (-1,1), \notag \\ 
	& \xi^\mathfrak{r}_\ell \in [0,1), \xi_i \in (-1,1)^n, \forall j\in\{1,\dots,n_{tr}\}, \notag \\
	& \ell\in\{1,\dots,n_r \}, i\in\{2,\dots,k\} \big\},
\end{align}
where we implicitly write the $\xi$ variables as function of $\bar{q}$ and time $t$. The constraints \eqref{eq:funnel constraints} imply that $(\bar{q}(t_0),t_0) \in \Omega$. By substituting the control law \eqref{eq:control law} in the dynamics \eqref{eq:dynamics}, we obtain a closed-loop system $\dot{\bar{q}} = f_\textup{cl}(\bar{q},t)$ and one can verify, based on Assumption \ref{ass:dynamics},  that $f_\textup{cl}$ is continuously differentiable in $\bar{q}$ and continuous $t$ on $\Omega$. Therefore, the conditions of \cite[Theorem 2.1.3]{bressan2007introduction} are satisfied and we conclude the existence of a \textit{maximal} solution $\bar{q}(t)$ for $t\in I_t \coloneqq [t_0,t_0+t_{\max})$, with $t_{\max} > 0$, satisfying $(\bar{q}(t),t) \in \Omega$ for all $t\in I_t$. 

Hence, for $t\in I_t$, the transformed errors $\varepsilon^\mathfrak{t}_j$, $\varepsilon^\mathfrak{r}_\ell$, $\varepsilon_i$ are well defined. We proceed inductively with the following steps.

\textit{Step} 1. Consider the positive definite and radially unbounded candidate Lyapunov function 
\begin{equation}
	V_1 \coloneqq \frac{1}{2}(\varepsilon^\mathfrak{t})^\top K^\mathfrak{t} \varepsilon^\mathfrak{t} + \sum_{\ell\in\mathcal{L}_r} k^\mathfrak{r}_\ell \varepsilon^\mathfrak{r}_\ell,
\end{equation}
where $\mathcal{L}_r\coloneqq \{1,\dots,n_r\}$, and $K^\mathfrak{t}$, $k^\mathfrak{r}_\ell$ are gain-related terms introduced in \eqref{eq:alpha}. Let also the first equation of \eqref{eq:dynamics} be partitioned as 
\begin{equation*}
	\begin{bmatrix}
	\dot{q}^\mathfrak{t} \\
	\dot{q}^\mathfrak{r}
	\end{bmatrix} 
	= 
	\begin{bmatrix}
	f^\mathfrak{t}(q_1,t) \\ 
	f^\mathfrak{r}(q_1,t)
	\end{bmatrix} 
    + \begin{bmatrix}
    g_{11}(q_1,t) & g_{12}(q_1,t) \\ g_{13}(q_1,t) & g_{14}(q_1,t)
    \end{bmatrix} q_2.
\end{equation*} 
  Differentiating $V_1$ yields
\begin{align*}
	\dot{V}_1 =& (\varepsilon^\mathfrak{t})^\top K^\mathfrak{t} \widetilde{r}^\mathfrak{t} (\widetilde{\rho}^\mathfrak{t})^{-1}\big(f^\mathfrak{t}+ [g_{11} \ g_{12} ]q_2 - \dot{q}^\mathfrak{t}_\text{d} - \dot{\widetilde{\rho}}^\mathfrak{t} \xi^\mathfrak{t} \big) + \\
	& 
	(r^\mathfrak{r})^\top K^\mathfrak{r} (\widetilde{\rho}^\mathfrak{r})^{-1} \bigg[ \widetilde{s}^\mathfrak{r}\big( f^\mathfrak{r} + [g_{13} \ g_{14}]q_2 - \dot{q}^\mathfrak{r}_\textup{d} \big)  -\dot{\widetilde{\rho}}^\mathfrak{r} \xi^\mathfrak{r} \bigg],
\end{align*}
where we further define 
$\xi^\mathfrak{t}\coloneqq [\xi^\mathfrak{t}_1,\dots,\xi^\mathfrak{t}_{n_{tr}}]^\top$, $\xi^\mathfrak{t}\coloneqq [\xi^\mathfrak{r}_1,\dots,\xi^\mathfrak{r}_{n_{r}}]^\top$. By using $q_2 = \alpha_1 + e_2$ from \eqref{eq:e_i} and substituting \eqref{eq:alpha}, we obtain after straightforward manipulations 
\begin{align*}
\dot{V}_1 =& - \sigma^\top K \widetilde{R}\widetilde{\rho}^{-1} \widetilde{S}  g_1 \widetilde{S} \widetilde{\rho}^{-1}\widetilde{R}  K \sigma + \\ 
& \sigma^\top K \widetilde{R} \widetilde{\rho}^{-1}  \bigg[\widetilde{S}\big( f_1 +  g_1e_2 - \dot{q}_\textup{d} \big) - \dot{\widetilde{\rho}} \xi \bigg] \\
=:& T_n + T_b,
\end{align*}
where $\sigma \coloneqq [(\varepsilon^\mathfrak{t})^\top, (r^\mathfrak{r})^\top]^\top$, $K \coloneqq \text{blkdiag}\{ K^\mathfrak{t},K^\mathfrak{r} \}$, $\widetilde{R} \coloneqq \text{blkdiag}\{ \widetilde{r}^\mathfrak{t}, I \}$, $\widetilde{S} \coloneqq \text{blkdiag}\{I, \widetilde{s}^\mathfrak{r}\}$, $\widetilde{\rho} \coloneqq \text{blkdiag}\{ \widetilde{\rho}^\mathfrak{t}, \widetilde{\rho}^\mathfrak{r} \}$, and $\xi \coloneqq [(\xi^\mathfrak{t})^\top, (\xi^\mathfrak{r})^\top]^\top$.

Since $T_n$ is a quadratic form, it holds that $T_n = - \frac{1}{2} \sigma^\top K \widetilde{R} \widetilde{\rho}^{-1}\widetilde{S} (g_1 + g_1^\top) \widetilde{S} \widetilde{\rho}^{-1}\widetilde{R} K \sigma$, and in view of Assumption \ref{ass:g pd}, $T_n \leq -\underline{g}\|K \widetilde{R}\widetilde{\rho}^{-1} \widetilde{S} \sigma\|^2$, where $\underline{g} \coloneqq \frac{1}{2}\lambda_{\min}(g_1 + g_1^\top) > 0$. Therefore, $T_n$ becomes
\begin{align*}
	T_n \leq -\underline{g}  \|K^\mathfrak{t} \widetilde{r}^\mathfrak{t} (\widetilde{\rho}^\mathfrak{t})^{-1} \varepsilon^\mathfrak{t}\|^2 - \underline{g} \sum_{\ell \in\mathcal{L}_r} k^\mathfrak{r}_\ell \left(\frac{r^\mathfrak{r}_\ell}{\rho^\mathfrak{r}_\ell} \sin(e^\mathfrak{r}_\ell)\right)^2 
\end{align*}  

From \eqref{eq:xi}, we obtain $1 - \cos(e^\mathfrak{r}_\ell) = \rho^\mathfrak{r}_\ell \xi^\mathfrak{r}_\ell$ implying $\sin^2(e^\mathfrak{r}_\ell) = \rho^\mathfrak{r}_\ell \xi^\mathfrak{r}_\ell(1+\cos(e^\mathfrak{r}_\ell))$, for all $\ell\in\mathcal{L}_r$. By further defining $\underline{k} \coloneqq \underline{g}\lambda_{\min}(K\widetilde{\rho})$, we obtain 
\begin{align*}
	T_n \leq -\underline{k}\|\widetilde{r}^\mathfrak{t}\varepsilon^\mathfrak{t}\|^2 - \underline{k}\sum_{\ell \in\mathcal{L}_r} (r^\mathfrak{r}_\ell)^2\xi^\mathfrak{r}_\ell(1+\cos(e^\mathfrak{r}_\ell)) .
\end{align*}

Note that, for $t\in I_t$, it holds that $\xi^\mathfrak{t}_\ell \in (-1,1)$ and hence $\eta^\mathfrak{r}_\ell(t) = 1 - \cos(e^\mathfrak{r}_\ell(t)) < \rho^\mathfrak{r}_\ell(t) \leq \bar{\rho}^\mathfrak{r}_\ell < 2$, for all $ \ell\in\mathcal{L}_r$ (see \eqref{eq:funnel constraints}). Therefore, it holds that $1 + \cos(e^\mathfrak{r}_\ell) \geq 2 - \bar{\rho}^\mathfrak{r}_\ell =: \underline{e}^\mathfrak{r}_\ell > 0$, for all $\ell\in\mathcal{L}_r$. By further defining $\underline{e}^\mathfrak{r} \coloneqq \min_{\ell\in\mathcal{L}_r}\{\underline{e}^\mathfrak{r}_\ell\}$, we obtain 
\begin{align*}
	T_n \leq& -\underline{k}\|\widetilde{r}^\mathfrak{t}\varepsilon^\mathfrak{t}\|^2 - \underline{k}\underline{e}^\mathfrak{r}\sum_{\ell \in\mathcal{L}_r} (r^\mathfrak{r}_\ell)^2\xi^\mathfrak{r}_\ell \\
	\leq & -\underline{m} \| \kappa \|^2,
\end{align*}
where $\kappa \coloneqq [ (\widetilde{r}^\mathfrak{t}\varepsilon^\mathfrak{t} )^\top, r^\mathfrak{r}_1\sqrt{\xi^\mathfrak{r}_1},\dots, r^\mathfrak{r}_{n_r}\sqrt{\xi^\mathfrak{r}_{n_r}} ]^\top$, and $\underline{m} \coloneqq \min\{\underline{k},\underline{k}\underline{e}^\mathfrak{r}\}$.


Moreover, the fact that $q_\textup{d}(t)$ is bounded and $(\bar{q}(t),t)\in \Omega$ for $t\in I_t$ implies that $q^\mathfrak{t}_1(t)$ is bounded as $\|q^\mathfrak{t}(t)\| \leq \sup_{t\geq t_0} \|q^\mathfrak{t}_\textup{d}(t) \|+ \sqrt{n_{tr}}\max_{j\in\{1,\dots,n_{tr}\}}\{\bar{\rho}^\mathfrak{t}_j\}$ and $\|e_2(t)\| \leq \sqrt{n} \max_{m\in\{1,\dots,n\}}\{\bar{\rho}_{2_m}\}$, for $t\in I_t$. Note that the aforementioned bounds do not depend on $t_{\max}$. Hence, we conclude by Assumption \ref{ass:dynamics} that $f_1(q_1(t),t)$, $g_1(q_1(t),t)$ are bounded in $I_t$, by bounds independent of $t_{\max}$. Next, owing to the boundedness of $q^\mathfrak{r}_1(t)$ and $\dot{q}_\text{d}$, $\widetilde{\rho}^{-1}$ (by definition and assumption, respectively), as well as by using $\xi^\mathfrak{r}_\ell < \sqrt{\xi^\mathfrak{r}_\ell} < 1$, for all $\ell\in\mathcal{L}_r$, we conclude that there exists a positive finite constant $\bar{B}_1$, independent of $I_t$, satisfying
$T_b \leq \bar{B}_1 \| \kappa \|$, for all $t\in I_t$. Therefore, $\dot{V}_1$ becomes
\begin{align*}
	\dot{V}_1 \leq -\underline{m}\|\kappa\|^2 + \bar{B}_1 \|\kappa\|
\end{align*}
for all $t\in I_t$. Therefore, $\dot{V}_1$ is negative when $\|\kappa\| > \frac{\bar{B}_1}{\underline{m}}$, i.e., when 
\begin{align} \label{eq:Vdot1 condition}
	\sqrt{\sum_{j\in\mathcal{L}_t} (r^\mathfrak{t}_j \varepsilon^\mathfrak{t}_j)^2 + \sum_{\ell\in\mathcal{L}_r} (r^\mathfrak{r}_\ell)^2 \xi^\mathfrak{r}_\ell} > \frac{\bar{B}_1}{\underline{m}},
\end{align}
with $\mathcal{L}\coloneqq \{1,\dots,n_{tr}\}$. From the definition of $r^\mathfrak{t}_j$ in \eqref{eq:epsilon+r}, it holds that $r^\mathfrak{t}_j(t) \geq 2$, for all $j\in\mathcal{L}_t$ and $\forall t\in I_t$. Moreover, one can conclude by inspection that the function $\mathsf{f}(\mathsf{x}) = \frac{1}{(1 - \mathsf{x})^2}\mathsf{x} - \ln\left(\frac{1}{1-\mathsf{x}}\right)$ is positive for positive $\mathsf{x}$, Therefore, since by definition $\xi^\mathfrak{r}_\ell \geq 0$ it holds that $(r^\mathfrak{r}_\ell)^2\xi^\mathfrak{r}_\ell \geq  \varepsilon^\mathfrak{r}_\ell$, for all $\ell \in \mathcal{L}_r$. Therefore, it holds that $\sqrt{\sum_{j\in\mathcal{L}_t} (r^\mathfrak{t}_j \varepsilon^\mathfrak{t}_j)^2 + \sum_{\ell\in\mathcal{L}_r} (r^\mathfrak{r}_\ell)^2 \xi^\mathfrak{r}_\ell} \geq \sqrt{\sum_{j\in\mathcal{L}_t} (\varepsilon^\mathfrak{t}_j)^2 + \sum_{\ell\in\mathcal{L}_r} \varepsilon^\mathfrak{r}_\ell}$ and a sufficient condition for $\dot{V}_1$ to be negative is $\sqrt{\sum_{j\in\mathcal{L}_t} (\varepsilon^\mathfrak{t}_j)^2 + \sum_{\ell\in\mathcal{L}_r} \varepsilon^\mathfrak{r}_\ell} > \frac{\bar{B}_1}{\underline{m}_1}$, from which we conclude, by applying Theorem 4.18 of \cite{khalil_nonlinear_systems}, that there exists a positive constant $\bar{\varepsilon}$ such that $\varepsilon^\mathfrak{t}_j(t)$ and $\varepsilon^\mathfrak{r}_\ell(t)$ are bounded as 
\begin{align*}
	|\varepsilon^\mathfrak{t}_j(t)| \leq  \bar{\varepsilon} \\
	\varepsilon^\mathfrak{r}_\ell(t) \leq  \bar{\varepsilon},
\end{align*}
for all $t \in I_t$, $j\in\mathcal{L}_t$, $\ell\in\mathcal{L}_r$, which implies via \eqref{eq:epsilon+r} that 
\begin{align} \label{eq:xi bounded}
	|\xi^\mathfrak{t}_j(t)| &\leq \bar{\xi}^\mathfrak{t}\coloneqq\frac{\exp(\bar{\varepsilon}) - 1}{\exp(\bar{\varepsilon} + 1} < 1 \\
	\xi^\mathfrak{r}_\ell(t) &\leq \bar{\xi}^\mathfrak{r} \coloneqq \frac{\exp(\bar{\varepsilon})-1}{\exp(\bar{\varepsilon})} < 1,
\end{align}
for all $t \in I_t$, $j\in\mathcal{L}_t$, $\ell\in\mathcal{L}_r$. Hence, $\alpha_1(t)$, as designed in \eqref{eq:alpha}, is bounded, for all $t \in I_t$, from which we also conclude the boundedness of $q_2 = e_2 + \alpha_1$, since $\|e_2(t)\| = \|\rho_i(t)\xi_2(t)\| \leq \sqrt{n}\max_{m\in\{1,\dots,n\}}\{\bar{\rho}_{2_m}\}$ for all $t \in I_t$. Moreover, by invoking \eqref{eq:xi bounded}, it is straightforward to also conclude the boundedness of $\dot{\alpha}_1$, for all $t \in I_t$.

\textit{Step} $i\in \{2,\dots, k\}$: We apply recursively the aforementioned line proof for the remaining step. By considering the function $V_i = \frac{1}{2}\varepsilon_i^\top K_i \varepsilon$, we obtain
\begin{align*}
	\dot{V}_i \leq & -\varepsilon_i^\top r_i \rho_i^{-1} K_i g_i K_i \rho_i^{-1} r_i \varepsilon_i  \\
	& + \|  r_i \rho_i^{-1} K_i \varepsilon_i\|\| f_i + g_i e_{i+1} - \dot{\alpha}_{i-1} - \dot{\rho}_i \xi_i  \|,
\end{align*}
for $i \in \{2,\dots,k-1\}$, and 
\begin{align*}
\dot{V}_k \leq & -\varepsilon_k^\top r_k \rho_k^{-1} K_kg_k K_k \rho_k^{-1} r_k \varepsilon_k  \\
& + \|  r_k \rho_k^{-1} K_k \varepsilon_k\|\| f_k - \dot{\alpha}_{k-1} - \dot{\rho}_k \xi_k  \|,
\end{align*}
from which we conclude the boundedness of $\varepsilon_i$ and $\xi_i$ as 
\begin{align} \label{eq:xi_i bounded}
	\|\varepsilon_i(t)\| \leq \bar{\varepsilon}_i \Rightarrow \|\xi_i(t)\| \leq \bar{\xi}_i \coloneqq \frac{\exp(\bar{\varepsilon}_i)-1}{\exp(\bar{\varepsilon}_i)+1},
\end{align}
for all $t \in I_t$ for positive finite constants $\bar{\varepsilon}_i$. As a consequence, all intermediate signals $\alpha_i$ and system states $q_{i+1}$, $i\in\{2,\dots,k-1\}$, as well as the control law \eqref{eq:control law} remain bounded for all $t\in I_t$.

What remains to be shown is $t_{\max} = \infty$. Notice that \eqref{eq:xi bounded} and \eqref{eq:xi_i bounded} imply that the system remains bounded in a compact subset of $\Omega$, i.e., $(\bar{q}(t),t) \in \bar{\Omega} \subset \Omega$, for all $t \in I_t$. Since $\bar{q}(t)$ has been proven bounded, the conditions of \cite[Theorem 2.1.4]{bressan2007introduction} hold and we conclude hence that $\tau_{\max} = \infty$.


%


\section*{Acknowledgment}

The authors would like to thank Robin Baran for his decisive help in the hardware experiments.

\ifCLASSOPTIONcaptionsoff
  \newpage
\fi

\bibliographystyle{IEEEtran}
\bibliography{bibl}
\end{document}